\title{Title}
\newtheorem{definition}{Definition}
\newtheorem{remark}{Remark}
\def\at{\mathcal{A}}
\def\mr{\mathcal{R}}
\def\mp{\mathcal{P}}
\def\mf{\mathcal{F}}
\def\ar{\leftarrow}
\def\lrar{\Longrightarrow}
\def\beq{\begin{equation}}
\def\eeq#1{\label{#1}\end{equation}}
\def\ba{\begin{array}}
\def\ea{\end{array}}
\def\prop#1#2#3{

\noindent
{\bf Proposition~\ref{#1}{#2}} {\it{#3}}}
\def\gus{GUS}
\def\i#1{\hbox{\it #1\/}}
\def\is#1{\hbox{\scriptsize\it #1\/}}
\def\ar{\leftarrow}
\def\bc{\begin{center}}
\def\ec{\end{center}}
\def\smodels{{\sc smodels}}
\def\cmodels{{\sc cmodels}}
\def\minisatid{{\sc mini\-sat(id)}}
\def\clasp{{\sc clasp}}
\def\lparse{{\sc lparse}}
\def\gringo{{\sc gringo}}
\newcommand{\dpll}{\textsc{dpll}}
\newcommand{\nt}{not\;}
\newcommand{\stt}{\;|\;}
\newcommand{\opi}{O^\Pi}
\newcommand{\ol}{\overline}
\newcommand{\rup}{\textit{Unit Propagate}}
\newcommand{\rupl}{\textit{Unit Propagate Learn}}
\newcommand{\rbj}{\textit{Backjump}}
\newcommand{\rl}{\textit{Learn}}
\newcommand{\dec}{{\Delta}}
\newcommand{\rd}{\textit{Decide}}
\newcommand{\rf}{\textit{Fail}}
\newcommand{\rb}{\textit{Backtrack}}
\newcommand{\runf}{\textit{Unfounded}}
\newcommand{\rarc}{\textit{All Rules Cancelled}}
\newcommand{\rbt}{\textit{Backchain True}}
\newcommand{\fail}{\textit{FailState}}
\newcommand{\smtasp}{\textit{\sc{sm(asp)}}}
\newcommand{\smtaspl}{\textit{\sc{sml(asp)}}}
\newcommand{\sm}{\textit{\sc{sm}}}
\newcommand{\dps}{\textit{\sc{dp}}}
\newcommand{\cmsm}{\textit{\sc{atleast}}}
\newcommand{\boldr}{{\bf R}}
\newcommand{\dbar}{{||}}
\newcommand{\pcid}{PC(ID)}
\newcommand{\foid}{{FO(ID)}}
\newif\ifshowrevision
\newcommand{\rev}[2]{{\ifshowrevision\color{black}\fi#2}}
\begin{document}

% \begin{changemargin}{-3cm}{-3cm} %narrows the left and right margins by 1cm.

%\mainmatter

\title{
 Transition Systems  for Model Generators --- 
 A Unifying Approach
}

\author[Y. Lierler and M. Truszczynski]
{YULIYA LIERLER and MIROSLAW TRUSZCZYNSKI\\
Department of Computer Science, University of Kentucky, Lexington, 
KY 40506-0633, USA\\
\email{yuliya,mirek@cs.uky.edu} 
}

\date{}

%\jdate{June 2009}
%\setcounter{page}{1}
%\pubyear{2010}

%\bibliographystyle{acmtrans}
\maketitle
%\begin{comment}
\begin{abstract}
A fundamental %reasoning
task for propositional logic is to compute 
models of propositional formulas. %or determine that no models exist.
Programs developed for this task are called satisfiability solvers. We
show that transition systems introduced by Nieuwenhuis, Oliveras, and 
Tinelli to model and analyze satisfiability solvers can be adapted for 
solvers developed for two other propositional formalisms: logic 
programming under the answer-set semantics, and the logic {\pcid}. 
We show that in each case the task of computing models can be seen 
as ``satisfiability modulo answer-set programming,'' where the goal is
to find a model of a theory that also is an answer set of a certain 
program. 
%We then describe transition systems for satisfiability 
%modulo answer-set programming solvers and specialize them to the two 
%formalisms in question. 
The unifying perspective we develop shows, in 
particular, that solvers  {\clasp} and {\minisatid} 
are %strikingly
closely related  despite being developed for different 
formalisms, one for answer-set programming and the latter for the 
logic {\pcid}.
\end{abstract}

\section{Introduction}

A fundamental reasoning task for propositional logic is to compute 
models of propositional formulas or determine that no models exist.
Programs developed for this task are commonly called \emph{model generators}
or \emph{satisfiability (SAT) solvers}. In the paper we show that transition 
systems introduced by Nieuwenhuis et al. \citeyear{nie06}
to model and analyze SAT solvers can be adapted for the analysis and
comparison of solvers developed for other propositional 
formalisms. The two formalisms we focus on are logic programming with 
the answer-set semantics and the logic {\pcid}. 

Davis-Putnam-Logemann-Loveland ({\dpll}) procedure is a well-known 
method 
that exhaustively explores 
interpretations to generate models of a propositional formula. 
Most modern SAT solvers are based
on variations of the {\dpll} procedure. Usually these variations are 
specified by pseudocode. Nieuwenhuis et al. \citeyear{nie06} proposed
an alternative approach based on the notion of a \emph{transition 
system} that describes ``states of computation'' and
allowed transitions between them. In this way, it defines 
a directed graph such that every execution of the {\dpll} procedure 
corresponds to a path in the graph. This abstract way of presenting 
{\dpll}-based algorithms simplifies the analysis of their correctness 
and facilitates studies of their properties --- instead of reasoning 
about pseudocode constructs, we reason about properties of a graph. 
For instance, by proving that the graph corresponding to a {\dpll}-based
algorithm is finite and acyclic we show that the algorithm always terminates.

Answer-set programming (ASP) \cite{mar99,nie99} is a declarative 
programming formalism based on the answer-set semantics of logic 
programs \cite{gel88}.  Generating answer sets of propositional
programs is the key step in  computation with ASP.
%Lierler \citeyear{lier08} observed that 
%transition systems were also well suited to describe generators of 
%answer sets.
The logic \foid, introduced by Denecker \citeyear{den00}
% and later developed by Denecker and Ternovska \citeyear{den08} 
is another 
formalism for declarative programming and knowledge representation. 
As in the case of ASP, most automated reasoning tasks 
in the logic FO(ID) reduce to reasoning in its propositional core, the 
logic PC(ID)~\cite{mar08}, where generating models is again the key.

In this paper, we show that both computing answer sets of programs
and computing models of {\pcid} theories can be considered as testing
\emph{satisfiability modulo theories} (SMT), where the objective is to
find a model of a set of clauses that is also an answer 
set of a certain program. We refer to this computational problem 
as \emph{satisfiability modulo answer-set programming} and denote it by SM(ASP).
We identify the propositional formalism capturing SM(ASP) --- we use the 
same term to refer to it --- and show that it is a common generalization 
of ASP and \pcid. We define a simple transition system for SM(ASP) and 
show that it can be used as an abstract representation of the solver 
{\smodels}\footnote{\tt  http://www.tcs.hut.fi/Software/smodels/ .}~\cite{nie00}, an alternative
to a similar characterization of {\smodels} obtained earlier by
Lierler~\citeyear{lier10}. We then define another 
more elaborate transition system for SM(ASP) that captures such features
of backtracking search as backjumping and learning. We use this transition
system to obtain 
abstract characterizations of the algorithms implemented by the ASP 
solvers {\cmodels}\footnote{\tt
  http://www.cs.utexas.edu/users/tag/cmodels .}~\cite{giu04a} and 
{\clasp}\footnote{\tt http://www.cs.uni-potsdam.de/clasp/ .}~\cite{geb07}, 
and the {\pcid} solver 
{\minisatid}\footnote{\tt http://dtai.cs.kuleuven.be/krr/software/minisatid
.}~\cite{mar08}.
Finally, we briefly mention the possibility to regard the introduced transition 
systems as proof systems. In that setting, transition systems could be
used for comparing the solvers they represent in terms of the complexity
of the corresponding proof systems.

Our results provide a uniform correctness proof for a 
broad class of solvers that can be modeled by the transition system for
SM(ASP), clarify essential computational principles behind ASP and 
{\pcid} solvers, and offer insights into how they relate to each other. 
In particular, our results yield the first abstract representation of {\clasp}
in terms of transition systems (up to now {\clasp} has been typically
specified in pseudocode), and show that at the abstract level,
{\clasp} and {\minisatid} are strikingly
closely related.

This last point is noteworthy as the two solvers were developed for 
different propositional formalisms.
{\minisatid} was developed specifically for the logic {\pcid}, where 
there is no concept of an answer set. The semantics is a natural 
extension of the notion of a model of a propositional theory to the 
setting when a theory consists of propositional clauses and 
\emph{definitions}. Definitions are written as logic programs but they
are interpreted by the well-founded semantics and not by
the answer-set semantics. 
%In addition, the assumption of totality is adopted, which means that 
%programs that represent definitions, no matter how open atoms are 
%instantiated, have only a single answer set. These aspects make  
%the logic {\pcid} quite different from ASP. 
There is no 
indication in the literature that {\clasp} or {\minisatid} were
influenced by each other. The two solvers were developed independently 
and for differently motivated formalisms. It is then of substantial
interest that at the level of solving they are closely related.

\section{Preliminaries}

We now review the abstract transition system framework
proposed for the {\dpll} procedure by Nieuwenhuis et al.
\citeyear{nie06}, and 
introduce some necessary terminology concerning logic programs and the
logic \pcid.
%We then review an abstract transition system framework developed
%by Lierler \cite{lier10} for ASP solvers.

\smallskip
\noindent
{\bf Abstract DPLL.}
Most state-of-the-art SAT solvers are based on variations of the
{\dpll} procedure~\cite{dav62}. Nieuwenhuis et al. \citeyear{nie06}
described {\dpll} by means of a transition system that can be 
viewed as an abstract representation of the underlying {\dpll}
computation. 
In this section we review the abstract {\dpll} in the
form convenient for our purposes, following the presentation proposed
by Lierler~\citeyear{lier10}.

For a set $\at$ of atoms, 
a \emph{record} relative to~$\at$ is  
an ordered set $M$ of literals over $\at$, some possibly
annotated by $\dec$, which marks them as \emph{decision} literals.
A \emph{state} relative to~$\at$ is either a distinguished 
state {\fail} or a record relative to~$\at$. For instance, the
states relative to a singleton set~$\{ a\}$ %of atoms
are
\[
\ba{l}
{\fail},\ \ \emptyset,\ \ a,\ \ \neg a, \ \ a^\dec,\ \ \neg a^\dec,\ \ 
a~\neg a,\ \ a^\dec~\neg a,\\   
a~\neg a^\dec,\ \ a^\dec~\neg a^\dec,\ \ \neg a~a,\ \ \neg a^\dec~a,\ \ 
\neg a~a^\dec,\ \ \neg a^\dec~a^\dec\mbox{.}
 
\ea
\]
Frequently, we consider $M$ as a set of literals, 
ignoring both the annotations and the order among its elements. 
If neither a literal $l$ nor its dual, written $\overline{l}$, occurs 
in $M$, then $l$ is
\emph{unassigned} by $M$.
We say that $M$ is {\sl inconsistent} if both an atom $a$ and its
negation $\neg a$ occur in it.  For instance, states $b^\dec~\neg b$ and
$b~a~\neg b$ are inconsistent. 
%We write~$l^\dec$ to emphasize that~$l$ is a decision literal. 
%We say that $M$ contains a \emph{conflict} 
%if both atom $a$ and its negation $\neg a$ occurs in $M$.

If $C$ is a  disjunction (conjunction) of literals then by  
$\overline C$ we understand the
conjunction (disjunction) of the duals of
the literals occurring in $C$.
In some situations, we will identify disjunctions and conjunctions of
literals with the sets of these literals. 

In this paper, a \emph{clause} is a \emph{non-empty} disjunction of 
literals and a CNF formula is a conjunction (alternatively, a set) of clauses.
Each CNF formula $F$ determines its 
%% \emph{ {\dpll} graph} $\dps_F$. 
\emph{DPLL graph} $\dps_F$. 
The set of nodes of  $\dps_F$ consists of the states relative to the set of
atoms occurring in~$F$.
The edges of the graph $\dps_F$ are specified by four transition rules:
%\begin{footnotesize}
\[
\begin{array}[t]{lll}
%\vspace{-1.5em}\\
\hbox{{\rup}:}&
M  ~\lrar~ 
       M~l &
  \hbox{if ~} 
%  \left\{ \begin{array}{l} 
 C\vee l\in F \hbox{ and ~}
 \ol C \subseteq M \\
%  \end{array}\right. \\
%\skip\\
\vspace{-1em}\\
%\\
\hbox{{\rd}:}& 
  M ~\lrar~ 
       M~l^\dec 
  & \hbox{if ~} l \mbox{~is unassigned by $M$} \\
%\vspace{.05em}\\
%\skip\\
%\vspace{0.001em}\\
\vspace{-1em}\\
%\vspace{.01em}\\
\hbox{{\rf}:}&
  M ~\lrar~  {\fail}
  & \hbox{if ~} 
  \left\{ \begin{array}{l}
  \hbox{$M$ is inconsistent, and}\\
 \hbox{$M$ contains no decision literals}
  \end{array}\right. \\
%\skip\\
%\vspace{.01em}\\
\vspace{-1em}\\
\hbox{{\rb}:}&
P~l^\dec~Q\lrar~ 
  P~\overline{l}
  &\hbox{if ~} 
  \left\{ \begin{array}{l}
\hbox{$P~l^\dec~Q$ is inconsistent, and}\\  
\hbox{$Q$ contains no  decision literals.}
  \end{array}\right. \\
\end{array}
\]
A node (state) in the graph is \emph{terminal} if no edge originates in
it.  
The following proposition gathers key properties of 
the graph ${\dps}_F$. 

\begin{proposition}\label{prop:dp} For any CNF formula $F$,
\begin{itemize}
%% \item [(a)] graph ${\dps}_F$ is acyclic,
\item[(a)] graph ${\dps}_F$ is finite and acyclic,
\item[(b)] any terminal state of ${\dps}_F$ other than {\fail} 
is a model of $F$,
\item[(c)] {\fail} is reachable from $\emptyset$ in ${\dps}_F$ if and
  only if $F$ is unsatisfiable.
\end{itemize}
\end{proposition}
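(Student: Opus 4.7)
My plan is to handle the three parts separately: (a) by a lexicographic termination measure, (b) by a case analysis showing that if none of the four rules applies then $M$ is already a model, and (c) by combining (a) and (b) with a soundness invariant for the decision literals.

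For (a), finiteness is immediate because $F$ has finitely many atoms, so the set of possible records is finite. For acyclicity, I would attach to each state $M = l_1\ldots l_k$ a word $w(M)\in\{0,1,2\}^n$, where $n$ is the number of atoms of $F$: the $i$-th letter is $1$ if $l_i$ is a decision literal, $2$ if $l_i$ is a non-decision literal, and $0$ whenever $i>k$. Each non-$\fail$ transition strictly increases $w(M)$ lexicographically: $\rd$ flips one entry from $0$ to $1$, $\rup$ flips one entry from $0$ to $2$, and $\rb$ applied to $P\,l^\dec\,Q$ flips entry $|P|+1$ from $1$ to $2$, with later entries collapsing to $0$ (irrelevant for the comparison). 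Hence no cycle can exist among non-$\fail$ states, and since $\fail$ is a dead end, the graph is acyclic.

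For (b), let $M\neq\fail$ be terminal. Three observations suffice. First, $M$ is consistent: otherwise either $M$ has no decision literals, so $\rf$ applies, or we can write $M = P\,l^\dec\,Q$ with $l^\dec$ the \emph{last} decision literal, in which case $Q$ contains no decision literals and $\rb$ applies. Second, $M$ assigns every atom of $F$, or else $\rd$ would apply to an unassigned atom. Third, $M$ satisfies every clause of $F$: if a clause $C\vee l$ were falsified, then $\overline C\subseteq M$ and $\overline l\in M$, so $\rup$ fires and produces the strictly different (in fact inconsistent) state $M\,l$. Hence $M$ is a model of $F$.

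For (c), the direction ``$F$ unsatisfiable implies $\fail$ reachable'' is immediate from (a) and (b): any maximal path from $\emptyset$ is finite by (a) and ends at a terminal, which by (b) must be $\fail$ when $F$ has no model. For the converse I would establish, by induction on the path length, the invariant that for every state $M$ reachable from $\emptyset$ and every non-decision literal $l$ of $M$, $l$ is a logical consequence of $F$ together with the decision literals of $M$ appearing before $l$. The cases $\rd$ and $\rup$ preserve this directly. The delicate case is $\rb$ applied to $P\,l^\dec\,Q$: by the inductive invariant every literal of $P\,l^\dec\,Q$ is entailed by $F$ together with the decisions of $P$ and $l^\dec$, and inconsistency of that state then gives $F\cup\{\text{decisions in }P\}\models\overline l$, which is exactly what is needed for the new non-decision literal $\overline l$ added by $\rb$. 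Once the invariant is in hand, any application of $\rf$ occurs at an inconsistent decision-free state, so $F$ itself is unsatisfiable. The main obstacle is precisely formulating and verifying the $\rb$ case of this invariant, because it requires tracking carefully which intermediate non-decision literals remain justified after a decision is flipped.
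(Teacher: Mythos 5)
Your proof is correct and follows essentially the same route the paper itself takes: Proposition~\ref{prop:dp} is not reproved here (it is recalled from \cite{lier10,nie06}), but the Appendix proof of the analogous Proposition~\ref{prop:cm1} uses exactly your ingredients --- a termination measure for (a), a terminality-implies-model case analysis for (b), and a reachability invariant for (c) --- and your entailment-based invariant (``each non-decision literal is entailed by $F$ plus the preceding decisions'') is an equivalent reformulation of the paper's Lemma~\ref{lem:des_cm1} (``every model satisfying the decision literals of $M$ satisfies $M$''). One cosmetic slip: a record may be inconsistent and so can contain up to $2n$ literals over $n$ atoms, so your word $w(M)$ should live in $\{0,1,2\}^{2n}$ rather than $\{0,1,2\}^{n}$; the lexicographic argument is otherwise unaffected.
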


Thus, to decide the satisfiability of a CNF formula~$F$ it is enough to 
find a path leading from node $\emptyset$ to a terminal node $M$. If 
$M=\fail$, $F$ is unsatisfiable. Otherwise, $F$ is satisfiable and 
$M$ is a model of $F$.

For instance, let $F = \{a\vee b, \neg a\vee c\}$. Below we show a 
path in $\dps_F$ with every edge annotated by the name of the 
transition rule that gives rise to this edge in the graph:
\[
\emptyset\quad \stackrel{\rd}{\lrar}\quad a^\dec\quad \stackrel{\rup}{\lrar}\quad a^\dec~c\quad
\stackrel{\rd}{\lrar}\quad a^\dec~c~b^\dec\mbox{.}
\]
The state $a^\dec~c~b^\dec$ is terminal. Thus, 
Proposition~\ref{prop:dp}(b) asserts that $F$ is 
satisfiable and $\{a,c,b\}$ is a model of~$F$.

\smallskip
\noindent
{\bf Logic Programs.}
A \emph{(propositional) logic program} is a finite set of
 rules of the form
\beq
\ba {l}
a_0\ar a_1,\dots, a_l,not\ a_{l+1},\dots,not\ a_m,
%\\~~~~~~~~~~~~~~~~~
not\ not\ a_{m+1},\dots,not\ not\ a_n,
\ea
\eeq{e:rule}
where $a_0$ is an atom or $\bot$ and each $a_i$, $1\leq i\leq n$, is
an atom.\footnote{In the paper, we do not use the term \emph{literal} for
expressions $a$, $not\ a$ and $not\ not\ a$. We reserve the term 
\emph{literal} exclusively for propositional literals $a$ and $\neg a$.} 
If $a_0$ is an atom then a 
rule~(\ref{e:rule}) is \emph{weakly normal}. If, in addition, $n=m$ 
then it is \emph{normal}. Programs 
consisting of weakly normal (normal, respectively) rules only are 
called \emph{weakly normal} (\emph{normal}, respectively). If $\Pi$ is
a program, by $At(\Pi)$ we denote the set of atoms that occur in $\Pi$. 

The expression $a_0$ is the \emph{head} of the rule. If $a_0=\bot$ we say
that the head of the rule is \emph{empty} and we often omit $\bot$ from 
the notation. In such case we require that $n>0$. We call a rule with the empty 
head a \emph{constraint}. 
We write $Head(\Pi)$ for the set of nonempty heads of
rules in a program $\Pi$.

We call the expression
$a_1,\dots, a_l,not\ a_{l+1},\dots,not\ a_m,\ not\ not\ a_{m+1},\dots,
not\ not\ a_n$
in a rule~(\ref{e:rule}) the \emph{body} of the rule and often view it
as the set of all elements that occur in it. 
If $a$ is an atom, we set $s(a)=s(not\; not\; a)=a$, and $s(not\; a)=\neg a$,
and we define $s(B)=\{s(l)\stt l\in B\}$. More directly, 
\[
s(B)=
  \{a_1,\ldots, a_l, \neg a_{l+1}, \ldots, \neg a_m,
a_{m+1}, \ldots, a_n\}\hbox{.}
\]
We also frequently identify the body $B$ of~(\ref{e:rule}) with the 
conjunction of elements in $s(B)$: 
%\beq
\[
 a_1\wedge \cdots\wedge  a_l\wedge  \neg a_{l+1}\wedge \cdots\land\neg a_m
\wedge a_{m+1}\land\cdots \wedge a_n\hbox{.}
\]
By $Bodies(\Pi,a)$ 
we denote  the set of the bodies of all rules of~$\Pi$ with the head~$a$
(including the empty body). 
%%\textcolor{red}{
%(if there is a rule $a\ar$ in
%$\Pi$ then $\top \in Bodies(\Pi,a)$).
If $B$ is the body of~(\ref{e:rule}), we write $B^{pos}$ for the 
\emph{positive} part of the body, that is, $B^{pos}=\{a_1,\dots a_l\}$. 

We often interpret a rule~(\ref{e:rule}) as a propositional clause
\beq
a_0\vee \neg  a_1\vee \ldots\vee \neg a_l\vee   a_{l+1}\vee \ldots \vee
a_m\vee \neg  a_{m+1}\vee \ldots\vee \neg a_n
%\hbox{.}
\eeq{e:cl}
(in the case when the rule is a constraint, $a_0$ is absent in (\ref{e:cl})).
Given a program $\Pi$, we write~$\Pi^{cl}$ for the set of clauses
(\ref{e:cl}) corresponding to all rules in $\Pi$.

%If $a_0$ is an atom then a rule~(\ref{e:rule}) is \emph{weakly normal}.
%Programs consisting of weakly normal rules only are also called
%\emph{weakly normal}. If $a_0$ is an atom and $n=m$ in a 
%rule~(\ref{e:rule}), then the rule is \emph{normal}. A program is 
%\emph{normal} is all its rules are normal.
%consisting of normal rules only are also called 
%\emph{normal}. We say that a 
%rule is \emph{semi-normal} if 
%$a_0$ is an atom in a rule~(\ref{e:rule}).
%Programs consisting of semi-normal rules only are also called 
%\emph{semi-normal}.
%\footnote{Sometimes the term \emph{normal logic program}
%is used only for programs consisting of normal rules none of which
%is a constraint.}

%It is important to note the presence of doubly negated atoms in the 
%bodies of rules. This version of the language of logic programs
%is a special case of programs with nested expressions introduced by
%Lifschitz, Tang and Turner~\citeyear{lif99d}. Lifschitz et al. defined a 
%conservative extension of the semantics of answer sets for such
%programs. This extension of logic programs is essential for 
%the approach we develop: it allows for an alternative
%definition of the logic {\pcid}, which facilitates connecting it to ASP.
%We assume that the reader is familiar with the definition of an answer
%set of a logic program and refer to the paper by Lifschitz et al.~\citeyear{lif99d}
%for details. 

This version of the language of logic programs is a special case of 
programs with nested expressions \cite{lif99d}. It is essential for 
our approach as it yields an alternative definition of the logic 
{\pcid}, which facilitates connecting it to ASP. We assume that the 
reader is familiar with the definition of an answer set of a logic 
program and refer to the paper by Lifschitz et al.~\citeyear{lif99d} 
for details.

\smallskip
\noindent
\textbf{Well-Founded Semantics and the Logic {\pcid}.}
Let $M$ be a set of (propositional) literals. By $\ol M$ we understand 
the set of the duals of the literals in $M$. A set $U$ 
of atoms occurring in a program~$\Pi$ is \emph{unfounded} %~\cite{van91,lee05} 
on a consistent set $M$ of literals
with respect to $\Pi$ if for every $a\in U$ and every 
$B\in Bodies(\Pi,a)$, $M\cap \overline{s(B)}\not=\emptyset$
%where~$B$ is identified with 
%the conjunction of its elements (\ref{e:body}), 
or 
$U\cap B^{pos}\neq\emptyset$. 
For every program $\Pi$ and for every consistent
set $M$ of literals, the union of sets that are unfounded on $M$ with 
respect to $\Pi$ is also unfounded on $M$ with respect to $\Pi$. Thus, 
under the assumptions above, there exists the \emph{greatest unfounded 
set} on $M$ with respect to $\Pi$. We denote this set by $\gus(M,\Pi)$.

For every weakly normal program $\Pi$ we define an operator $W_\Pi$ on 
a set $M$ of literals as follows
\[
%\ba{l}
W_\Pi(M)= \left\{ 
  \begin{array}{l l}
M\cup
\{a~\mid ~ a\ar B\in\Pi  \hbox{ and } s(B)\subseteq M\} %B\subseteq M\}
\cup \ol{\gus(M,\Pi)}  & \text{if $M$ is consistent}\\
    At(\Pi)\cup\ol{At(\Pi)} &  \text{otherwise.}\\
   \end{array} \right.\\
%\ea
\]
By $W^{fix}_\Pi(M)$ we denote a fixpoint of the operator $W_\Pi$ 
over a set $M$ of literals. One can show that it always 
exists since $W_\Pi$ is not only monotone but also increasing (for any
set~$M$ of literals, $M\subseteq W_\Pi(M)$).
The least fixpoint of 
$W_\Pi$, $W^{fix}_\Pi(\emptyset)$, is consistent and yields
the \emph{well-founded model} of $\Pi$, which in general is
three-valued. 
It is also written as $\hbox{\emph{lfp}}(W_\Pi)$.
These definitions and properties were initially introduced for normal 
programs only~\cite{van91}. They extend to programs in our syntax in a 
straightforward way, no changes in statements or arguments are needed
\cite{lee05}.

%\textcolor{red}{
Let $\Pi$ be a program and $A$ a set of atoms. An atom $a$ is
\emph{open} with respect to
$\Pi$ and $A$ if \hbox{$a\in A\setminus Head(\Pi)$}.  We 
denote the set of atoms that are open with respect to
$\Pi$ and $A$ by $\opi_A$. 
By $\Pi_A$ we denote the logic program~$\Pi$ extended with the rules 
$a\ar \ not\ not\ a$ for each atom $a\in \opi_A$.
For instance, let $\Pi$ be a program
\beq
\ba l
a\ar b,\ not\ c\\
b\hbox{.}
\ea
\eeq{e:expi1}
Then, $\Pi_{\{c\}}$ is 
\[
\ba{l}
c\ar\ not\ not\ c\\ 
a\ar b,\ not\ c\\
b\hbox{.}
\ea
\]
%}

We are ready to introduce the logic {\pcid} \cite{den00}.
%In~\cite{den00}, Denecker defined a knowledge representation formalism
%FO(ID) that extends classical logic by inductive
%definitions (ID) encoded by normal non-propositional logic programs. 
%Here we consider only propositional fragment of this
%logic -- {\pcid}. 
%We extend the logic of {\pcid} to more general
%definitions that allow logic programs consisting of the
%rules~(\ref{e:rule}). We show that for  traditional {\pcid} theories
%our definition of a model of the theory coincides 
%with the original definition in~\cite{den00}.
%
A \emph{{\pcid} theory} is a pair~$(F,\Pi)$, where $F$ is a set of clauses and
$\Pi$ is a weakly normal logic program. For a {\pcid} theory~$(F,\Pi)$, 
by $\Pi^o$ we denote $\Pi_{At(F\cup \Pi)}$ and by $\opi$ we denote 
$\opi_{At(F\cup \Pi)}$ 
(where $At(F\cup \Pi)$ stands for the set of
atoms that occur in $F$ and $\Pi$). 
Moreover, for a set~$M$ of literals 
and a set $A$ of atoms, by $M^A$ we denote the set of those literals in~$M$ 
whose atoms occur in $A$. A set $M$ of literals is \emph{complete} over 
the set $At$ of atoms if every atom in $At$ occurs (possibly negated)
in $M$ and no other atoms occur in $M$.

\begin{definition}
Let $(F,\Pi)$ be a {\pcid} theory. A consistent and complete (over $At(F\cup
\Pi)$) set $M$ of literals %over $At(F\cup\Pi)$
is called a \emph{model} of $(F,\Pi)$ if
\begin{itemize}
\item[(i)] $M$
is a model of $F$, and
\item[(ii)] $M=W^{fix}_{\Pi^o}(M^{\opi})$.
\end{itemize}
\end{definition}

For instance, let $F$ be a clause $b\vee \neg c$ and $\Pi$ be 
 program~(\ref{e:expi1}). The {\pcid} theory $(F,\Pi)$ has two models
$\{b, \neg c, a\}$ and $\{b, c,$ $\neg a\}$. We note that although sets
$\{\neg b, \neg c, a\}$ and $\{\neg b, \neg c, \neg a\}$ 
 satisfy the condition (i), that is, are models of $F$, they do not 
satisfy the condition (ii) and therefore are not models of $(F,\Pi)$.

The introduced definition of a {\pcid} theory differs from the original 
one~\cite{den00}. Specifically, for us the second component of a 
{\pcid} theory is a weakly normal program rather than a set of 
normal programs (definitions).
Still, the two formalisms are closely related. 

\begin{proposition}\label{prop:defrel}
For a {\pcid} theory $(F,\Pi)$ such that  $\Pi$ is a normal
program, $M$ is a model of $(F,\Pi)$ if and only if $M$ is a model of
$(F,\{\Pi\})$ according to the definition in~\cite{den00}. 
\end{proposition}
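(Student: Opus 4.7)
The plan is to unfold both definitions and reduce the equivalence to a claim about the parameterized well-founded semantics of a normal program. Condition (i) --- that $M\models F$ --- appears verbatim in both formulations, so only condition (ii) needs work. In Denecker's original setting, a singleton $\{\Pi\}$ with $\Pi$ normal is interpreted as a single \emph{definition} whose defined atoms are those in $Head(\Pi)$ and whose parameters are the atoms in $\opi$. A two-valued interpretation $M$ is then a model of $(F,\{\Pi\})$ iff $M\models F$ and the well-founded model of $\Pi$ parameterized by $M^{\opi}$ is total and coincides with $M$ on all atoms of $At(F\cup\Pi)$.

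First I would make precise what ``parameterized'' means on Denecker's side: letting $\Pi[M^{\opi}]$ denote $\Pi$ extended with a fact $a$ for each open atom with $a\in M$ and no rule for each open atom with $\neg a\in M$, Denecker's condition reduces to $\hbox{\emph{lfp}}(W_{\Pi[M^{\opi}]})=M$. Next I would analyze the effect of the added rules $a\ar\ \nt\nt a$ (for $a\in\opi$) in $\Pi^o$ when the fixpoint is seeded with $M^{\opi}$. An open atom $a$ with $a\in M$ is derived immediately through $a\ar\ \nt\nt a$, since $\nt\nt a$ is satisfied by $M^{\opi}$; an open atom $a$ with $\neg a\in M$ belongs to the greatest unfounded set $\gus(M^{\opi},\Pi^o)$, because its only rule has body $\nt\nt a$, which is blocked by $\neg a\in M^{\opi}$. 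Hence one application of $W_{\Pi^o}$ to $M^{\opi}$ already pins every open atom to its $M$-value, exactly mirroring the facts and defaults implicit in $\Pi[M^{\opi}]$.

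The key lemma to isolate is that, once the open atoms are so determined, the subsequent iterations of $W_{\Pi^o}$ agree with those of $W_{\Pi[M^{\opi}]}$. The added rules $a\ar\ \nt\nt a$ have empty positive body, so they never contribute to the condition $U\cap B^{pos}\neq\emptyset$ in the definition of an unfounded set for atoms in $Head(\Pi)$, and after the first step they produce no new positive derivations. Combined with the observation that $W_\Pi$ and $W_{\Pi[M^{\opi}]}$ act identically on $Head(\Pi)$ once the open-atom facts are in place, this gives $W^{fix}_{\Pi^o}(M^{\opi})=M$ iff $\hbox{\emph{lfp}}(W_{\Pi[M^{\opi}]})=M$, which is precisely the equivalence of (ii) with Denecker's condition.

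The main obstacle will be the careful bookkeeping around the unfounded-set computation in $\Pi^o$: one must show that the added rules neither create spurious derivations for atoms in $Head(\Pi)$ nor prevent atoms that should be unfounded from being so. A secondary technicality is to verify that completeness of $M$ over $At(F\cup\Pi)$ corresponds exactly to totality of the parameterized well-founded model, so that the implicit totality requirement in Denecker's semantics aligns with the equation $M=W^{fix}_{\Pi^o}(M^{\opi})$ used here.
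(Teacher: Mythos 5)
Your proposal follows essentially the same route as the paper: reduce the claim to condition (ii), then show that the iterations of $W_{\Pi^o}$ seeded with $M^{\opi}$ track those of the well-founded operator of the parameterized program (the paper's key lemma is exactly $W^i_{\Pi\cup ch(N)}(N)=W^i_{\Pi(N)}(\emptyset)\cup N$, proved by induction on $i$ via one sub-lemma for the one-step positive consequences and one for the greatest unfounded sets). The only real difference is cosmetic --- the paper formalizes Denecker's parameterization as the reduced program $\Pi(M^{\opi})$ and states the fixpoint condition as $M=W^{fix}_{\Pi(M^{\opi})}(\emptyset)\cup M^{\opi}$, which sidesteps a small bookkeeping issue in your version: a false open atom does not occur in $\Pi[M^{\opi}]$ at all, so it would be absent from $\mathit{lfp}(W_{\Pi[M^{\opi}]})$ and your equation $\mathit{lfp}(W_{\Pi[M^{\opi}]})=M$ needs the same explicit union with $M^{\opi}$ to be literally true.
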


As the restriction to a single program in {\pcid} theories is not
essential~\cite{mar08}, Proposition \ref{prop:defrel} shows that 
our definition of the logic {\pcid} can be regarded as a slight 
generalization of the original one (more general programs can appear 
as definitions in {\pcid} theories).

\section{Satisfiability Modulo ASP: a unifying framework for ASP 
and {\pcid} solvers}
\label{sec3}
 
For a theory $T$ the \emph{satisfiability modulo theory} (SMT) problem
is: given a formula~$F$, determine whether~$F$ is $T$-satisfiable, 
that is, whether there exists a model of~$F$ that is also a model 
of~$T$. We refer the reader to~\cite{nie06} for an introduction to 
SMT. Typically, a theory $T$ that defines a specific SMT problem is a 
first-order formula. The SMT problem that we consider here is
different. The theory $T$ is a logic program under the (slightly
modified) answer-set semantics. We  show that the resulting version 
of the SMT problem can be regarded as a joint extension of ASP and 
{\pcid}.

We start by describing the modification of the answer-set semantics 
that we have in mind. 

\begin{definition}
Given a logic program $\Pi$, a set $X$ of atoms is an \emph{input 
answer set} of $\Pi$ if $X$ is an answer set of $\Pi\cup
(X\setminus Head(\Pi))$. 
\end{definition}

Informally, the atoms of $X$ that cannot possibly be defined by $\Pi$ 
as they do not belong to $Head(\Pi)$ serve as ``input'' to $\Pi$. A set 
$X$ is an input answer set of $\Pi$ if it is an answer set of the program 
$\Pi$ extended with  these ``input'' atoms from $X$. 
Input answer sets are related to stable models of a propositional logic
program module~\cite{oik06}.

For instance, let us consider 
 program~(\ref{e:expi1}). Then, sets $\{b,c\}$, $\{a,b\}$ are input answer
 sets of the program whereas set $\{a,b,c\}$ is not.

There are two important cases when input answer sets of a program are
closely related to answer sets of the program.

\begin{proposition}
\label{prop:input}
For a logic program $\Pi$ and a set $X$ of atoms:
\begin{itemize}
\item[(a)] $X\subseteq Head(\Pi)$ and $X$ is an input answer set of~$\Pi$
if and only if $X$ is an answer set of~$\Pi$.  
\item[(b)] If $(X\setminus Head(\Pi))\cap At(\Pi) = \emptyset$, then 
$X$ is an input answer set of $\Pi$ if and only if $X\cap Head(\Pi)$ is 
an answer set of $\Pi$.
\end{itemize}
\end{proposition}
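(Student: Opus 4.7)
The plan is to unfold the definition of input answer set: $X$ is an input answer set of $\Pi$ precisely when $X$ is an answer set of the program $\Pi \cup Z$, where $Z = X \setminus Head(\Pi)$ is understood as the set of facts $\{a \leftarrow\ \mid\ a \in Z\}$. Writing also $Y = X \cap Head(\Pi)$, we obtain a disjoint decomposition $X = Y \cup Z$ that drives both parts.

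For part (a), the assumption $X \subseteq Head(\Pi)$ immediately gives $Z = \emptyset$ and $\Pi \cup Z = \Pi$, so ``input answer set of $\Pi$'' and ``answer set of $\Pi$'' are literally the same thing here. The remaining content is to verify that every answer set $X$ of $\Pi$ satisfies $X \subseteq Head(\Pi)$, which I would establish by a standard minimal-model argument on the reduct: if some $a \in X$ did not occur in the head of any rule of $\Pi$, then $a$ would not occur in any head of the reduct $\Pi^{X}$ either, so $X \setminus \{a\}$ would still satisfy every rule of $\Pi^{X}$ (any rule whose positive body contained $a$ becomes trivially satisfied once $a$ is removed), contradicting the minimality of $X$ as a model of $\Pi^{X}$.

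For part (b), the hypothesis $(X \setminus Head(\Pi)) \cap At(\Pi) = \emptyset$ makes the atoms of $Z$ entirely foreign to $\Pi$. The central calculation I would carry out is the reduct identity
\[
(\Pi \cup Z)^{X} \;=\; \Pi^{Y} \cup Z,
\]
which holds because facts have trivial reducts, every rule of $\Pi$ involves only atoms of $At(\Pi)$ in negation-as-failure contexts, and $X \cap At(\Pi) = Y$ by the hypothesis. Once this identity is in place, the minimality argument decouples cleanly across the disjoint atom sets $At(\Pi)$ and $Z$: any model of $\Pi^{Y} \cup Z$ must contain the facts $Z$, and its intersection with $At(\Pi)$ must be a model of $\Pi^{Y}$. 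Hence $X = Y \cup Z$ is a minimal model of $\Pi^{Y} \cup Z$ if and only if $Y$ is a minimal model of $\Pi^{Y}$; by definition this reads exactly as ``$X$ is an input answer set of $\Pi$ if and only if $Y$ is an answer set of $\Pi$''.

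The main obstacle I anticipate is handling the reduct carefully for the extended syntax used in the paper (where $not\ not\ a$ may also appear in bodies) and being precise about which universe of atoms the minimality is taken over, since $X$ itself lives in $At(\Pi) \cup Z$ rather than in $At(\Pi)$ alone. Beyond that point, both parts reduce to routine set-theoretic bookkeeping and the fact that minimal models of a positive program split along a partition of its atoms into syntactically disjoint pieces.
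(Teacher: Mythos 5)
Your proof is correct and follows essentially the same route as the paper's: part (b) rests on the reduct identity $[\Pi\cup(X\setminus Head(\Pi))]^X=\Pi^{X\cap Head(\Pi)}\cup(X\setminus Head(\Pi))$ and the decoupling of minimality across the disjoint atom sets, which is exactly the paper's argument. For part (a) the paper simply declares the claim straightforward, whereas you correctly identify and supply the one nontrivial ingredient (that every answer set is contained in $Head(\Pi)$, via the minimality of $X$ as a model of the reduct); this is a welcome elaboration rather than a different approach.
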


We  now introduce a propositional formalism that we call 
\emph{satisfiability modulo ASP} and denote by SM(ASP). Later in
the paper we show 
that SM(ASP) can be viewed as a common generalization of 
both ASP and \pcid. \emph{Theories} of SM(ASP) are pairs $[F,\Pi]$, 
where $F$ is a set of clauses and $\Pi$ is a program. In the definition
below and in the remainder of the paper, for a set $M$ of literals we
write $M^{+}$ to denote the set of atoms (non-negated literals) in
$M$. For instance, $\{a,\neg b\}^{+}=\{a\}$.
 
\begin{definition}
For an SM(ASP) theory $[F,\Pi]$, a consistent and
complete (over $At(F\cup\Pi)$) set $M$ of literals 
is a \emph{model} of $[F,\Pi]$
if $M$ is a model of $F$ and 
%\hbox{$M^+\cap At(\Pi)$} 
$M^{+}$ 
is an input answer set of $\Pi$.
\end{definition}

For instance, let $F$ be a clause $b\vee \neg c$ and $\Pi$ be 
 program~(\ref{e:expi1}). The SM(ASP) theory $[F,\Pi]$ has two models
$\{b, \neg c, a\}$ and $\{b, c, \neg a\}$. 

The problem of finding models of pairs $[F,\Pi]$ can be regarded as an 
SMT problem in which, given a formula $F$ and a program $\Pi$, the goal
is to find a model of $F$ that is (its representation by the set of 
its true atoms, to be precise) an input answer set of $\Pi$. This 
observation motivated our choice of the name for the formalism.

%\textcolor{red}{
As for {\pcid} theories, also for an SM(ASP) theory $[F,\Pi]$ we write
$\Pi^o$ for the program~$\Pi_{At(\Pi\cup F)}$.
% and $O^\Pi$ for $O^\Pi_{At(\Pi\cup F)}$. 
We have the following simple observation.
\begin{proposition}
\label{prop17}
A set $M$ of literals is a model of an SM(ASP) theory $[F,\Pi]$ 
if and only if $M$ is a model of an SM(ASP) theory $[F,\Pi^o]$.
\end{proposition}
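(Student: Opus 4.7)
The plan is to reduce the claim to an equivalence of input answer sets, since the model-of-$F$ requirement and the completeness requirement are preserved when passing from $\Pi$ to $\Pi^o$.

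First I would observe that $At(F\cup\Pi)=At(F\cup\Pi^o)$, because the only rules added in forming $\Pi^o=\Pi_{At(F\cup\Pi)}$ have the form $a\ar \nt\nt a$ for atoms $a\in \opi\subseteq At(F\cup\Pi)$. Hence ``consistent and complete over $At(F\cup\Pi)$'' and ``consistent and complete over $At(F\cup\Pi^o)$'' are the very same condition, and the clause-satisfaction clauses (i) of the two definitions are obviously identical. So everything reduces to showing that for any such $M$, $M^{+}$ is an input answer set of $\Pi$ iff $M^{+}$ is an input answer set of $\Pi^o$.

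Next I would simplify the right-hand side. Since $Head(\Pi^o)=Head(\Pi)\cup\opi=Head(\Pi)\cup At(F\cup\Pi)$, and since $M^{+}\subseteq At(F\cup\Pi)$ by completeness, we have $M^{+}\setminus Head(\Pi^o)=\emptyset$. Therefore, unfolding the definition, $M^{+}$ is an input answer set of $\Pi^o$ iff $M^{+}$ is an answer set of $\Pi^o$ itself. By the definition of an input answer set for $\Pi$, it remains to prove that $M^{+}$ is an answer set of $\Pi^o$ iff $M^{+}$ is an answer set of $\Pi\cup(M^{+}\setminus Head(\Pi))$.

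The main step is a reduct comparison in the sense of programs with nested expressions (Lifschitz et al., 1999). Writing $X=M^{+}$, I would compute the two reducts relative to $X$. Since $\nt\nt a$ reduces to $\top$ when $a\in X$ and to $\bot$ when $a\notin X$, the added rules $a\ar \nt\nt a$ in $\Pi^o$ contribute, after reduction, facts $a$ for each $a\in \opi\cap X$ and tautologies $a\ar\bot$ for each $a\in \opi\setminus X$. Using $M^{+}\subseteq At(F\cup\Pi)$, the set $M^{+}\setminus Head(\Pi)$ equals $\opi\cap X$, so the reduct of $\Pi\cup(M^{+}\setminus Head(\Pi))$ contributes precisely the same facts $a$ for $a\in \opi\cap X$. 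The two reducts therefore differ only by the rules $a\ar\bot$, which are trivially satisfied by every set and provide no support, so they leave the minimal models of the reduct unchanged. Consequently $X$ is a minimal model of one reduct iff it is a minimal model of the other, giving the desired equivalence.

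The only real obstacle is the reduct bookkeeping for nested expressions, in particular making sure that $\nt\nt a$ is handled correctly and that the tautological rules $a\ar\bot$ genuinely do not alter minimality. Once that is verified, the rest of the proof is a direct chain of equivalences.
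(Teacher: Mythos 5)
Your proposal is correct and follows essentially the same route as the paper's proof: reduce to the equivalence of input answer sets, use $M^{+}\subseteq Head(\Pi^o)$ to replace ``input answer set of $\Pi^o$'' by ``answer set of $\Pi^o$'' (the paper cites Proposition~\ref{prop:input}(a) here), and then observe that the reducts of $\Pi^o$ and $\Pi\cup(M^{+}\setminus Head(\Pi))$ relative to $M^{+}$ coincide up to the inert rules $a\ar\bot$. Your explicit bookkeeping of those $a\ar\bot$ rules is in fact slightly more careful than the paper's claim that the two reducts are literally equal.
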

%}

It is evident that a set $M$ of literals is a model of $F$ if and only
if $M$ is a model of~$[F,\emptyset]$. Thus, SM(ASP) allows us to express
the propositional satisfiability problem. We now show that
the SM(ASP) formalism captures ASP.
Let $\Pi$ be a program. We say that a set $F$ of clauses is 
\emph{$\Pi$-safe} if
\begin{enumerate}
\item $F\models \neg a$, for every $a\in \opi_{At(\Pi)}$, and
\item for every answer set~$X$ of $\Pi$ there is a model $M$
of $F$ such that $X=M^{+}\cap Head(\Pi)$.
\end{enumerate}

\begin{proposition}
\label{prop:conn}
Let $\Pi$ be a program. For every $\Pi$-safe set $F$ of clauses, a set
$X$ of atoms is an answer set of~$\Pi$ if and only if
$X=M^{+}\cap At(\Pi)$, for some model $M$ of $[F,\Pi]$.
\end{proposition}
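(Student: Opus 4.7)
The plan is to reduce both implications to Proposition~\ref{prop:input}(b), using $\Pi$-safety to verify its hypothesis. The key preliminary observation is that for any model $M$ of $F$, clause~(1) of safety forces $\neg a\in M$ for every $a\in At(\Pi)\setminus Head(\Pi)$; hence $M^{+}\cap(At(\Pi)\setminus Head(\Pi))=\emptyset$, which gives both $(M^{+}\setminus Head(\Pi))\cap At(\Pi)=\emptyset$ and $M^{+}\cap At(\Pi)=M^{+}\cap Head(\Pi)$. The first identity is exactly the hypothesis of Proposition~\ref{prop:input}(b), so $M^{+}$ is an input answer set of $\Pi$ if and only if $M^{+}\cap Head(\Pi)$ is an answer set of $\Pi$.

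For the $(\Rightarrow)$ direction, let $X$ be an answer set of $\Pi$. Clause~(2) of safety supplies a model $M_{0}$ of $F$ with $X=M_{0}^{+}\cap Head(\Pi)$. I extend $M_{0}$ to a consistent, complete set $M$ over $At(F\cup\Pi)$ by adjoining $a$ for each $a\in X\setminus At(F)$ and $\neg a$ for each $a\in(At(\Pi)\setminus At(F))\setminus X$. Since these atoms are absent from $F$, the extension still satisfies $F$; because $X\subseteq Head(\Pi)$ (answer sets of weakly normal programs lie inside $Head(\Pi)$), the extension preserves $M^{+}\cap Head(\Pi)=X$. The preliminary observation then yields $M^{+}\cap At(\Pi)=X$, and applying Proposition~\ref{prop:input}(b) in the `if' direction shows that $M^{+}$ is an input answer set of $\Pi$. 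Hence $M$ is a model of $[F,\Pi]$ with the required property.

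For the $(\Leftarrow)$ direction, suppose $M$ is a model of $[F,\Pi]$ and put $X=M^{+}\cap At(\Pi)$. The preliminary observation gives $X=M^{+}\cap Head(\Pi)$, and applying Proposition~\ref{prop:input}(b) in the `only if' direction to the input answer set $M^{+}$ shows that $X$ is an answer set of $\Pi$.

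The main obstacle I anticipate is the extension step in the forward direction: the model $M_{0}$ delivered by safety need not be complete over $At(F\cup\Pi)$, so I must assign values to the atoms in $At(\Pi)\setminus At(F)$ without either spoiling satisfaction of $F$ or shifting the set $M^{+}\cap Head(\Pi)$. Once that bookkeeping is handled, the rest of the argument is a direct combination of Proposition~\ref{prop:input}(b) with the two clauses of $\Pi$-safety.
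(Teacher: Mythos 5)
Your proof is correct and follows essentially the same route as the paper's: both directions reduce to Proposition~\ref{prop:input}(b) after using safety condition (1) to establish $(M^{+}\setminus Head(\Pi))\cap At(\Pi)=\emptyset$ and $M^{+}\cap At(\Pi)=M^{+}\cap Head(\Pi)$. The only difference is your explicit extension step making the model from safety condition (2) complete over $At(F\cup\Pi)$, a bookkeeping point the paper's proof passes over silently; your handling of it is sound.
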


This result shows that for an appropriately chosen theory $F$, answer
sets of a program~$\Pi$  can be derived in a direct way from models of 
an SM(ASP) theory $[F,\Pi]$. 
There are several possible choices for $F$ that satisfy the requirement
of $\Pi$-safety. One of them is the Clark's
\emph{completion} of $\Pi$~\cite{cla78}. We recall that the completion of
a program $\Pi$ consists of clauses in $\Pi^{cl}$ and of the formulas
that can be written as
\beq
\neg a\vee \bigvee_{B\in Bodies(\Pi,a)} B
\eeq{f:comp}
for every atom $a$ in $\Pi$ that is not a fact (that is,
the set $Bodies(\Pi,a)$ contains 
no empty body). Formulas (\ref{f:comp}) can be clausified
in a straightforward way by applying distributivity. The set of all the
resulting clauses and of those in $\Pi^{cl}$ forms the \emph{clausified}
completion of $\Pi$, which we will denote by $Comp(\Pi)$.

The theory
$Comp(\Pi)$ does not involve any new atoms but it can be exponentially
larger than the completion formula before clausification. We can avoid
the exponential blow-up by introducing new atoms. Namely, for each body
$B$ of a rule in $\Pi$ with $|B|>1$, we introduce a fresh atom $f_B$. 
If $|B|=1$, then we define $f_B=s(l)$, where $l$ is the only element of 
$B$. 
By $\i{ED-Comp}(\Pi)$, we denote the set of the following clauses:
\begin{enumerate}
\item all clauses in $\Pi^{cl}$
\item all clauses $\neg a\vee \bigvee_{B\in Bodies(\Pi,a)} f_B$, \ for
every $a\in At(\Pi)$ such that $a$ is not a fact in $\Pi$ and $|Bodies(\Pi,a)|>1$ 
\item all clauses $\neg a\vee s(l)$, where $a\in At(\Pi)$,
$Bodies(\Pi,a)= \{B\}$ and $l\in B$,
\item all clauses $\neg a$, where $|Bodies(\Pi,a)|=0$
\item all clauses obtained by clausifying in the obvious way formulas
$f_B \leftrightarrow B$, where $B\in Bodies(\Pi,a)$, for some atom $a$
that is not a fact in $\Pi$ and $|Bodies(\Pi,a)|>1$.
\end{enumerate}
Clearly, the restrictions of models of the theory $\i{ED-Comp}(\Pi)$
to the original set of atoms are precisely the models of $Comp(\Pi)$
(and of the completion of $\Pi$). However, the size of $\i{ED-Comp}(\Pi)$
is linear in the size of $\Pi$. The theory $\i{ED-Comp}(\Pi)$ has long
been used in answer-set computation. 
Answer set solvers such as
{\cmodels}~\cite{giu04a} and {\clasp}~\cite{geb07} start their
computation by transforming the given program~$\Pi$ into $\i{ED-Comp}(\Pi)$. 

For instance, let  $\Pi$ be  
 program~(\ref{e:expi1}).
The completion of $\Pi$ is the formula 
\[
 (a\vee \neg b\vee c)\wedge b\wedge \neg c \wedge (\neg a \vee
 (b\wedge \neg c))\hbox{,}
\]
its clausified completion $\i{Comp}(\Pi)$ is the formula
\[
(a\vee\neg b\vee c)\wedge(\neg a\vee b)\wedge(\neg a\vee \neg c)\wedge b
\wedge \neg c\hbox{,}
\]
and, finally, $\i{ED-Comp}(\Pi)$ is the formula
\[
\begin{array}{l}
(a\vee\neg b\vee c)\wedge
(\neg a\vee f_{b\land \neg c})\wedge
(f_{b\land \neg c}\vee\neg b\vee c)\wedge\\
(\neg f_{b\land \neg c}\vee b)\wedge
(\neg f_{b\land \neg c}\vee \neg c) \wedge b \wedge \neg c\hbox{.}
\end{array}
\]

%For a set $X$ of atoms we say that $X$ is a model of a set $F$ of
%clauses if there is a complete and consistent set $M$ of literals that
%is a model of $F$ and $M^{+}$ is $X$.

%For a set $X$ of atoms, by $\gamma(X)$ we denote a set of clauses of
%the form 
%$a\vee\neg a$  
%for each atom $a\in X$. 

We now have the following corollary from Proposition \ref{prop:conn}.
\begin{corollary}
\label{cor:conn}
For a logic program $\Pi$ and  a set $X$ of atoms, the following
conditions are equivalent:
\begin{itemize}
\item[(a)] $X$ is an answer set of $\Pi$,
\item[(b)] $X=M^{+}$ for some model $M$ of the SM(ASP) theory
$[\{\neg a ~\mid ~ a\in \opi_{At(\Pi)}\},\Pi]$,
\item[(c)] $X=M^{+}$ for some model $M$ of the SM(ASP) theory  $[Comp(\Pi),\Pi]$,
\item[(d)] $X=M^{+}\cap At(\Pi)$
for some model $M$ of the SM(ASP) theory $[\i{ED-Comp}(\Pi),\Pi]$.
\end{itemize}
\end{corollary}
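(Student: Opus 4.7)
The plan is to derive the three equivalences $(a)\Leftrightarrow(b)$, $(a)\Leftrightarrow(c)$, $(a)\Leftrightarrow(d)$ by applying Proposition~\ref{prop:conn} with the set $F$ chosen to be, respectively, $F_b = \{\neg a \stt a\in \opi_{At(\Pi)}\}$, $F_c = Comp(\Pi)$, and $F_d = \i{ED-Comp}(\Pi)$. In each case the work splits into two tasks: verifying that the chosen $F$ is $\Pi$-safe, so that Proposition~\ref{prop:conn} is applicable, and reconciling its conclusion $X = M^{+}\cap At(\Pi)$ with the exact wording of the corollary.

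For $F_b$, $\Pi$-safety is essentially immediate: Condition~1 holds by construction, and Condition~2 follows by taking $M = X \cup \{\neg a \stt a\in At(\Pi)\setminus X\}$, using the standard fact that every answer set of $\Pi$ is contained in $Head(\Pi)$. For $F_c$, Condition~1 holds because for any $a\in \opi_{At(\Pi)}$ the set $Bodies(\Pi,a)$ is empty, so~(\ref{f:comp}) contributes the unit clause $\neg a$; Condition~2 restates the classical result that every answer set of $\Pi$ satisfies Clark's completion, with the witness $M$ given by the truth assignment determined by $X$. For $F_d$, Condition~1 is witnessed directly by the unit clauses of type~(4), while Condition~2 is proved by extending a given answer set $X$ to an assignment on $At(\i{ED-Comp}(\Pi))$ that sets each auxiliary atom $f_B$ true exactly when $B$ holds under $X$; a short case analysis across the five clause families confirms that this extension models $\i{ED-Comp}(\Pi)$ while still satisfying $M^{+}\cap Head(\Pi)=X$.

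To bridge the conclusion of Proposition~\ref{prop:conn} to the precise statement of the corollary, note that neither $F_b$ nor $Comp(\Pi)$ introduces atoms outside $At(\Pi)$, so $At(F_b\cup\Pi) = At(F_c\cup\Pi) = At(\Pi)$; any model $M$ of $[F_b,\Pi]$ or $[F_c,\Pi]$ is therefore complete over $At(\Pi)$, giving $M^{+}\cap At(\Pi) = M^{+}$ and yielding (b) and (c) verbatim. For $\i{ED-Comp}(\Pi)$ the auxiliary atoms $f_B$ enlarge the signature, so the intersection with $At(\Pi)$ cannot be dropped, which is exactly why (d) is phrased with the intersection in place. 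The main obstacle, although each step is routine, is the bookkeeping in the $\Pi$-safety check for $\i{ED-Comp}(\Pi)$: every clause family must be validated against the extended assignment, with care taken at the edge cases (facts, and atoms $a$ with $|Bodies(\Pi,a)|=1$, which bypass the $f_B$ abbreviation by construction). A secondary concern is that the classical ``answer set implies model of the completion'' fact is most often stated for normal programs, so I should either invoke or briefly justify its extension to our slightly more general rule syntax allowing $\nt\nt$ in the body.
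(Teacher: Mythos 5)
Your proposal is correct and follows exactly the route the paper intends: the paper states that Corollary~\ref{cor:conn} ``follows immediately from Proposition~\ref{prop:conn}'' and omits the details, and your argument is precisely the omitted verification --- checking $\Pi$-safety for each of the three choices of $F$ and observing that the intersection with $At(\Pi)$ is redundant in cases (b) and (c) but not in (d). Your closing remarks about the double-negation syntax and the bookkeeping for $\i{ED-Comp}(\Pi)$ are the right places to be careful, and nothing in the proposal goes wrong.
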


It is in this sense that ASP can be regarded as a fragment of SM(ASP).
Answer sets of a program $\Pi$ can be described in terms of models of
SM(ASP) theories. Moreover, answer-set computation can be reduced in a
straightforward way to the task of computing models of SM(ASP) theories.

\begin{remark}\label{r:1}
Corollary \ref{cor:conn} specifies three ways to  describe
answer sets of a program in terms of models of SM(ASP) theories. This 
offers an interesting view into answer-set generation. The CNF 
formulas appearing in the SM(ASP) theories in the conditions (b) - (d)
make explicit some of the ``propositional satisfiability inferences'' 
that may be used when computing answer sets. The condition (b)
shows that when computing answer sets of a program, 
atoms not occurring as heads can be inferred as false. The theory 
in (c) makes it clear that a much broader class 
of inferences can be used, namely those that are based on the clauses 
of the completion. The theory in (d) describes still additional inferences,
as now, thanks to new atoms, we can explicitly infer whether bodies of 
rules must evaluate to 
true or false. 
In each case, some inferences needed for generating answer sets are 
still not captured by the respective CNF theory and require a reference 
to the program $\Pi$. We note that it is 
possible to express these ``answer-set specific'' inferences in 
terms of clauses corresponding to loop formulas~\cite{lin04,lee05}. 
We do not consider this possibility in this paper.
\end{remark}

Next, we  show that SM(ASP)  encompasses the logic \pcid.
The well-founded model~$M$ of a program $\Pi$ is \emph{total} if it
assigns all atoms occurring in $\Pi$. For a {\pcid} theory $(F,\Pi)$, 
a program $\Pi$ is \emph{total on a model} $M$ of $F$  if 
$W^{fix}_{\Pi^o}(M^{\opi})$ is total. A program $\Pi$ is \emph{total} 
if~$\Pi$ is total on every model $M$ of $F$. The {\pcid}
theories $(F,\Pi)$ where~$\Pi$ is total form an important class of 
\emph{total} {\pcid} theories.

There is a tight relation between models of a total {\pcid} theory 
$(F,\Pi)$ and models of an SM(ASP) theory $[F,\Pi]$.

\begin{proposition}
\label{prop:pcidsmasp}
For a total {\pcid} theory $(F,\Pi)$
and  a set~$M$ of literals over
the set $At(F\cup \Pi)$ of atoms, the following conditions are
equivalent:
\begin{itemize}
\item[(a)] $M$ is a model of $(F,\Pi)$,
\item[(b)] $M$ is a model of the SM(ASP) theory
$[F,\Pi]$,
\item[(c)] $M$ is a model of the SM(ASP) theory
$[Comp(\Pi_{At(\Pi)})\cup F,\Pi]$,
\item[(d)] for some model $M'$ of the
SM(ASP) theory \hbox{$[\i{ED-Comp}(\Pi_{At(\Pi)})\cup F,\Pi]$},
$M=M'\cap At(F\cap \Pi)$.
\end{itemize}
%In conditions~(b), (c), and (d) 
%$\Pi$ can be replaced with $\Pi^o$.
\end{proposition}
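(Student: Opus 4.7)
I would prove the three equivalences (a)$\Leftrightarrow$(b), (b)$\Leftrightarrow$(c), and (c)$\Leftrightarrow$(d) separately, then chain them. The equivalence (a)$\Leftrightarrow$(b) is the heart of the matter; the other two are largely bookkeeping about Clark's completion and Tseitin-style definitional extensions.

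For (a)$\Leftrightarrow$(b), I would invoke the classical correspondence between total well-founded models and answer sets for weakly normal programs (due to Van Gelder et al., extended to the nested syntax by Lee and Lifschitz). Let $P$ denote the program $\Pi \cup (M^{+}\setminus Head(\Pi))$, so that $M^{+}$ is an input answer set of $\Pi$ iff $M^{+}$ is an answer set of $P$. I would show that, under the totality hypothesis, the condition $M = W^{fix}_{\Pi^o}(M^{\opi})$ is equivalent to saying that the well-founded model of $P$ is total and assigns the atoms of $At(F\cup\Pi)$ exactly as $M$ does. The translation rests on the observation that the choice rules $a \leftarrow \i{not}\ \i{not}\ a$ in $\Pi^o$ and the facts in $P$ drawn from $M^{+}\setminus Head(\Pi)$ have the same effect once the well-founded iteration is started from $M^{\opi}$: both force the open atoms to take precisely the truth values dictated by $M$, so the fixpoint computed by $W_{\Pi^o}$ coincides with the well-founded model of $P$. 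Totality of this model then lets us conclude it is the (unique) answer set of $P$, yielding (a)$\Leftrightarrow$(b). This is the step I expect to be the main obstacle, because it requires pinning down precisely how the choice-rule device in $\Pi^o$ mimics the fact-adjunction in the definition of an input answer set.

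For (b)$\Leftrightarrow$(c), the implication (c)$\Rightarrow$(b) is immediate: any model of $[Comp(\Pi_{At(\Pi)})\cup F,\Pi]$ satisfies $F$ and has $M^{+}$ as an input answer set of $\Pi$. For (b)$\Rightarrow$(c), I would argue that any input answer set of $\Pi$ satisfies Clark's completion of $\Pi_{At(\Pi)}$: the rules in $\Pi^{cl}$ are trivially satisfied because answer sets satisfy the program read as clauses, and the reverse implications in the completion formulas (\ref{f:comp}) hold because in an answer set no atom can be true without being supported by a rule body, where for atoms of $At(\Pi)\setminus Head(\Pi)$ the added choice rules in $\Pi_{At(\Pi)}$ provide the required support formula.

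For (c)$\Leftrightarrow$(d), I would rely on the standard Tseitin-style fact that $\i{ED\textit{-}Comp}(\Pi_{At(\Pi)})$ is a conservative extension of $Comp(\Pi_{At(\Pi)})$: the body-name atoms $f_B$ are uniquely determined by the equivalences $f_B \leftrightarrow B$, and once they are fixed, items (2)--(4) in the definition of $\i{ED\textit{-}Comp}$ become logically equivalent to the corresponding clauses of $Comp(\Pi_{At(\Pi)})$. Thus every model $M$ of $[Comp(\Pi_{At(\Pi)})\cup F,\Pi]$ extends uniquely to a model $M'$ of $[\i{ED\textit{-}Comp}(\Pi_{At(\Pi)})\cup F,\Pi]$ with $M = M'\cap At(F\cup\Pi)$, and conversely; the input-answer-set condition on $\Pi$ is unaffected since the fresh atoms $f_B$ do not occur in $\Pi$. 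Chaining the three equivalences completes the proof.
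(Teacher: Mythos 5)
Your decomposition into the three equivalences matches the paper's, and your arguments for (b)$\Leftrightarrow$(c) (input answer sets are supported models, with the choice rules of $\Pi_{At(\Pi)}$ supplying support for the open atoms) and for (c)$\Leftrightarrow$(d) (conservative extension via the $f_B$ definitions) are essentially the paper's own. For the central equivalence (a)$\Leftrightarrow$(b), however, you take a genuinely different route: you reduce to the correspondence between total well-founded models and answer sets by identifying $W^{fix}_{\Pi^o}(M^{\opi})$ with the well-founded model of $P=\Pi\cup(M^{+}\setminus Head(\Pi))$. The paper never passes through the well-founded model of $P$; it works directly with the operator $W_{\Pi^o}$ and the Theorem on Unfounded Sets of Lee, using two small lemmas (a complete consistent fixpoint of $W_\Pi$ is a model of $\Pi$, and its positive part has no nonempty unfounded subset) for (a)$\Rightarrow$(b), and an inline argument that $W_{\Pi^o}(M)\subseteq M$ whenever $M^{+}$ is an answer set of $\Pi^o$, followed by an appeal to totality, for the converse. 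Your route buys a cleaner conceptual picture but shifts the burden onto two facts you should make explicit. First, the identification of the fixpoint started from $M^{\opi}$ with the well-founded model of the fact-augmented program is exactly the kind of statement the paper proves by induction on the iteration (Lemma~\ref{lem:wieq}, comparing $\Pi\cup ch(N)$ with a reduced program); you rightly flag it as the main obstacle but leave it as a sketch. Second, the ``classical correspondence'' you invoke must hold for weakly normal programs with doubly negated atoms in bodies, where it is not classical: you need both that a total well-founded model yields an answer set and, for (b)$\Rightarrow$(a), that the well-founded model approximates every answer set, so that totality forces it to coincide with $M$ (note that, e.g., $a\ar not\ not\ a$ alone has two answer sets and a non-total well-founded model, so these properties are more delicate here than in the normal case). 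Both facts do extend to this syntax via the same unfounded-set characterization the paper uses, so your argument can be completed, but as written they are citations to results that are standard only for normal programs.
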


%Propositions \ref{prop:pcidsmasp} and \ref{prop17} imply analogous
%characterizations of models of $(F,\Pi)$ in terms of models of the 
%respective SM(ASP) theories with $\Pi^o$ in place of $\Pi$ 
%\textcolor{red}{
%and $\Pi_{At(\Pi)}$.
%} 

%The last sentence of the statement follows from Proposition \ref{prop17}.
The conditions (b), (c), (d) state that the logic {\pcid} restricted 
to total theories can be regarded as a fragment of the SM(ASP) 
formalism. The comments made in Remark~\ref{r:1} pertain also to
generation of models in the logic {\pcid}. 

We  now characterize models of SM(ASP) theories, and computations
that lead to them, in terms of transition systems. Later we  discuss 
implications this characterization has for ASP and {\pcid} solvers.

%\textcolor{red}{
%Similarly as for {\pcid} theories,
%for an SM(ASP) theory~$[F,\Pi]$, by  $\Pi^o$
 %we denote~$\Pi_{At(F\cup \Pi)}$.
%}

We define the transition graph $\smtasp_{F,\Pi}$ for an SM(ASP) theory 
$[F,\Pi]$ as follows. The set of nodes of the graph $\smtasp_{F,\Pi}$ 
consists of the states relative to $At(F\cup\Pi)$. There are five 
transition rules that characterize the edges of $\smtasp_{F,\Pi}$. The 
transition rules {\rup}, {\rd}, {\rf}, {\rb} of the graph 
${\dps}_{F\cup\Pi^{cl}}$, and the transition rule 
%\textcolor{red}{
\[
\begin{array}[t]{ll}
\hbox{{\runf}: }&
M\ 
~\lrar~ 
  M~\neg a 
  \hbox{~ if }
%  \left\{ \begin{array}{l}
%\hbox{$M$ is consistent, and }\\
\hbox{$a\in U$ for a set $U$ unfounded on $M$ w.r.t. $\Pi^o$}\hbox{.}\\
%  \end{array}\right. \\
\end{array}
\]
%}

%It is easy to see that the graph ${\dps}_{}$ is a subgraph of
%${\smtasp}_{\fpi,\Pi}$.
The graph ${\smtasp}_{F,\Pi}$ can be used for deciding whether 
an SM(ASP) theory $[F,\Pi]$ has a %proper
model.
\begin{proposition}\label{prop:cm1} 
For any SM(ASP) theory $[F,\Pi]$,
\begin{itemize}
%% \item [(a)] graph ${\cm}_{F,X}$ is acyclic,
\item[(a)] graph ${\smtasp}_{F,\Pi}$ is finite and acyclic,
\item[(b)] for any terminal state $M$ of ${\smtasp}_{F,\Pi}$ other
  than {\fail}, $M$ is a model of~$[F,\Pi]$,
\item[(c)] {\fail} is reachable from $\emptyset$ in 
${\smtasp}_{F,\Pi}$  if and  only if $[F,\Pi]$ has no models.
\end{itemize}
\end{proposition}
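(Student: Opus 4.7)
The plan is to extend the proof of Proposition~\ref{prop:dp} for DPLL to account for the new rule {\runf}. Parts (a), (b), (c) are handled in turn.

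For (a), finiteness is immediate from the finiteness of $At(F\cup\Pi)$. For acyclicity, I would assign to each non-fail state $M$ a measure $\mu(M)$ and show it strictly increases along every transition in a well-founded order. Decompose $M$ as $M_0\,l_1^\dec\,M_1\,l_2^\dec\cdots l_k^\dec\,M_k$, where each $M_i$ contains no decision literals, and let $\mu(M)=(|M_0|,|M_1|,\ldots,|M_k|)$ right-padded by $-1$'s to length $|At(F\cup\Pi)|+1$, compared lexicographically. A case analysis gives strict increase: {\rup} and {\runf} extend the last segment by one; {\rd} replaces a $-1$ padding entry by the value $0$; and {\rb}, going from $P\,l^\dec\,Q$ to $P\,\ol{l}$, replaces the segment count at the level of $l$ by a strictly larger value (while subsequent entries collapse to padding). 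Since $\mu$ takes values in a finite set, cycles are impossible; the single-use transition {\rf} into the terminal state $\fail$ cannot participate in one either.

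For (b), let $M\ne \fail$ be terminal. I would verify four properties. First, $M$ is consistent: otherwise {\rb} applies when a decision literal is present, and {\rf} applies otherwise. Second, $M$ is complete over $At(F\cup \Pi)$: an unassigned atom $a$ would enable {\rd}. Third, $M$ satisfies every clause in $F\cup \Pi^{cl}$: if a clause $D$ were unsatisfied, pick any literal $l\in D$ and write $D = C\vee l$; by completeness $\ol{C}\subseteq M$, and either $l\notin M$, so {\rup} applies, or $l\in M$, in which case $\ol{l}\in M$ contradicts consistency. Fourth, $M^{+}$ is an input answer set of $\Pi$: the inapplicability of {\runf} implies that every atom in every unfounded set on $M$ with respect to $\Pi^o$ is already negated in $M$, so combined with $M$'s status as a complete model of $\Pi^{cl}$, the standard unfounded-set characterization of stable models gives that $M^{+}$ is an answer set of $\Pi\cup(M^{+}\setminus Head(\Pi))$, i.e.\ an input answer set of $\Pi$. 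This fourth item is the main obstacle, as it requires carefully linking unfoundedness with respect to $\Pi^o$ (which adjoins rules $a\ar\ not\;not\;a$ for open atoms) with the stability condition for the program obtained by adjoining $M^{+}\setminus Head(\Pi)$ to $\Pi$; observing that $Head(\Pi^o)=At(F\cup\Pi)$ collapses input answer sets of $\Pi^o$ to ordinary answer sets of $\Pi^o$ and lets Proposition~\ref{prop17} bridge the two programs.

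For (c), the ``only if'' direction follows directly from (a) and (b): every maximal path from $\emptyset$ is finite and its endpoint is either $\fail$ or, by (b), a model of $[F,\Pi]$; so in the absence of a model every such path reaches $\fail$. For ``if'', let $M^\ast$ be a model of $[F,\Pi]$; I would prove by induction on path length the invariant that for every reachable state $M$ whose decision literals are $l_1^\dec,\ldots,l_k^\dec$ and every model $M^\ast$ of $[F,\Pi]$ extending $\{l_1,\ldots,l_k\}$, we have $M\subseteq M^\ast$. The step for {\rup} uses that $M^\ast$ satisfies $F\cup \Pi^{cl}$; the step for {\runf} uses that unfoundedness on $M$ is preserved under passing to $M^\ast\supseteq M$, together with the standard fact that $M^{\ast+}$ meets no unfounded set on itself with respect to $\Pi^o$; the step for {\rd} is trivial; and for {\rb}, the inductive hypothesis applied to the earlier state $Pl^\dec Q$ shows that no model can simultaneously extend $\{l_1,\ldots,l_j,l\}$ (else that model would contain the inconsistent $Pl^\dec Q$), so any model extending $\{l_1,\ldots,l_j\}$ must contain $\ol{l}$. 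The invariant applied with $k=0$ forbids {\rf}, which requires a decision-free inconsistent state, so $\fail$ cannot be reached whenever $[F,\Pi]$ has a model.
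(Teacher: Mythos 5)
Your proof is correct and follows essentially the same route as the paper's: part (b) is the same terminal-state case analysis (consistency, completeness, satisfaction of $F\cup\Pi^{cl}$, then the Theorem on Unfounded Sets applied to $\Pi^o$), and part (c) rests on exactly the invariant the paper isolates as Lemma~\ref{lem:des_cm1} --- that any model of $[F,\Pi]$ satisfying the decision literals of a reachable state satisfies the whole state --- with your per-rule case analysis matching the paper's, while your lexicographic measure for (a) merely makes explicit what the paper delegates to Lierler's proof of Proposition~\ref{prop:dp}. The only blemish is cosmetic: in part (c) the labels ``if'' and ``only if'' are transposed relative to the arguments you attach to them, but both directions are correctly argued.
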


Proposition \ref{prop:cm1} shows that algorithms that correctly
find a path in the graph ${\smtasp}_{F,\Pi}$ from $\emptyset$ to a 
terminal node can be regarded as SM(ASP) solvers. It also provides a proof
of correctness for every SM(ASP) solver  that can be shown to work in 
this way.

One of the ways in which SM(ASP) encompasses ASP (specifically, 
Corollary \ref{cor:conn}(c)) is closely related to the way the 
answer-set solver {\smodels} works. We recall that to represent 
{\smodels} Lierler~\citeyear{lier10} proposed a  graph $\sm_\Pi$.
We note that the rule {\runf}  above is closely 
related to the transition rule with the same name used in the definition
of $\sm_\Pi$~\cite{lier10}. In fact, if $\Pi=\Pi^o$ then these rules are identical.

Lierler~\citeyear{lier10} observed that {\smodels} as it is implemented 
never follows certain edges in the graph ${\sm}_\Pi$, and called such
edges \emph{singular}. Lierler~\citeyear{lier10} denoted by ${\sm}^{-}_\Pi$ 
the graph obtained by removing from $\sm_\Pi$ all its singular edges and
showed that  ${\sm}^{-}_\Pi$ is still sufficient to serve as an abstract 
model of a class of ASP solvers including 
{\smodels}. The concept of a singular edge extends literally to the case 
of the graph ${\smtasp}_{F,\Pi}$. An edge $M~\lrar~M'$ in the 
graph ${\smtasp}_{F,\Pi}$ is \emph{singular} if:
\begin{enumerate}
\item the only transition rule justifying this edge is {\runf}, and
\item some edge $M~\lrar~M''$ 
can be justified by a transition rule other than {\runf} or {\rd}.
\end{enumerate}
We define $\smtasp^{-}_{F,\Pi}$ as 
the graph obtained by removing all singular edges from ${\smtasp}_{F,\Pi}$. 
Proposition~\ref{prop:rel_sm_cm1} below can be seen as an extension of
Proposition~4 in~\cite{lier10} to non-tight programs.
 
\begin{proposition}\label{prop:rel_sm_cm1}
For every program $\Pi$, % without trivial rules, 
the graphs $\sm^{-}_\Pi$ and $\smtasp^{-}_{Comp(\Pi),\Pi}$ are equal.
\end{proposition}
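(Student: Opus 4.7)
My plan is to show edge-by-edge agreement between $\sm^{-}_\Pi$ and $\smtasp^{-}_{Comp(\Pi),\Pi}$, generalizing the tight-program argument of Proposition~4 in \cite{lier10}. Since $At(Comp(\Pi))\subseteq At(\Pi)$, both graphs share the same node set---the states over $At(\Pi)$---so the task reduces to matching their edges.

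First, I would check that the non-\runf, non-\rd\ edges coincide. The rules \rf\ and \rb\ are defined identically on both sides, and \rd\ is literally the same. The rule \ruplp\ in $\sm_\Pi$ operates on $\Pi^{cl}$, while the rules \rbt, \rbf, and \rarc\ together realize the unit propagations carried by the completion clauses in $Comp(\Pi)\setminus\Pi^{cl}$: \rarc\ propagates $\neg a$ from $\neg a\vee\bigvee_{B\in Bodies(\Pi,a)}B$ once every body is refuted, while \rbt\ and \rbf\ supply the remaining propagations on that clause together with the body-definition equivalences. The combined effect is exactly \rup\ applied to $Comp(\Pi)\cup\Pi^{cl}$, which is the clause set governing \rup\ in $\smtasp_{Comp(\Pi),\Pi}$. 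This correspondence is standard in completion-based ASP and closes the first half of the argument.

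Next, I would compare the \runf\ rule in the two graphs. In $\sm_\Pi$ it is taken w.r.t.\ $\Pi$; in $\smtasp_{Comp(\Pi),\Pi}$ it is taken w.r.t.\ $\Pi^o=\Pi_{At(\Pi)}$, which augments $\Pi$ with a rule $a\leftarrow not\; not\; a$ for every open atom $a\in At(\Pi)\setminus Head(\Pi)$. Because $\Pi\subseteq\Pi^o$, every set unfounded w.r.t.\ $\Pi^o$ is unfounded w.r.t.\ $\Pi$, yielding one inclusion of \runf\ edges. The potential extras in $\sm_\Pi$ arise only through open atoms $a$ with $\neg a\notin M$, since such an $a$ is trivially unfounded w.r.t.\ $\Pi$ but not w.r.t.\ $\Pi^o$. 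For any such $a$, however, \rarc\ in $\sm_\Pi$ already propagates $\neg a$ with its vacuously-refuted body set, so the corresponding \runf\ edge has a non-\runf, non-\rd\ justification and is therefore singular and removed in $\sm^{-}_\Pi$. A parallel analysis handles unfounded sets that mix head atoms with such open atoms, using the fact that once the open atoms are negated (which in $\smtasp_{Comp(\Pi),\Pi}$ happens via \rup\ on the unit clause $\neg a\in Comp(\Pi)$) the two notions of unfoundedness coincide on the remaining head atoms.

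Combining the two steps yields identical edge sets in $\sm^{-}_\Pi$ and $\smtasp^{-}_{Comp(\Pi),\Pi}$. The main obstacle is the second step: I must confirm that every discrepancy between the $\Pi$- and $\Pi^o$-based \runf\ rules is matched by a non-\runf, non-\rd\ alternative on both sides, so that the divergent edges are singular in each graph. The engine driving this collapse is the completion's inclusion of a unit clause $\neg a$---equivalently an \rarc\ trigger on $a$---for every open atom, which realigns the two notions of unfoundedness up to singular-edge removal.
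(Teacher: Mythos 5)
Your overall route---reuse the coincidence of the non-\runf{} edges established by Lierler (2011) (Proposition~3 there, restated as Lemma~\ref{lem:gr_eq} in the appendix) and then reconcile the two \runf{} rules via open atoms and singularity---is the same as the paper's. However, your treatment of singularity is inverted at the decisive point. By definition, an edge is singular only if \runf{} is the \emph{only} rule justifying it (condition~(1)), in addition to some edge out of the same state being justified by a rule other than \runf{} or \rd{} (condition~(2)). So when you observe that for an unassigned open atom $a$ the edge $M\lrar M~\neg a$ is justified by \rarc{} as well as by \runf, the correct conclusion is that this edge is \emph{not} singular and is \emph{retained} in $\sm^{-}_\Pi$---and it is matched in $\smtasp^{-}_{Comp(\Pi),\Pi}$ by the (never singular) \rup{} edge on the unit clause $\neg a\in Comp(\Pi)$. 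Your conclusion that it is ``singular and removed'' would, if followed, leave that edge in $\smtasp^{-}_{Comp(\Pi),\Pi}$ while deleting it from $\sm^{-}_\Pi$, i.e., it would contradict the proposition you are trying to prove.

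The case you defer to a ``parallel analysis'' is in fact the crux, and the hint you give for it points the wrong way. The genuinely unmatched \runf{} edges of $\sm_\Pi$ are those deriving $\neg a$ for a \emph{head} atom $a$ from a set $U$ that is unfounded with respect to $\Pi$ only because it contains an unassigned open atom $b$ (for example $\Pi=\{a\ar b\}$, $M=\emptyset$, $U=\{a,b\}$): such a $U$ is not unfounded with respect to $\Pi^o$, and $\neg a$ is not unit-propagated from $Comp(\Pi)$ at $M$ either, so the edge is simply absent from $\smtasp_{Comp(\Pi),\Pi}$. The paper's argument is that any such edge is singular in $\sm_\Pi$: \rarc{} applied to the open atom supplies condition~(2), so a non-singular edge justified by \runf{} forces $U\setminus Head(\Pi)=\emptyset$ (Lemma~\ref{lem:singular}), after which $U$ is unfounded with respect to $\Pi^o$ as well (Lemma~\ref{lem:uset2}) and the edge transfers, with non-singularity preserved because condition~(2) is determined by the non-\runf{} edges, which coincide in the two graphs. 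Your remark that ``once the open atoms are negated'' the two notions of unfoundedness coincide concerns a different, later state and does not settle whether the edge out of the \emph{current} state $M$ agrees in the two graphs. Your converse inclusion (unfounded with respect to $\Pi^o$ implies unfounded with respect to $\Pi$, since $Bodies(\Pi,a)\subseteq Bodies(\Pi^o,a)$) is correct and matches the paper's argument.
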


It follows that %the framework of SM(ASP) theories through 
the graph 
$\smtasp^{-}_{Comp(\Pi),\Pi}$ provides an abstract model of {\smodels}. 
We recall though that $Comp(\Pi)$ can be exponentially larger than the 
completion formula before clausification. Using ASP specific propagation 
rules such as {\rbt} and {\rarc}~\cite{lier10} allows {\smodels} to avoid explicit 
representation of the clausified completion and infer all the necessary 
transitions directly on the basis of the program $\Pi$.

A similar
relationship, in terms of pseudocode representations of {\smodels} and
{\dpll}, is established  in~\cite{giumar05} for tight programs. 
%For a {\pcid} theory $(F,D)$ by $\pi(F,D)$ we denote a
%logic program with constraints consisting of the following rules:
%\begin{itemize}
%\item constraint $C$ for every clause $C\in F$,
%\item program $\Pi^o$.
%\end{itemize}

The answer-set solvers {\cmodels}, {\clasp} and the {\pcid} solver 
{\minisatid} cannot be described in terms of the graph ${\smtasp}$
nor its subgraphs. These solvers implement such advanced features of 
SAT and SMT solvers as learning (forgetting), backjumping and restarts 
(Nieuwenhuis et al. \citeyear{nie06} give a good overview of these 
techniques). In the next section we  \emph{extend} the graph 
${\smtasp}_{F,\Pi}$ with propagation rules that capture these 
techniques. In the subsequent section, we  discus how this new graph 
models solvers {\cmodels}, {\clasp}, and {\minisatid}. Then we
provide insights into how they are related.

\section{Backjumping and Learning for SM(ASP)}

Nieuwenhuis et al. (2006, Section~2.4) defined the \emph{DPLL System
with Learning} graph that can be used to describe most of the modern
SAT solvers, which typically implement such sophisticated techniques
as learning and backjumping. We  demonstrate how to extend these
findings to capture SM(ASP) framework with
learning and backjumping.

\rev{Let $\Pi$ be a program over a set $\at$ of atoms. We say that $\Pi$ 
\emph{entails} a formula $G$ over $\at$, written $\Pi\models G$,
if for every interpretation $N$ over $\at$ such that
\hbox{$N\cap At(\Pi)$} is an answer set of~$\Pi$, $N\models G$.  

Let $F$ and $G$ be formulas and $\Pi$ a program (in the language
determined by $\At$). We say that $F$ and $\Pi$ \emph{entail} $G$, 
written $F,\Pi\models G$, if for every interpretation $N$ such that 
$N\models F$ and $N\cap At(\Pi)$ is an answer set of~$\Pi$, we have 
$N\models G$. In most cases, we consider the entailment relations in
the context of an SM(ASP) theory $[F,\Pi]$. We then assume that $\at=
At(F\cup \Pi)$.}{Let $[F,\Pi]$ be an SM(ASP) theory and let $G$ be
a formula over $At(F\cup\Pi)$. We say that $[F,\Pi]$ \emph{entails} 
$G$, written $F,\Pi\models G$, if for every model $M$ of $[F,\Pi]$,
$M\models G$. 
%By Propositions \ref{prop17} and \ref{prop:input}(a), 
%the following conditions are equivalent:
%\begin{enumerate}
%\item $F,\Pi\models G$
%\item $F,\Pi^o\models G$  
%\item for every interpretation $M$ such that $M\models F$ and $M^{+}$
%is an answer set of $\Pi^o$, $M\models G$.
%\end{enumerate}
}

For an SM(ASP) theory $[F,\Pi]$, an \emph{augmented state} relative 
to $F$ and $\Pi$ is either a distinguished 
state {\fail} or a pair $M||\Gamma$ where $M$ is a record 
 relative to the set of
atoms occurring in~$F$ and $\Pi$, and $\Gamma$ is a set of clauses over
$At(F\cup\Pi)$ such that $F,\Pi^o\models\Gamma$.

We now define a graph {\smtaspl}$_{F,\Pi}$ for  
an SM(ASP) theory $[F,\Pi]$. Its nodes
are the augmented states relative to $F$ and $\Pi$.
The  rules {\rd}, {\runf}, and {\rf} of  {\smtasp}$_{F,\Pi}$
 are extended to {\smtaspl}$_{F,\Pi}$ as
follows:   $M||\Gamma~\lrar~M'||\Gamma$ ($M||\Gamma~\lrar \fail$, respectively)
is an edge  in  {\smtaspl}$_{F,\Pi}$ justified
by {\rd} or {\runf} (\rf, respectively) if and only if
 $M~\lrar~M'$ ($M~\lrar~\fail$)
is an edge  in {\smtasp}$_{F,\Pi}$ justified
by {\rd} or {\runf} (\rf, respectively). 
%Figure~\ref{fig:rulesdplll} presents the other transition rules of 
%{\smtaspl}$_{F,\Pi}$
%\begin{figure}
The other transition rules of {\smtaspl}$_{F,\Pi}$ follow:
\[
\begin{array}[t]{ll}
%\vspace{-1.5em}\\
\hbox{{\rupl}:}&
M||\Gamma  ~\lrar~ 
       M~l||\Gamma
  \hbox{~ if ~~~} 
  \left\{ \begin{array}{l} 
 C\vee l\in F \cup \Pi^{cl} \cup  \Gamma \hbox{ and ~}\\
 \ol C \subseteq M \\
  \end{array}\right. \\
%&\\
\vspace{-1em}\\
%\\
\hbox{{\rbj}: }& 
P~l^\dec~Q||\Gamma\lrar~ 
  P~l'||\Gamma
  \hbox{~ if ~} 
  \left\{ \begin{array}{l}
\hbox{$P~l^\dec~Q$ is inconsistent and }\\  
F,\Pi^o\models l'\vee \ol P
  \end{array}\right. 
\\
%&\\
\vspace{-1em}\\
\hbox{{\rl}:}&
M||\Gamma  ~\lrar~ 
       M||~ C,~\Gamma
  \hbox{~ if ~} 
\left\{ \begin{array}{l} 
\hbox{every atom in $C$ occurs in $F$ and}\\
\hbox{$F,\Pi^o\models C$.}\\
\end{array}\right. 
\\
%\hbox{{\rfcl}: }\\ 
%  M|| ~C,~ \Gamma ~\lrar~ 
%       M||\Gamma \\

\end{array}
\]
%\caption{The additional transition rules of the graph $\smtaspl_{F,\Pi}$.}\label{fig:rulesdplll}
%\end{figure}
We refer to the transition rules \rupl, \runf,
\rbj, {\rd}, and {\rf} of the graph  $\smtaspl_{F,\Pi}$  as \emph{basic}.
We say that a node in the graph is \emph{semi-terminal} 
if  no rule other than {\rl} is applicable to it.
We  omit the  word ``augmented'' before  ``state'' when this is
clear from a context.

The graph $\smtaspl_{F,\Pi}$ can be used  for deciding whether an
SM(ASP) theory $[F,\Pi]$ has a model.

\begin{proposition}\label{prop:clasp} For any SM(ASP) theory $[F,\Pi]$,
\begin{itemize}
%% \item [(a)] graph ${\sm}_\Pi$ is acyclic,
\item [(a)] %in every infinite path in ${\st}_\Pi$ either (i) 
%there is an infinite number of edges labeled {\rllp} and {\rflp}
%or (ii)  {\rr} does not increase periodically,
every path in $\smtaspl_{F,\Pi}$ contains only finitely many edges
justified by basic transition rules,
\item [(b)] for any  semi-terminal state $M||\Gamma$  of
  $\smtaspl_{F,\Pi}$ reachable from $\emptyset||\emptyset$, $M$ is a
  model of $[F,\Pi]$,
\item [(c)] {\fail} is reachable from $\emptyset||\emptyset$ in
  $\smtaspl_{F,\Pi}$ if and  only if  $[F,\Pi]$ has no models.
\end{itemize}
\end{proposition}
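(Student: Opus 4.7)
The plan is to prove (a), (b), (c) in order, reducing where possible to Proposition~\ref{prop:cm1} on the non-learning graph and adapting the termination argument from Nieuwenhuis et al.'s DPLL System with Learning.

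For (a), since \rl does not modify $M$, it suffices to bound the number of basic transitions on any path. I would assign to each record $M = L_1\ldots L_k$ (with $k \leq 2N$, $N = |At(F\cup\Pi)|$) a tuple $\mu(M)\in\{0,1,2\}^{2N}$ defined by $\mu(M)_i = 2$ if $i>k$, $\mu(M)_i = 1$ if $L_i$ is a decision literal, and $\mu(M)_i = 0$ if $L_i$ is a non-decision literal. Extend by letting $\mu(\fail)$ be a fresh minimum and order tuples lexicographically from position~$1$ under $0<1<2$. A case analysis shows each basic transition strictly decreases $\mu$: \rupl and \runf change $\mu_{k+1}$ from $2$ to $0$; \rd changes $\mu_{k+1}$ from $2$ to $1$; \rbj (sending $P\,l^\dec\,Q$ to $P\,l'$) changes $\mu_{|P|+1}$ from $1$ to $0$, which is the first position of difference under the lex order; \rf drops to the minimum. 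Since the codomain is finite and well-ordered, only finitely many basic transitions can occur on any path.

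For (b), let $M\dbar\Gamma$ be a semi-terminal state reachable from $\emptyset\dbar\emptyset$ with $M\neq\fail$. The invariant $F,\Pi^o\models\Gamma$ holds at every reachable state by the definition of augmented state. I would verify the defining conditions of a model of $[F,\Pi]$, using Proposition~\ref{prop17} to work with $\Pi^o$ throughout. \emph{Consistency of $M$}: an inconsistency with no decision literal triggers \rf, while an inconsistency with a decision literal triggers \rbj, where the required backjump clause $l'\vee\overline P$ is produced by the standard resolution-based analysis of the propagations that caused the conflict, yielding an $F,\Pi^o$-entailed clause. \emph{Completeness}: if an atom were unassigned, \rd would apply. \emph{$M\models F$}: if some $C\in F$ were falsified, completeness would make \rupl applicable on $C$. \emph{$M^{+}$ is an input answer set of $\Pi$}: mirroring the argument in Proposition~\ref{prop:cm1}(b), non-applicability of \runf ensures $\gus(M,\Pi^o)$ is contained in the set of atoms negated by $M$; combined with $M\models F\cup\Pi^{cl}$, this gives that $M$ is a model of $[F,\Pi^o]$, hence of $[F,\Pi]$.

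Part (c)'s ``if'' direction combines (a) and (b): from $\emptyset\dbar\emptyset$, keep applying any available basic transition; (a) guarantees termination, and (b) rules out a semi-terminal non-\fail outcome when $[F,\Pi]$ has no models, so the path must end in \fail. For ``only if'', suppose \fail is reached from a state $M\dbar\Gamma$ via \rf; then $M$ is inconsistent with no decision literals, and by induction on the construction of $M$ along the path every literal of $M$ is $F,\Pi^o$-entailed: \rupl uses a clause of $F\cup\Pi^{cl}\cup\Gamma$, entailed via the invariant; \runf uses an unfounded-set derivation, sound for input answer sets of $\Pi^o$; and the entailment premise of \rbj collapses to unconditional entailment once all decisions in $P$ are removed by subsequent backjumps. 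Since $M$ is inconsistent, $F,\Pi^o$ has no model, and Proposition~\ref{prop17} yields that $[F,\Pi]$ has no models. The main obstacle will be part (a): the subtlety is that \rbj simultaneously shrinks the record and ``demotes'' a decision literal to a non-decision one, so the lex-order measure must be arranged precisely so that both effects conspire to produce a strict decrease at the first differing position; the unboundedness of $\Gamma$ is harmless because the measure depends only on $M$ and \rl does not touch~$M$.
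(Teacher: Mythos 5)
Your overall architecture matches the paper's: part (a) by a decreasing measure on the record, part (b) by ruling out applicability of each basic rule at a semi-terminal reachable state, part (c) by combining (a) and (b) in one direction and a soundness invariant in the other. Your lexicographic measure for (a) is correct and in fact more self-contained than the paper, which simply defers (a) to an earlier thesis; the check that {\rbj} decreases the measure at position $|P|+1$ (a $1$ becoming a $0$, which is the first position of disagreement) is exactly the point that needs care, and you handle it, as well as the observation that {\rl} leaves the measure unchanged. Parts (b) and (c) follow the paper's decomposition; your induction along the path in the ``only if'' direction of (c) is in substance the paper's Lemma~\ref{lem:des_clasp} specialized to a record with no decision literals.

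The one substantive gap is in your treatment of consistency in part (b): you assert that an inconsistent reachable state containing a decision literal ``triggers {\rbj}, where the required backjump clause $l'\vee\ol{P}$ is produced by the standard resolution-based analysis of the propagations that caused the conflict.'' In the pure SAT setting this is indeed standard, but here it is the technical heart of the proposition and it is not automatic: conflict analysis can only resolve away a non-decision literal $l$ if $l$ has a clausal \emph{reason}, i.e., a clause $C\vee l$ with $F,\Pi^o\models C\vee l$ and $\ol{C}$ contained in the prefix of the record below $l$. For literals introduced by {\rupl} such a clause is at hand (it comes from $F\cup\Pi^{cl}\cup\Gamma$, all of which is entailed), but for a literal $\neg a$ introduced by {\runf} no such clause exists syntactically; the paper manufactures one from the unfounded set $U$ by choosing, for each body $B\in Bodies(\Pi^o,U)$ with $B^{pos}\cap U=\emptyset$, a literal of $\ol{s(B)}$ falsified by $M$, and then invokes the loop formula for $U$ to show the resulting clause is entailed by $\Pi^o$ (Lemma~\ref{prop:ereason} together with the Lemma on Loop Formulas, feeding into Lemma~\ref{lem:bj}). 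Without this construction the resolution process cannot even begin, or stalls, whenever the conflict involves {\runf}-derived literals, so you must supply it (or an equivalent) before claiming {\rbj} is applicable in every inconsistent reachable state with a decision literal. The same reason-construction is also what guarantees the termination condition of the analysis --- a derived conflicting clause with exactly one literal whose complement lies at the deepest decision level --- which is what lets you read off $l'$ and $P$.
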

On the one hand, Proposition~\ref{prop:clasp}~(a) asserts that if we 
construct a path from $\emptyset||\emptyset$ so that  basic transition 
rules periodically appear in it then some semi-terminal state is 
eventually reached. On the other hand, parts (b) and (c) of 
Proposition~\ref{prop:clasp} assert that as soon as a semi-terminal state 
is reached the problem of deciding whether $[F,\Pi]$ has a model is 
solved. In other words, Proposition~\ref{prop:clasp} shows that the 
graph $\smtaspl_{F,\Pi}$ gives rise to a class of correct algorithms
for computing models of an SM(ASP) theory $[F,\Pi]$. It gives a
proof of correctness  to every SM(ASP) solver in this class and a
proof of termination under the assumption that  basic transition 
rules periodically appear in a path constructed from 
$\emptyset||\emptyset$. 
%In the next section we describe the algorithms
%of three  SM(ASP) solvers, i.e., {\cmodels}, {\clasp}, and {\minisatid}, by means of this graph.

Nieuwenhuis et al. \citeyear{nie06} proposed the transition rules 
to model such techniques as forgetting and restarts.
The graph $\smtaspl_{F,\Pi}$ can easily be extended with such rules.

\section{Abstract {\cmodels}, {\clasp} and {\minisatid}}

We can view a path in the graph ${\smtaspl}_{F,\Pi}$ 
as a description of a process of search for a model
of an SM(ASP) theory $[F,\Pi]$ by applying transition rules.
Therefore, we can characterize the algorithm
of a solver that utilizes the transition rules of ${\smtaspl}_{F,\Pi}$ 
by describing a strategy for choosing a path in this graph. 
A strategy can be based, in particular, on assigning priorities to
 transition rules of ${\smtaspl}_{F,\Pi}$, 
so that a solver  never applies a  rule in a state 
if  a rule with higher priority is applicable
to the same state.

We  use this approach to describe and compare the algorithms
implemented in the solvers {\cmodels}, {\clasp} and {\minisatid}.
We stress that we talk here about characterizing and comparing
algorithms and not their specific implementations in the solvers.
We refer to these algorithms as \emph{abstract} {\cmodels}, 
{\clasp} and {\minisatid}, respectively. 
Furthermore, we only discuss the abstract {\minisatid}
  for the case of the total {\pcid} theories whereas the {\minisatid} system
  implements additional {\sl totality check} propagation rule to deal
  with the non-total theories. 
Given a program~$\Pi$, abstract {\cmodels} and abstract {\clasp} 
%start by constructing 
construct first $\i{ED-Comp}(\Pi)$. Afterwards, they search the graph 
${\smtaspl}_{\is{ED-Comp}(\Pi),\Pi}$ for a path to a semi-terminal state. 
In other words, both algorithms, while in a node of the graph
${\smtaspl}_{\is{ED-Comp}(\Pi),\Pi}$, progress by selecting one of 
the outgoing edges. By Proposition \ref{prop:clasp} and Corollary 
\ref{cor:conn}, each algorithm is indeed a method to compute answer 
sets of programs.

However, abstract {\cmodels} selects edges according to the priorities
on the transition rules of the graph that are set as follows:
\[
\ba l
\rbj,\rf>>
\rup>>
\rd>>
\runf
\hbox{,}
\ea 
\]
while abstract {\clasp} uses a different prioritization:
\[
\ba l
\rbj,\rf>>
\rup>>
\runf>>
\rd
\hbox{.}
\ea 
\]
The  difference between the algorithms
boils down to when the rule $\runf$ is used.

We now describe the algorithm behind the {\pcid} solver 
{\minisatid}~\cite{mar08} for total {\pcid} theories --- the 
abstract {\minisatid}. Speaking precisely, {\minisatid} assumes that
the program $\Pi$ of the input {\pcid} theory $(F,\Pi)$ is in the 
\emph{definitional
normal form} \cite{mar09}. Therefore, in practice {\minisatid} is always 
used with a simple preprocessor that converts programs into the 
definitional normal form. 
We will assume here that this preprocessor is a part of {\minisatid}. Under 
this assumption, given a {\pcid} theory $(F,\Pi)$, {\minisatid} can be 
described as constructing the completion $\i{ED-Comp}(\Pi^o)$ (the new 
atoms are introduced by the preprocessor when it converts $\Pi$ into the 
definitional normal form, the completion part is performed by the 
{\minisatid} proper), and then uses the transitions of the graph 
${\smtaspl}_{\is{ED-Comp}(\Pi^o)\cup F,\Pi^o}$ to search for a path to a 
semi-terminal state. In other words, the graph 
${\smtaspl}_{\is{ED-Comp}(\Pi^o)\cup F,\Pi^o}$ represents the
abstract {\minisatid}.  The strategy used by the algorithm follows the 
prioritization:
\[
\ba l
\rbj,\rf>>
\rup>>
\runf>>
\rd
\hbox{.}
\ea
\]
By Propositions \ref{prop17} and \ref{prop:pcidsmasp},
the algorithm indeed computes 
models of total {\pcid} theories. 
%The actual {\minisatid} solver  
%also computes models 
%for non-total theories.

%\textcolor{red}{
Systems {\cmodels}, {\clasp}, and {\minisatid} implement conflict-driven 
backjumping and learning. They apply the transition rule {\rl} 
only when in a non-semi-terminal state reached by an application of {\rbj}. 
Thus, the rule {\rl} does not differentiate the algorithms and so we have 
not taken it into account when describing these algorithms.
%}

%Also, systems {\cmodels}, {\clasp}, and {\minisatid} always apply the
%transition rule {\rl} in a non-semi-terminal state reached by an
%application of {\rbj}, because they implement conflict-driven
%backjumping and learning.

\section{PC(ID) Theories as Logic Programs with Constraints}
\label{sec:6}

For a clause $C= \neg a_1\lor\ldots\lor\neg a_l \lor a_{l+1}\lor\ldots
\lor a_m$ we write $C^r$ to denote the corresponding rule constraint
\[
\leftarrow a_1,\ldots, a_l, \nt a_{l+1},\ldots, \nt a_m\hbox{.}
\]
For a set $F$ of clauses, we define $F^r=\{C^r \stt C\in F\}$. Finally, 
for a {\pcid} theory $(F,\Pi)$ we define a logic program 
$\pi(F,\Pi)$ by setting
\[
\pi(F,\Pi)=\Pi^o\cup F^r\mbox{.}
\]
The representation of a {\pcid} theory $(F,\Pi)$ as  $\pi(F,\Pi)$ is
similar to the translation of FO(ID) theories into logic programs 
with variables given by Mari{\"e}n et al.~\citeyear{maar04}. The difference 
is in the way atoms are ``opened.'' We do it by means of rules of the
form \hbox{$a \leftarrow \nt \nt a$}, while Mari{\"e}n et al. use pairs of
rules $a \leftarrow \nt a^*$ and $a^* \leftarrow \nt a$.

There is a close relation between models of a {\pcid} theory $(F,\Pi)$
and answer sets of a program $\pi(F,\Pi)$.

%\begin{proposition}\label{prop:model-as}
%For every {\pcid} theory $(F,\Pi)$, if $M$ is a model of $(F,\Pi)$ 
%then $M^{+}$ is an answer set of~$\pi(F,\Pi)$. 
%\end{proposition}
%
%\begin{proposition}\label{prop:model-as2}
%For a consistent and complete set
%$M$ of literals and a {\pcid} theory~$(F,\Pi)$,
%if $M^{+}$ is an answer set of $\pi(F,\Pi)$ and
%$W^{fix}_{\Pi^o}(M^{\opi})$ is total then 
%$M$ is $W^{fix}(M^{\opi})$
%\end{proposition}
%
%The result below immediately follows from Propositions~\ref{prop:model-as} and~\ref{prop:model-as2}. 
\begin{proposition}\label{pr:eqt}
For a total {\pcid} theory~$(F,\Pi)$ and a consistent and complete (over
$At(F\cup \Pi)$) set $M$ of literals,
 $M$ is a model of $(F,\Pi)$ if and only if $M^{+}$ is an answer set of $\pi(F,\Pi)$. 
\end{proposition}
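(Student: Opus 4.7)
The plan is to chain three equivalences, exploiting the fact that for total {\pcid} theories the semantics coincides with the SM(ASP) semantics. First I would invoke Proposition~\ref{prop:pcidsmasp}(b): for a total {\pcid} theory $(F,\Pi)$, $M$ is a model of $(F,\Pi)$ iff $M$ is a model of the SM(ASP) theory $[F,\Pi]$. By the definition of an SM(ASP) model, this reduces the claim to showing that for a consistent and complete set $M$ over $At(F\cup \Pi)$, the conjunction ``$M\models F$ and $M^{+}$ is an input answer set of $\Pi$'' is equivalent to ``$M^{+}$ is an answer set of $\pi(F,\Pi)=\Pi^o\cup F^r$''.

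Next I would prove the key auxiliary fact: $M^{+}$ is an input answer set of $\Pi$ iff $M^{+}$ is an answer set of $\Pi^o$. To see this, I would compute both reducts with respect to $M^{+}$. On the one hand, each opening rule $a\leftarrow \nt\nt a$ (for $a\in \opi$) reduces to the fact $a\leftarrow$ when $a\in M^{+}$ and is deleted otherwise, so the reduct of $\Pi^o$ equals $\Pi^{M^{+}}\cup \{a\leftarrow\,:\,a\in M^{+}\cap \opi\}$. On the other hand, since $M$ is complete over $At(F\cup \Pi)$ and $Head(\Pi)\subseteq At(\Pi)$, we have $M^{+}\setminus Head(\Pi)=M^{+}\cap \opi$, so the reduct of $\Pi\cup(M^{+}\setminus Head(\Pi))$ (the program whose answer sets define input answer sets of $\Pi$) with respect to $M^{+}$ is the same set of rules. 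The least-model characterization of answer sets yields the equivalence; one also checks that the atom universes match, which follows from $At(\Pi^o)=At(F\cup \Pi)$ and $M^{+}\subseteq At(F\cup \Pi)$.

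Third, I would handle the constraints $F^r$ by the standard observation that $M$ satisfies the clause $\neg a_1\lor\cdots\lor\neg a_l\lor a_{l+1}\lor\cdots\lor a_m$ iff $M^{+}$ satisfies the corresponding rule constraint $\leftarrow a_1,\ldots,a_l,\nt a_{l+1},\ldots,\nt a_m$. Since constraints cannot fire any atom, their reduct simply imposes ``no constraint body is satisfied'' on the least model, so $M^{+}$ is an answer set of $\Pi^o\cup F^r$ iff $M^{+}$ is an answer set of $\Pi^o$ and $M\models F$.

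Chaining these three equivalences delivers the statement. The main obstacle I anticipate is the careful bookkeeping in the second step: verifying that the atom universes of $\Pi^o$ and of $\Pi\cup(M^{+}\setminus Head(\Pi))$ line up on $M^{+}$, and that completeness of $M$ over $At(F\cup \Pi)$ is precisely what forces $M^{+}\setminus Head(\Pi)=M^{+}\cap \opi$ so that the two reducts coincide. Once this is in place, the remaining steps are essentially bookkeeping about reducts and the standard semantics of integrity constraints.
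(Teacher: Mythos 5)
Your proposal is correct and follows essentially the same route as the paper: reduce to the SM(ASP) theory $[F,\Pi]$ via Proposition~\ref{prop:pcidsmasp}, split off the constraints $F^r$ by the standard observation, and bridge between (input) answer sets of $\Pi$ and answer sets of $\Pi^o$. The only difference is that where you re-derive that bridge by computing and comparing reducts, the paper simply cites Proposition~\ref{prop:input}(a) together with Proposition~\ref{prop17}, whose proofs contain exactly the reduct computation you describe.
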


%There is a tight relation between models
%of a total {\pcid} theory $(F,D)$ and
%input answer sets of the SM(ASP) theory $(F,\Pi^o)$:
%
%\begin{proposition}\label{prop:model-as}
%For any {\pcid} theory $(F,D)$,
%any model of $(F,D)$ is an answer set of $\pi(F,D)$. 
%\end{proposition}
%
%\begin{proposition}\label{prop:model-as2}
%For a consistent and complete set
%$M$ of literals and a {\pcid} theory $(F,D)$,
%if $M^{+}$ is an answer set of $\pi(F,D)$ and
%$W^{fix}(M^{\opi})$ is total then 
%$M$ is $W^{fix}(M^{\opi})$
%\end{proposition}
%
%The result below immediately follows from Propositions~\ref{prop:model-as} and~\ref{prop:model-as2}. 
%\begin{cor}\label{pr:eqt}
%For a consistent and complete set $M$ of literals and 
%a total {\pcid} theory $(F,D)$,
%$M$ is a model of $(F,D)$ iff $M^{+}$ is an answer set of $\pi(F,D)$. 
%\end{cor}

A \emph{choice rule}
construct $\{a\}$~\cite{nie00} of 
the {\lparse}\footnote{\tt http://www.tcs.hut.fi/Software/smodels/ .} and 
{\gringo}\footnote{\tt http://potassco.sourceforge.net/ .} languages
 can be seen as an abbreviation for a rule
\hbox{$a\ar\ not\ not\ a$}~\cite{fer05b}.
%\footnote{According to the 
%definition of the semantics of choice rules, proposed by Ferraris and 
%Lifschitz~\citeyear{fer05b}, $\{a\}$ is an abbreviation for  
%the following rule with nested expressions $a\vee not\ a$ that
%is strongly equivalent%~\cite{lif01} 
%to  $a\ar\ not\ not\ a$.}  
Thus, in view of Proposition~\ref{pr:eqt}, 
any answer set solver implementing language of {\lparse} or {\gringo} 
is also a {\pcid} solver (an input total {\pcid} theory $(F,\Pi)$ 
needs to be translated into $\pi(F,\Pi)$).

The reduction implied by Proposition~\ref{pr:eqt} by itself does not 
show how to relate particular solvers. However, we recall that abstract 
{\minisatid} is captured by the graph 
${\smtaspl}_{{\is{ED-Comp}(\Pi^o)}\cup F,\Pi^o}$. Moreover, we have the
following property.

\begin{proposition}\label{prop:rel3}
For a {\pcid} theory $(F,\Pi)$, we have
\[
{\smtaspl}_{{\is{ED-Comp}(\pi(F,\Pi))},\pi(F,\Pi)}=
{\smtaspl}_{{\is{ED-Comp}(\Pi^o)}\cup F,\Pi^o}\hbox{.}
\]
\end{proposition}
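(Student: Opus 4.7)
\medskip
\noindent
\textbf{Proof plan for Proposition~\ref{prop:rel3}.}

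The plan is to show equality of the two graphs by (i) proving that their underlying SM(ASP) theories agree on what clauses they contain, what their models are, and what their unfounded sets are, and then (ii) verifying rule by rule that each of the five transition rules defining the graphs produces the same edges. To keep the notation readable, I would write $\Sigma$ for $\Pi^o$ and $\Delta$ for $\pi(F,\Pi)=\Sigma\cup F^r$, so the left-hand side is ${\smtaspl}_{\i{ED-Comp}(\Delta),\Delta}$ and the right-hand side is ${\smtaspl}_{\i{ED-Comp}(\Sigma)\cup F,\Sigma}$.

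First I would establish the key identity
\[
\i{ED-Comp}(\Delta)=\i{ED-Comp}(\Sigma)\cup F.
\]
The crucial observations are that $F^r$ consists of constraints, so (a) $Head(\Delta)=Head(\Sigma)$, (b) for every atom $a$ we have $Bodies(\Delta,a)=Bodies(\Sigma,a)$, and (c) $At(\Delta)=At(\Sigma)$ since $\Sigma=\Pi_{At(F\cup\Pi)}$ already contains $At(F)$. Consequently, items 2--5 in the definition of $\i{ED-Comp}$ produce exactly the same clauses for $\Delta$ as for $\Sigma$, while item 1 contributes $\Delta^{cl}=\Sigma^{cl}\cup F$. Next I would compare the sets $F\cup\Pi^{cl}$ that drive the rule \rupl\ in each graph: using the identity just proved,
\[
\i{ED-Comp}(\Delta)\cup\Delta^{cl}=\i{ED-Comp}(\Sigma)\cup F\cup\Sigma^{cl}=\i{ED-Comp}(\Sigma)\cup F,
\]
so unit propagation triggers on exactly the same clauses on both sides.

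Then I would check that the two SM(ASP) theories have the same models, which implies that the entailment relations $F,\Pi^o\models\cdot$ used in \rbj\ and \rl\ and in the definition of an augmented state coincide. A set $M$ of literals is a model of $[\i{ED-Comp}(\Delta),\Delta]$ iff $M\models\i{ED-Comp}(\Delta)$ and $M^{+}$ is an input answer set of $\Delta=\Sigma\cup F^r$. Because adding a set of constraints $F^r$ to a program only filters those answer sets that violate $F^r$, and because $M^{+}$ satisfies $F^r$ (as a set of constraints) iff $M\models F$, this is equivalent to: $M\models\i{ED-Comp}(\Sigma)\cup F$ and $M^{+}$ is an input answer set of $\Sigma$, which is exactly the condition for $M$ to be a model of $[\i{ED-Comp}(\Sigma)\cup F,\Sigma]$. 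Note that the redundant ``$M\models F$'' is already subsumed by $M\models\i{ED-Comp}(\Delta)\supseteq F$.

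Finally I would go through the five transition rules. For \rd\ and \rf\ there is nothing to check, since they depend only on $M$. For \rupl\ the clause set is identical by the computation above. For \rbj\ and \rl\ the conditions are purely in terms of the entailment relation, which coincides by the previous step; this also ensures that both graphs have the same augmented states, since $\Gamma$ must be entailed by the theory. For \runf\ I would use that unfounded sets are defined via $Bodies(\Pi^o,a)$ for atoms $a$, and since the constraints in $F^r$ have head $\bot$, they contribute no bodies to any atom; hence unfounded sets on $M$ with respect to the SM(ASP) ``$\Pi^o$'' of $\Delta$ agree with those with respect to the SM(ASP) ``$\Pi^o$'' of $\Sigma$ (one also has to note that the opening rules added to form the SM(ASP) ``$\Pi^o$'' are the same in both cases, because $At(F\cup\Pi)$ and $Head(\cdot)$ coincide on both sides, and the fresh atoms $f_B$ introduced by $\i{ED-Comp}$ are not heads in either program and are therefore opened identically).

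The main obstacle is purely notational: the symbol $\Pi^o$ is used both as a shorthand inside a fixed SM(ASP) theory and as an object appearing inside another theory, so care is needed to keep track of which ``opening'' is being applied. Once the identification $\i{ED-Comp}(\Delta)=\i{ED-Comp}(\Sigma)\cup F$ and the observation about constraints not affecting heads, bodies of atoms, or unfounded sets are in place, the rule-by-rule verification is routine.
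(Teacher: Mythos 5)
Your proposal is correct and follows essentially the same route as the paper's proof, which rests on exactly the two facts you isolate: the identity $\i{ED-Comp}(\pi(F,\Pi))=\i{ED-Comp}(\Pi^o)\cup F$ and the coincidence of unfounded sets (since the constraints $F^r$ contribute no heads or bodies). You are in fact more thorough than the paper, which leaves implicit the rule-by-rule check and the agreement of the entailment relation used by \rbj{} and \rl{}; your explicit verification that the two SM(ASP) theories have the same models is a welcome filling-in of that gap.
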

The graph ${\smtaspl}_{{\is{ED-Comp}(\pi(F,\Pi))},\pi(F,\Pi)}$ captures
the way {\clasp} works on the program $\pi(F,\Pi)$. In addition, the 
{\minisatid} and {\clasp} algorithms use the same prioritization. Thus,
Proposition \ref{prop:rel3} implies that the abstract {\clasp} used as 
a {\pcid} solver coincides with the abstract {\minisatid}.

\section{Related Work and Discussion}

Lierler~\citeyear{lier10} introduced the graphs {\sc sml} and {\sc gtl}
that extended the graphs {\sc sm} and {\sc gt}~\cite{lier10}, respectively, with
transition rules {\rbj} and \rl.
The graph {\sc sml} was used to characterize the computation of such
answer set solvers implementing learning as {\sc smodels$_{cc}$}\footnote{\tt
	http://www.nku.edu/$\sim$wardj1/Research/smodels\_cc.html
  .}~\cite{smodelscc} and {\sc sup}\footnote{\tt http://www.cs.utexas.edu/users/tag/sup
  .}~\cite{lier10}
whereas the graph {\sc gtl} was used to characterize {\cmodels}. These
graphs are strongly
related to our graph {\smtaspl} but they are not appropriate for describing the computation behind 
answer set solver  {\clasp} or {\pcid} solver {\minisatid}. The graph 
{\sc sml} reflects only propagation steps based on a program whereas 
{\sc clasp} and {\sc minisat(id)} proceed by considering both the program
and a propositional theory. The graph {\sc gtl}, on the other hand, does 
not seem to provide a way to imitate the behavior of the {\runf} rule 
in the {\smtaspl} graph. 

Giunchiglia and Maratea~\citeyear{giumar05} studied the relation
between the answer set solver {\smodels} and the {\dpll} procedure
for the case of tight programs by means of pseudocode
analysis. Giunchiglia et al~\citeyear{giu08} continued this work by
comparing answer set solvers {\smodels}, {\sc dlv}\footnote{\tt
  http://www.dbai.tuwien.ac.at/proj/dlv/ .}~\cite{eit97}, and {\cmodels}
via  pseudocode. In this paper we use a different approach 
to relate solvers that was 
proposed by Lierler~\citeyear{lier10}. That is, we use
graphs to represent the algorithms implemented by solvers, and study 
the structure of these graphs to find how the corresponding solvers are
related. We use this
method to state the relation between the answer set solvers
{\cmodels}, {\clasp},
and the {\pcid} solver {\minisatid} designed for different knowledge
representation formalisms.

Gebser and Schaub~\citeyear{gebsch06iclp} introduced a deductive system 
for describing inferences involved in computing answer sets by tableaux
methods. The abstract framework presented in this paper
can be viewed as a deductive system also, but a very different one.
For instance, we describe backtracking and backjumping by inference
rule, while the Gebser-Schaub system does not. Also the Gebser-Schaub
system does not take learning into account.
Accordingly, the derivations considered in this paper describe a search 
process, while derivations in the Gebser-Schaub system do not. Further,
the abstract framework discussed here does not
have any inference rule similar to Cut; this is why its derivations are
paths rather than trees.

Mari{\"e}n~\citeyear{mar09} (Section~5.7) described a
{\sc MiniSat(ID)} transition system to model a computation behind
the {\pcid} solver  {\minisatid}. We recall that we modeled the abstract
{\minisatid} with the graph {\smtaspl}. The graphs {\smtaspl} and 
{\sc MiniSat(ID)} are defined using different sets of nodes and transition 
rules. For instance, {\smtaspl} allows  states containing inconsistent
sets
of literals whereas the {\sc MiniSat(ID)} graph considers consistent 
states only. Due to this difference the {\sc MiniSat(ID)} graph requires 
multiple versions of ``backjump'' and ``fail'' transition rules. 

We used transition systems to characterize algorithms for computing 
answer sets of logic programs and models of {\pcid} theories. 
%We note that 
These transition systems are also suitable for formal comparison of the 
strength or power of reasoning methods given  rules that specify them. 
An approach to do so was proposed by Mari{\"e}n~\citeyear{mar09} 
(Section~5.7), who introduced the concept of \emph{decide-efficiency}
for such analysis. We outline below how standard concepts of \emph{proof 
complexity}~\cite{coo79} can be adapted to the setting of
transition systems. 

%Let $\at$ be an infinite set of atoms. We write $\mf$ to denote a 
%propositional formalism over~$\at$ (in our case, SM(ASP)). 
%With some 
%abuse of notation, we also write $\mf$ for the infinite set of finite 
%theories in the formalism. 
%A \emph{propositional proof system}~\cite{coo79} 
%for $\mf$ is a polynomial-time computable binary predicate~$S$ such that 
%a theory $T$ in $\mf$ is unsatisfiable 
%if and only if there exists a \emph{proof} $p$ such that $S(T,p)$ holds.
%For a propositional formalism $\mf$, a proof system $S$ and a theory $T$ 
%in $\mf$, the \emph{complexity} of $T$ in $S$ is the smallest
%size of a proof $p$ such that $S(T,p)$ holds. 
%This notion was originally defined for 
%propositional logic. The definition is, however, applicable to any 
%propositional formalism
%%, in particular, the ones we considered in the paper, 
%as long as the notions of \emph{unsatisfiability} and \emph{proof} 
%are defined. 

%It turns out that transition systems determine proof systems. 
Let $\at$ be an infinite set of atoms.
We
define a \emph{node over} $\at$ to be a symbol $\fail$, or a finite 
sequence of literals over~$\at$ with annotations.
%For instance, the empty sequence $\emptyset$
%is a node and so is the sequence $\neg a^\dec$, which is a node composed 
%of an atom $a$ annotated with $\neg$ and $\dec$. 
%We denote by $\mr$ an infinite set of finite 
%sequences of nodes over $\at$. 
For a propositional formalism $\mf$ over $\at$, a \emph{proof procedure} 
$\mp_\mf$ consists of graphs $G_T$, where $T$ ranges over all theories 
in $\mf$, such that for every theory $T$
%\begin{enumerate}
%\item 
(i) $G_T$ is composed of nodes over $\at$ and (ii)
%\item 
$T$ is \emph{unsatisfiable} if and only if there is a path $p$ 
in $G_T$ from the empty (\emph{start}) node to the $\fail$ node.
We call each such path~$p$ \emph{a proof}. 
%\end{enumerate}
We say that a proof system 
$S$ is \emph{based} on a proof procedure $\mp_\mf$ if
%\begin{enumerate}
%\item 
(i) $S\subseteq \mf \times \mr$, where $\mr$ denotes the set of all
finite sequences of nodes over $\at$, and (ii)
%\item 
$S(T,p)$ holds if and only if $p$ is a proof in the graph $G_T$ in
$\mp_\mf$.
%\end{enumerate}
Predicate $S$ is indeed a proof system in the sense of Cook \citeyear{coo79}
because (i) $S$ is polynomial-time
computable, and (ii)~$T$ is unsatisfiable if and only if there exists a 
proof $p$ such that $S(T,p)$ holds.
 
In this sense, each of the graphs (transition systems) we 
introduced in this paper can be regarded as a proof procedure for SM(ASP) 
(for those involving the rule {\rl}, under additional assumptions to 
ensure the rule can be efficiently implemented). Thus, transition systems 
determine proof systems. Consequently, they can be compared, as well as 
solvers that they capture, in terms of the complexity of the 
corresponding proof systems. 

\section{Conclusions}

In the paper, %First,
we proposed a formalism SM(ASP) that can be regarded as a common 
generalization of (clausal) propositional logic, ASP, and the logic 
{\pcid}. The formalism offers 
an elegant \emph{satisfiability modulo theories} perspective on the 
latter two. We present several characterizations of these formalisms 
in terms of SM(ASP) theories that differ in the explicitly identified 
``satisfiability'' component. 
Next, we proposed transition systems for SM(ASP) to provide abstract 
models of SM(ASP) model generators. The transition systems offer a clear
and uniform framework for describing model generation algorithms in 
SM(ASP). As SM(ASP) subsumes several propositional formalisms, such 
a uniform approach provides a general proof of correctness and termination
that applies to a broad class of model generators designed for these 
formalisms. It also allows us to describe in precise mathematical terms 
relations between algorithms designed for reasoning with different 
logics such as propositional logic, logic programming under answer-set
semantics and the logic {\pcid}, the latter two studied in detail in 
the paper. For instance, our results imply that at an abstract level of
transition systems, {\clasp} and {\minisatid} are essentially identical. 
Finally, we note that this work gives the first description of {\clasp}
in the abstract framework rather than in pseudocode. Such high level 
view on state-of-the-art solvers in different, yet, related 
propositional formalisms will further their understanding, and help port 
advances in solver technology from one area to another.

\section*{Acknowledgments}
We are grateful to Marc Denecker and Vladimir Lifschitz for useful
discussions. We are equally grateful
to the reviewers who helped eliminate minor technical problems and 
improve the presentation.
Yuliya Lierler was supported  by a CRA/NSF 2010 Computing Innovation 
Fellowship. Miroslaw Truszczynski was supported by the NSF grant IIS-0913459.

%\bibliographystyle{acmtrans}
%\bibliography{bib}

\newpage
\section*{Appendix: Proofs}

\subsection{Proof of Proposition~\ref{prop:defrel}}

We start with some additional notation and several lemmas.

Let $N$ be a set of literals. By $|N|$ we denote a set of atoms 
occurring in $N$. For instance $|\{a,~\neg b,~c\}|=\{a,~b,~c\}$.
Further, by $ch(N)$ we denote a set of rules of the form $a\ar\ not
\ not\ a$, where $a\in|N|$. 

By a \emph{program literal} we mean expressions $a$, $not\ a$ and 
$not\ not\ a$, where $a$ is an atom. For a program literal $l$,
we set $s(l)=a$, if $l= a$ or $l=not\ not\ a$, and $s(l)=\neg a$, if 
$l=not\ a$. For  a set $B$ of body literals, we define $s(B)=\{s(l)\stt
l\in B\}$.
If $\Pi$ is a program and $N$ is a set of literals, by 
$\Pi(N)$ we denote the program obtained from $\Pi$ by removing each rule
whose body contains a program literal $l$ such that $\overline{s(l)} \in
N$, and deleting from the bodies of all rules in $\Pi$ every program
literal $l$ such that $s(l)\in N$. 
%\begin{enumerate}
%%\item each rule whose body is not satisfied by $N$,
%\item every literal $not~ a$, where $N\models \neg a$, and
%\item every literal $l=a$ or $not\ not a$, where $N\models a$.
%\end{enumerate}

\begin{lemma}\label{lem:w1}
Let $\Pi$ be a logic program and $N$ a consistent set of literals such 
that $|N|\cap Head(\Pi) =\emptyset$. For every consistent set $M$ of 
literals such that $|N|\cap |M|=\emptyset$,
$$
%\ba l
\{a\stt a\ar B\in\Pi\cup ch(N)  \hbox{ and } s(B) \subseteq M\cup N\}
\setminus N=
  \{a\stt a\ar B\in\Pi(N)  \hbox{ and } s(B)\subseteq N\}.
%\ea
$$
\end{lemma}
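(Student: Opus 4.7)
The plan is to prove the equality by establishing the two set inclusions separately. Three ingredients will carry the argument: (i) the hypothesis $|N| \cap Head(\Pi) = \emptyset$, which guarantees that the head of any rule from $\Pi$ automatically lies outside $N$; (ii) the disjointness $|N| \cap |M| = \emptyset$ together with the individual consistency of $M$ and $N$, which implies that $M \cup N$ is itself consistent (so no literal in $s(B) \subseteq M \cup N$ can have its dual in $N$); and (iii) the definition of $\Pi(N)$, under which a rule $a \leftarrow B$ of $\Pi$ gives rise to $a \leftarrow B' \in \Pi(N)$ if and only if no $l \in B$ satisfies $\overline{s(l)} \in N$, in which case $B' = \{l \in B : s(l) \notin N\}$.

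For the inclusion $\subseteq$, I would fix $a$ in the left-hand side, witnessed by a rule $a \leftarrow B \in \Pi \cup ch(N)$ with $s(B) \subseteq M \cup N$ and $a \notin N$, and split on the origin of the rule. If the rule comes from $ch(N)$, then $B = \{not\ not\ a\}$ and $a \in |N|$; satisfaction of the body forces $a \in M \cup N$, but $|N| \cap |M| = \emptyset$ rules out $a \in M$, so $a \in N$, contradicting $a \notin N$. Hence only rules from $\Pi$ can witness the membership, in which case $a \in Head(\Pi)$ together with $|N| \cap Head(\Pi) = \emptyset$ gives $a \notin N$ for free. Consistency of $M \cup N$ then forbids $\overline{s(l)} \in N$ for any $l \in B$, so the rule survives in $\Pi(N)$ with reduced body $B' = \{l \in B : s(l) \notin N\}$, and $s(B') = s(B) \setminus N \subseteq (M \cup N) \setminus N \subseteq M$, giving the desired membership in the right-hand side.

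For the inclusion $\supseteq$, I would take a witnessing rule $a \leftarrow B' \in \Pi(N)$ with $s(B') \subseteq M$, lift it back to the original rule $a \leftarrow B \in \Pi$ from which it arose, and verify the two required conditions for the left-hand side: $a \notin N$ is immediate from $a \in Head(\Pi)$ and $|N| \cap Head(\Pi) = \emptyset$, while $s(B) \subseteq M \cup N$ follows by splitting each $l \in B$ into those deleted during the reduction to $B'$ (for which $s(l) \in N$ by construction of $\Pi(N)$) and those retained in $B'$ (for which $s(l) \in s(B') \subseteq M$). The main obstacle, such as it is, is careful bookkeeping between the three kinds of program literals $a$, $not\ a$, $not\ not\ a$ and their classical images under $s$; once this correspondence is kept straight and the consistency of $M \cup N$ is invoked at the right moment, both inclusions collapse to the set-theoretic manipulations sketched above.
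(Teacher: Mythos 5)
Your proof is correct and follows essentially the same route as the paper's: rule out witnesses from $ch(N)$ using $|N|\cap Head(\Pi)=\emptyset$ and $|N|\cap|M|=\emptyset$, then pass between a rule $a\ar B\in\Pi$ with $s(B)\subseteq M\cup N$ and its reduct $a\ar B'\in\Pi(N)$ with $s(B')\subseteq M$ in each direction. You are in fact slightly more careful than the paper in explicitly invoking the consistency of $M\cup N$ to check that the rule is not deleted when forming $\Pi(N)$, and you correctly prove the intended statement with $s(B)\subseteq M$ on the right-hand side (the lemma as printed has a typo, $s(B)\subseteq N$), exactly as the paper's own proof does.
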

\begin{proof}
Let $c\in \{a\stt a\ar B\in\Pi\cup ch(N) \hbox{ and } s(B)\subseteq
M\cup N\} \setminus N$. Let $c\in|N|$. The only rule in $\Pi\cup ch(N)$
with $c$ as the head is $c \ar not\ not\ c$. It follows that $c\in M\cup N$.
Since $|N|\cap|M|=\emptyset$, $c\in N$, a contradiction. Thus, $c\notin |N|$
and there is a rule $c\ar B\in \Pi$ such that $s(B)\subseteq M\cup N$. Let 
$B'$ be what remains when we remove from $B$ all expressions $l$ such that 
$s(l)\in N$. The rule $c\ar B'\in \Pi(N)$ and $s(B')\subseteq M$. It follows 
that $c\in \{a\stt a\ar B\in\Pi(N)  \hbox{ and } s(B')\subseteq M\}$. 

Conversely, let $c\in\{a\stt a\ar B\in\Pi(N)  \hbox{ and } s(B) \subseteq
M\}$. It follows that $c\notin |N|$ and so, $c\notin N$. Moreover, there 
is a rule $c\ar B'\in \Pi(N)$ such that $s(B')\subseteq M$. By the 
definition of $\Pi(N)$, there is a rule $c\ar B\in \Pi$ such that $s(B)
\subseteq M\cup N$. Thus, $c\in\{a\stt a\ar B\in\Pi\cup ch(N) \hbox{ and } 
s(B)\subseteq M\cup N\} \setminus N$.
\end{proof}

%\begin{lemma}\label{lem:w2.1}
%Let $\Pi$ be a program and $N$ a consistent set of literals such
%that $|N|\cap Head(\Pi)=\emptyset$. For every
%consistent set of literals $M$ such that $N\subseteq M$ and for
%every unfounded set $U$ on $M$ w.r.t. $\Pi\cup ch(N)$, $U\cap N^{+}=
%\emptyset$.
%\end{lemma}
%\begin{proof}
%Let us assume, contrary to the claim, that there is a consistent set $M$
%of literals such that $N\subseteq M$ and for some unfounded set $U$ on 
%$M$ w.r.t. $\Pi\cup ch(N)$, $U\cap N^{+}\neq \emptyset$. Let $a\in 
%U\cap N^{+}$. From the definition of an unfounded set it follows 
%that for every body $B\in Bodies(\Pi\cup ch(N),a)$, $\overline{s(B)}\cap M
%\not=\emptyset$ or $U\cap B^{+}\neq\emptyset$. Since $a\in N^{+}$, the 
%only rule in $\Pi\cup ch(N)$ with $a$ as the head is $a\ar \ not\ not\ a.$
%Thus, $\neg s( not\ not\ a)\in M$, that is, $\neg a \in M$. But $a\in N$ and
%so, $a\in M$, a contradiction with the consistency of $M$.
%\end{proof}

Let $N$ be a set of literals. We define $N^{-}=\{a\stt \neg a\in N\}$.

\begin{lemma}\label{lem:w2}
For a  logic program $\Pi$, a consistent set $N$ of literals such that 
$|N|\cap Head(\Pi)=\emptyset$, and a consistent set $M$ of literals 
such that $|M|\cap |N|=\emptyset$, 
%such that $M^{+}\cap N^{-}=\emptyset$ and $M^{-}\cap N^{+}=\emptyset$, 
$\gus(M\cup N,\Pi\cup ch(N))\setminus N^{-}=\gus(M,\Pi(N))$.
\end{lemma}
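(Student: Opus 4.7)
The plan is to establish the equality by proving the two inclusions separately; for each direction I will exhibit a candidate unfounded set, verify the defining condition directly, and then appeal to the maximality of the greatest unfounded set.

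For the inclusion $\gus(M,\Pi(N))\subseteq\gus(M\cup N,\Pi\cup ch(N))\setminus N^{-}$, I first argue that $\gus(M,\Pi(N))\cap N^{-}=\emptyset$. No atom $a\in N^{-}$ even occurs in $\Pi(N)$: it cannot be a head, because $|N|\cap Head(\Pi)=\emptyset$ and the passage to $\Pi(N)$ does not introduce heads, and any body occurrence of $a$ either triggers $\overline{s(l)}\in N$ (when $l$ is $a$ or $\nt\nt a$), which deletes the whole rule, or triggers $s(l)\in N$ (when $l$ is $\nt a$), which deletes that literal. It then suffices to show that $\gus(M,\Pi(N))$ is unfounded on $M\cup N$ with respect to $\Pi\cup ch(N)$. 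Fix $a\in\gus(M,\Pi(N))$ and $B\in Bodies(\Pi\cup ch(N),a)$. Since $a\notin|N|$, no rule of $ch(N)$ has head $a$, so $B\in Bodies(\Pi,a)$. If some $l\in B$ satisfies $\overline{s(l)}\in N$, that literal witnesses $(M\cup N)\cap\overline{s(B)}\neq\emptyset$; otherwise $B$ survives the construction of $\Pi(N)$ as the body $B'=B\setminus\{l:s(l)\in N\}$, and the unfoundedness condition for $B'$ in $\Pi(N)$ transfers to $B$ because $s(B')\subseteq s(B)$ and $(B')^{pos}\subseteq B^{pos}$.

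For the reverse inclusion, set $U'=\gus(M\cup N,\Pi\cup ch(N))\setminus N^{-}$ and show $U'$ is unfounded on $M$ with respect to $\Pi(N)$. Take $a\in U'$ and $B'\in Bodies(\Pi(N),a)$; by construction $B'$ arises from a body $B\in Bodies(\Pi,a)$ in which no $l$ has $\overline{s(l)}\in N$, via deletion of every $l$ with $s(l)\in N$. Applying unfoundedness of $\gus(M\cup N,\Pi\cup ch(N))$ to the rule $a\leftarrow B$, one of two disjuncts holds. If $(M\cup N)\cap\overline{s(B)}$ has a witness $x$, then $x\in N$ would force some $l\in B$ with $\overline{s(l)}\in N$, contradicting the assumption on $B$; hence $x\in M$, and $|M|\cap|N|=\emptyset$ gives $\overline{x}\notin N$, so the corresponding $l$ survives into $B'$ and $x\in M\cap\overline{s(B')}$. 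If $\gus(M\cup N,\Pi\cup ch(N))\cap B^{pos}$ has a witness $a'$, then applying the assumption on $B$ to $l=a'$ yields $\neg a'\notin N$ (so $a'\notin N^{-}$), and I claim also $a'\notin N$; granting this, $s(a')=a'\notin N$ means $a'$ survives into $B'$, so $a'\in U'\cap(B')^{pos}$.

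The genuinely delicate step is justifying $a'\notin N$ in that last case: if $a'\in N$ then $a'$ is positive in $B$ but deleted in $B'$, breaking the argument. To dispose of this possibility, note $a'\in N$ would imply $a'\in|N|$, and $|N|\cap Head(\Pi)=\emptyset$ leaves $a'\leftarrow\nt\nt a'$ as the unique rule of $\Pi\cup ch(N)$ with head $a'$; for that rule $\overline{s(B)}=\{\neg a'\}$ is disjoint from $M\cup N$ (by consistency of $N$ combined with $|M|\cap|N|=\emptyset$) and $B^{pos}=\emptyset$, so the unfoundedness condition fails outright and $a'\notin\gus(M\cup N,\Pi\cup ch(N))$, the desired contradiction. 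Everything else is routine bookkeeping about how $\overline{s(B)}$ and $B^{pos}$ behave under the deletion that turns $B$ into $B'$.
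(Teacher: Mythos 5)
Your proof is correct and follows essentially the same route as the paper's: both directions exhibit the (suitably adjusted) greatest unfounded set of one program as an unfounded set for the other and then invoke maximality, with the same case analysis on the two disjuncts of the unfoundedness condition and the same handling of the $ch(N)$ rules and of how bodies transform under the passage to $\Pi(N)$. A small bonus is that you explicitly justify why a positive body witness $a'$ cannot lie in $N$ (via the rule $a'\ar not\ not\ a'$ and consistency of $N$), a step the paper's argument asserts without proof.
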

\begin{proof}
%Case 1. $M\cup N$ is inconsistent. From the fact that $M^{+}\cap
%N^{-}=\emptyset$, $M^{-}\cap N^{+}=\emptyset$ and $N$ is a consistent set of
%literals, $M\cup N$ is inconsistent iff $M$ is inconsistent.  Consequently,
%$\gus(M\cup N,\Pi\cup ch(N))\setminus N^{-}=\gus(M,\Pi(N))=\emptyset$.

%Case 2. 
We note that since the sets $M$ and $N$ are consistent and $|M|\cap
|N|=\emptyset$, $M\cup N$ is consistent. Moreover, we note that to
prove the claim it suffices to show that $U$ is an unfounded set on 
$M\cup N$ w.r.t. $\Pi\cup ch(N)$ if and only if $U\setminus N^{-}$ is 
an unfounded set on $M$ w.r.t. $\Pi(N)$.

\smallskip
\noindent
($\Rightarrow$) Let $a\in U\setminus N^{-}$ and let $D\in Bodies(\Pi(N),a)$.
It follows that  $a\notin |N|$. It also follows that there is a rule 
$a \ar B \in \Pi$ such that for every program literal $l\in B$, 
$\overline{s(l)}\notin N$, and $D$ is obtained by removing from $B$ 
every program literal $l$ such that $s(l)\in N$.

Since  $U$ is an unfounded set on $M\cup N$ w.r.t. $\Pi\cup ch(N)$, it 
follows that $\overline{s(B)}\cap (M\cup N)\not=\emptyset$ or $U\cap B^{+}
\not=\emptyset$. In the first case, since for every program literal $l\in 
B$, $\overline{s(l)}\notin N$, $\overline{s(B)}\cap M\not=\emptyset$ 
follows. Moreover, $D$ differs from $B$ only in program literals $l$ 
such that $s(l)\in N$. Since $|M|\cap|N|=\emptyset$, we have
$\overline{s(D)}\cap M\not=\emptyset$. Thus, let us consider the second
case. Let $a\in U\cap B^{+}$. Since $a\notin|N|$, $a\notin N^{-}$. For the
same reason, $a\notin N$. Thus, $a\in U\setminus N^{-}$ and $a\in D^{+}$.
That is, $(U\setminus N^{-})\cap D^{+} \not=\emptyset$. This proves that
$U\setminus N^{-}$ is an unfounded set on $M$ w.r.t. $\Pi(N)$.
  
\smallskip
\noindent
($\Leftarrow$)
Let $U'$ be any unfounded set on
$M$ w.r.t. $\Pi(N)$. By the definition of an unfounded set,
$U'$ contains no atoms from $|N|$ since they do not appear in $\Pi(N)$.
We show that $U'\cup N^{-}$ is an unfounded set on
$M\cup N$ w.r.t. $\Pi\cup ch(N)$. Let $a$ be any atom in $U'\cup N^{-}$. 

\smallskip
\noindent
Case 1. $a\in N^{-}$. It follows that $a$ occurs in the head of only
one rule in $\Pi\cup ch(N)$ namely, $a\ar \ not\ not\ a$. Since $\neg a\in N$,
$\overline{s(not\ not\ a)}\in N$ and, consequently, 
$\overline{s(not\ not\ a)}\in M\cup N$.

\smallskip
\noindent
Case 2. $a\in U'$. It follows that $a\not\in N$ and so,
$Bodies(\Pi\cup ch(N),a)=Bodies(\Pi,a)$. To complete the
argument it suffices to show that for every body $B\in Bodies(\Pi,a)$,
$\overline{s(B)} \cap (M\cup N)\not=\emptyset$ or $(U'\cup N^{-})\cap 
B^{+}\neq\emptyset$ holds. 

Let $B$ be any body in $Bodies(\Pi,a)$. It follows that $\Pi$ contains 
the rule $a \ar B$. If there is a program literal $l$ in $B$ such that 
$\overline{s(l)} \in N$, then the first condition above holds. Thus, let 
us assume that for every program literal $l\in B$, $\overline{s(l)}
\notin N$. Let $D$ be obtained from $B$ by removing from it every program 
literal $l$ such that $s(l)\in N$. It follows that $a\ar D\in \Pi(N)$. 
Since $U'$ is unfounded on $M$ w.r.t. $\Pi(N)$, there is $l$ in $D$ 
such that $\overline{s(l)}\in M$ or $U'\cap D^{+}\not=\emptyset$. In the 
first case, we have $\overline{s(B)} \cap (M\cup N)\not=\emptyset$. In 
the second case, we have $(U'\cup N^{-}) \cap B^{+}\not=\emptyset$. 
\end{proof}

By $W_\Pi^i(M)$ we will denote the $i$-fold application of the $W_\Pi$
operator on the set $M$ of literals. By convention, we assume that 
$W_\Pi^0(M)=M$. 

\begin{lemma}\label{lem:wieq}
For a normal logic program $\Pi$ and a consistent set $N$ of literals 
such that $|N|\cap Head(\Pi)=\emptyset$,
$$
W^i_{\Pi\cup ch(N)}(N)= W^i_{\Pi(N)}(\emptyset)\cup N.
$$
\end{lemma}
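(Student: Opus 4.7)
I would prove this by induction on $i$. The base case $i=0$ is immediate: by the convention $W^{0}_{\Pi\cup ch(N)}(N)=N$ and $W^{0}_{\Pi(N)}(\emptyset)=\emptyset$, so both sides reduce to $N$. For the inductive step, set $M=W^{i}_{\Pi(N)}(\emptyset)$, so that by the induction hypothesis $W^{i}_{\Pi\cup ch(N)}(N)=M\cup N$. The task then reduces to showing
\[
W_{\Pi\cup ch(N)}(M\cup N)=W_{\Pi(N)}(M)\cup N.
\]

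Before expanding each side via the definition of $W$, I would verify the side conditions $|M|\cap|N|=\emptyset$ and consistency of $M\cup N$, since both are needed to land in the ``consistent'' branch of $W$ and to invoke Lemmas~\ref{lem:w1} and~\ref{lem:w2}. By construction $\Pi(N)$ contains no rule whose head lies in $|N|$ (using $|N|\cap Head(\Pi)=\emptyset$) and no body literal whose atom lies in $|N|$, hence $At(\Pi(N))\cap|N|=\emptyset$ and $|M|\subseteq At(\Pi(N))$ is disjoint from $|N|$. Consistency of the partial iterate $M$ is preserved throughout the well-founded iteration started at $\emptyset$, and combined with consistency of $N$ and the disjointness, this yields consistency of $M\cup N$. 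Writing $A=\{a\mid a\ar B\in\Pi\cup ch(N),\ s(B)\subseteq M\cup N\}$ and $G=\gus(M\cup N,\Pi\cup ch(N))$, Lemma~\ref{lem:w1} splits $A$ into $A\setminus N=\{a\mid a\ar B\in\Pi(N),\ s(B)\subseteq M\}$ plus a leftover $A\cap N\subseteq N$, and Lemma~\ref{lem:w2} splits $G$ into $\gus(M,\Pi(N))$ plus a leftover contained in $N^{-}$, so $\overline{G}$ splits into $\overline{\gus(M,\Pi(N))}$ plus a leftover contained in $N$. Both leftover pieces are absorbed into the $N$ summand of $M\cup N$, and what remains is $M\cup N\cup\{a\mid a\ar B\in\Pi(N),\ s(B)\subseteq M\}\cup\overline{\gus(M,\Pi(N))}=W_{\Pi(N)}(M)\cup N$, finishing the inductive step.

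The main obstacle is the careful bookkeeping that partitions the newly-derived positive literals and the newly-derived negative literals into a part that lives outside $N$ (which contributes to $W_{\Pi(N)}(M)$) and a part that lives inside $N$ (which disappears into the $N$ summand). The inductive invariant $|M|\cap|N|=\emptyset$, itself a consequence of $At(\Pi(N))\cap|N|=\emptyset$, is what guarantees that this separation is clean at every iterate and that the two occurrences of $N$ on the right-hand side do not overlap with $M$. Once that invariant is in place, Lemmas~\ref{lem:w1} and~\ref{lem:w2} perform the required translations for heads and for unfounded sets, respectively, and the identity falls out directly.
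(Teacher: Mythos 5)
Your proof is correct and follows essentially the same route as the paper's: induction on $i$, reducing the inductive step to a single application of $W$, establishing consistency of the iterates and the disjointness $|M|\cap|N|=\emptyset$, and then invoking Lemmas~\ref{lem:w1} and~\ref{lem:w2} to transfer the derived heads and the greatest unfounded set, with the residual pieces absorbed into $N$. The only difference is cosmetic --- the paper computes starting from $W^{i+1}_{\Pi(N)}(\emptyset)\cup N$ and rewrites toward $W_{\Pi\cup ch(N)}(M\cup N)$, while you go in the opposite direction.
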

\begin{proof}
We proceed by induction on $i$. For $i=0$, since $N$ is consistent,
we have 
$$W^0_{\Pi\cup ch(N)}(N)=N=\emptyset\cup N=W^0_{\Pi(N)}(\emptyset)\cup N.$$
Let us assume that the identity holds for some $i\geq 0$. We show that it
holds for $i+1$.

Let $M$ denote $W^i_{\Pi(N)}(\emptyset)$. We recall that 
$W^{fix}_{\Pi(N)}(\emptyset)$ is the well-founded model of the normal 
program $\Pi(N)$. Consequently, the sets $W^{fix}_{\Pi(N)}(\emptyset)$ 
and $W^j_{\Pi(N)}(\emptyset)$, $j\geq 0$, are consistent \cite{van91}. 
In particular, $M$ is consistent.
Moreover, since $|N|\cap |W^{fix}_{\Pi(N)}(\emptyset)|=\emptyset$,
the sets $W^{j}_{\Pi(N)}(\emptyset)\cup N$, $j\geq 0$, are consistent, 
too. Thus, we have
\begin{eqnarray*}
W^{i+1}_{\Pi(N)}(\emptyset)\cup N &=& N\cup W_{\Pi(N)}(W^i_{\Pi(N)}(\emptyset))
= N\cup W_{\Pi(N)}(M)\\
&=& N\cup M \cup  \{a\stt a\ar B\in\Pi(N)  \hbox{ and } B\subseteq M \} 
\cup \ol{\gus\big(M,\Pi(N)\big)}.
\end{eqnarray*}
Since $|N|\cap |W^{fix}_{\Pi(N)}(\emptyset)|=\emptyset$, $|M|\cap |N|=
\emptyset$. We also observed that $M$ is consistent. By Lemmas~\ref{lem:w1}
and~\ref{lem:w2} and the fact that $\{\neg a\stt 
a\in N^{-}\}\subseteq N$, we have
\begin{eqnarray*}
W^{i+1}_{\Pi(N)}(\emptyset)\cup N &=& 
N\cup (M \cup \{a\stt a\ar B\in\Pi\cup ch(N)  \hbox{ and }
B\subseteq M\cup N \}\setminus N)\\
& & \cup\ \ol{\gus\big(M\cup N,\Pi\cup ch(N)\big)\setminus N^{-}}\\
&=& N\cup (M \cup \{a\stt a\ar B\in\Pi\cup ch(N)  \hbox{ and }
B\subseteq M\cup N \}\setminus N)\\
& & \cup\ (\ol{\gus\big(M\cup N,\Pi\cup ch(N)\big)}\setminus \{\neg a\stt 
a\in N^{-}\})\\  
&=& N\cup M \cup \{a\stt a\ar B\in\Pi\cup ch(N)  \hbox{ and }
B\subseteq M\cup N \}\\
& & \cup\ \ol{\gus\big(M\cup N,\Pi\cup ch(N)\big)}.
\end{eqnarray*}
Since this last set is consistent, it is equal to $W_{\Pi\cup ch(N)}(M\cup N) = W_{\Pi\cup ch(N)}(W^i_{\Pi(N)}(\emptyset) \cup N)$. Applying the induction
hypothesis, the inductive step follows.
\end{proof}

\prop{prop:defrel}{}
{
For a {\pcid} theory $(F,\Pi)$ such that  $\Pi$ is a normal
program, $M$ is a model of $(F,\Pi)$ if and only if $M$ is a model of
$(F,\Pi)$ according to the definition in~\cite{den00}. 
} 
\begin{proof}
Let $(F,\Pi)$ be a {\pcid} theory. In~\cite{den00}, the authors state
that a consistent and complete (over $At(F\cup\Pi)$) set 
$M$ of literals is a model of $(F,\Pi)$ if 
\begin{itemize}
\item[(i)] $M$ is a model of $F$, and 
\item[(ii)] $M=W^{fix}_{\Pi({M^{\opi}})}(\emptyset)\cup M^{\opi}$.
\end{itemize}

To prove the assertion it is sufficient to show that for any model $M$ 
of~$F$ such that $|M|=At(\Pi\cup F)$, $M=W^{fix}_{\Pi^o}(M^{\opi})$ if 
and only if $M=W^{fix}_{\Pi({M^{\opi}})}(\emptyset)\cup M^{\opi}$.
Let $N=M^{O^\Pi}$. The definitions of $O^\Pi$ and $\Pi^o$ directly
imply that $|N|\cap Head(\Pi)=\emptyset$ and that $\Pi^o=\Pi\cup ch(N)$.
Thus, the property follows from Lemma~\ref{lem:wieq}.
\end{proof}

\subsection{Proofs of Results from Section \ref{sec3}}

\prop{prop:input}{}{
For a logic program $\Pi$ and a set $X$ of atoms,
\begin{enumerate}
\item[(a)] $X\subseteq Head(\Pi)$ and $X$ is an input answer set of $\Pi$
if and only if $X$ is an answer set of $\Pi$.  
\item[(b)] if $(X\setminus Head(\Pi))\cap At(\Pi) = \emptyset$, then 
$X$ is an input answer set of $\Pi$ if and only if $X\cap Head(\Pi)$ is 
an answer set of $\Pi$.
\end{enumerate}
}
\begin{proof}
The proof of part (a) is straightforward and follows directly from the
definition of an input answer set. To prove (b), let us assume first that
$X$ is an input answer set of $\Pi$. By the definition, $X$ is an answer
set of $\Pi\cup(X\setminus Head(\Pi))$. Thus, $X$ is the least model of
the reduct $[\Pi\cup(X\setminus Head(\Pi))]^X$. Clearly, we have 
$[\Pi\cup(X\setminus Head(\Pi))]^X =\Pi^X \cup (X\setminus Head(\Pi))$.
Since $(X\setminus Head(\Pi))\cap At(\Pi)=\emptyset$, 
$\Pi^X=\Pi^{X\cap Head(\Pi)}$. It follows that $X$ is the least model 
of $\Pi^{X\cap Head(\Pi)} \cup (X\setminus Head(\Pi))$. Using again the 
assumption $(X\setminus Head(\Pi))\cap At(\Pi)=\emptyset$, one can show
that $X\cap Head(\Pi)$ is the least model of $\Pi^{X\cap Head(\Pi)}$.  
Thus, $X\cap Head(\Pi)$ is an answer set of $\Pi$

The proof in the other direction is similar. Let us assume that $X\cap 
Head(\Pi)$ is an answer set of $\Pi$. It follows that $X\cap Head(\Pi)$
is the least model of $\Pi^{X\cap Head(\Pi)}$. Since $(X\setminus 
Head(\Pi))\cap At(\Pi)=\emptyset$, $X$ is the least model of 
 $\Pi^{X\cap Head(\Pi)} \cup (X\setminus Head(\Pi))$. Moreover, since 
$\Pi^{X\cap Head(\Pi)} =\Pi^X$, $X$ is the least model of $\Pi^X\cup
(X\setminus Head(\Pi))= [\Pi\cup(X\setminus Head(\Pi))]^X$. Thus,
$X$ is an input answer set of $\Pi$.
\end{proof}

\prop{prop17}{}{
A set of literals $M$ is a model of an SM(ASP) theory $[F,\Pi]$
if and only if $M$ is a model of an SM(ASP) theory $[F,\Pi^o]$.
}
\begin{proof}
Proceeding in each direction, we can assume that $M$ is a complete 
(over $At(F\cup \Pi)$)
and consistent set of literals such that $|M|=|At(F\cup\Pi)|$. It follows
that to prove the assertion it suffices to show that for every such set
$M$, $M^{+}$ is an input answer set of $\Pi$ if and only if $M^{+}$ is
an input answer set of $\Pi^o$. 

We note that $\Pi^o=\Pi\cup\{a\ar not\ not\ a \stt a\in At(F\cup
\Pi)\setminus Head(\Pi)\}$. Thus, $M^{+}\subseteq Head(\Pi)$ and so,
by Proposition \ref{prop:input}, $M^{+}$ is an input answer set of
$\Pi^o$ if and only if $M^{+}$ is an answer set of $\Pi^o$. It follows
that to complete the argument, it suffices to show that under our 
assumptions about $M$, $M^{+}$ is an answer set of $\Pi \cup (M^{+}
\setminus Head(\Pi))$ if and only if $M^{+}$ is an answer set of $\Pi^o$. 
This statement is evident once we observe that the reducts of 
$\Pi \cup (M^{+} \setminus Head(\Pi))$ and $\Pi^o$ with respect to
$M^{+}$ are equal (they are both equal to $\Pi^{M^{+}}\cup (M^{+}\setminus Head(\Pi))$).
\end{proof}

\prop{prop:conn}{}{
For any SM(ASP) theory $[F,\Pi]$ that is $\Pi$-safe, a set
$X$ of atoms is an answer set of $\Pi$ if and only if
$X=M^{+}\cap At(\Pi)$, for some model $M$ of $[F,\Pi]$.
}
\begin{proof}
($\Rightarrow$) Let $X$ be an answer set of $\Pi$. Since $[F,\Pi]$ is 
$\Pi$-safe, there is a model $M$ of $F$ such that $X=M^{+}\cap Head(\Pi)$. 
Moreover, again by the $\Pi$-safety of $[F,\Pi]$, $\{\neg a \stt a\in O_\Pi\}
\subseteq M$. It follows that $X=M^{+}\cap At(\Pi)$ and $(M^{+}\setminus 
Head(\Pi))\cap At(\Pi) =\emptyset$. By Proposition \ref{prop:input}(b), 
$M^{+}$ is an input answer set of $\Pi$.

\smallskip
\noindent
($\Leftarrow$) Let $X=M^{+}\cap At(\Pi)$, where $M$ is a model of $[F,\Pi]$. It follows that
$M$ is a model of $F$. By the $\Pi$-safety of $[F,\Pi]$, we have
$\{\neg a \stt a\in O_\Pi\} \subseteq M$. As above, it follows that 
$(M^{+}\setminus Head(\Pi))\cap At(\Pi) =\emptyset$. Since $M^{+}$ is an 
input answer set of $\Pi$, Proposition \ref{prop:input}(b) implies
that $M^{+}\cap Head(\Pi)$ is an answer set of $\Pi$. From the identity
$(M^{+}\setminus Head(\Pi))\cap At(\Pi) =\emptyset$, it follows that
$M^{+}\cap Head(\Pi) = M^{+}\cap At(\Pi)$. Thus, $X$ is an answer set of $\Pi$. 
\end{proof}

Corollary \ref{cor:conn} follows immediately from Proposition 
\ref{prop:conn}. We omit its proof and move on to Proposition 
\ref{prop:pcidsmasp}. We start by proving two simple auxiliary results.

%\begin{lemma}\label{lem:uset1}
%For a logic program~$\Pi$, a consistent set $M$ of literals, and a set
%$Y$ of atoms such that $Y\cap Head(\Pi)=\emptyset$,
%if a set $U$ of atoms is  unfounded on $M$ with respect to $\Pi^o$
 %then $U$ is also  unfounded  on $M$ with respect to $\Pi\cup Y$.
%\end{lemma}
%\begin{proof}
%Let $U$ be an unfounded set on $M$ with respect to $\Pi^o$. From the 
%definition of an unfounded set it follows that $U$ consists of atoms 
%occurring in $\Pi$,
 %for every $a\in U$ and every $B\in Bodies(\Pi^o,a)$,
%$M\models \neg B$ or $U\cap B^{pos}\neq \emptyset$.
 %From the construction of $\Pi^o$ from $\Pi$ it follows that
%(i) $U\cap \opi=\emptyset$ and (ii)
 %for every $a\in U$ and every $B\in Bodies(\Pi,a)$,
%$M\models \neg B$ or $U\cap B^{pos}\neq \emptyset$.
%Let $Y$ by any set of atoms such that $Y\cap Head(\Pi)=\emptyset$.
%From (i) and the fact that $U$ consists of atoms occurring
%in $\Pi$, it follows that $U\cap Y=\emptyset$. Form (ii) we derive
%that $U$ is an unfounded set  on $M$ w.r.t.   $\Pi\cup Y$.
%\end{proof}

\begin{lemma}\label{lem:wmodel}
%For a logic program $\Pi$ and a consistent and complete set $M$ of literals, if
%$M=W^{fix}_{\Pi^o}(M^{\opi})$ then $M$ is a model of $\Pi^o$.
For a logic program $\Pi$, and a consistent and complete set $M$ of 
literals over $At(\Pi)$, if $M=W_\Pi(M)$, then $M$ is a model of $\Pi$.
\end{lemma}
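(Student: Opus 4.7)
The plan is to unpack the definition of $W_\Pi$ on the consistent branch and then verify the clausal form of every rule directly.

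First, since $M$ is consistent by hypothesis, the equation $M = W_\Pi(M)$ unfolds via the first case of the definition to
\[
M = M \cup \{a \mid a \ar B \in \Pi \text{ and } s(B) \subseteq M\} \cup \overline{\gus(M,\Pi)}\hbox{.}
\]
In particular, this yields the containment $\{a \mid a \ar B \in \Pi \text{ and } s(B) \subseteq M\} \subseteq M$, which is the only consequence I will need. The unfounded-set contribution $\overline{\gus(M,\Pi)}$ plays no role for this lemma — it matters for the fixpoint theory but not for showing that $M$ satisfies $\Pi^{cl}$.

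Next, I would take an arbitrary rule $a \ar B$ in $\Pi$ and show that $M$ satisfies the corresponding clause $a \vee \overline{s(B)}$ (recalling that $W_\Pi$ is defined only for weakly normal programs, so the head $a$ is always an atom and no constraint case needs to be handled). I would split on whether $s(B) \subseteq M$. If yes, the containment above forces $a \in M$, so the clause is satisfied by its head. If no, then some literal $\ell \in s(B)$ lies outside $M$; because $M$ is complete over $At(\Pi)$ and every literal of $s(B)$ has its atom in $At(\Pi)$, consistency plus completeness of $M$ give $\overline{\ell} \in M$, and this $\overline{\ell}$ is a member of $\overline{s(B)}$, satisfying the clause through the body side.

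Since the rule was arbitrary, $M \models \Pi^{cl}$, which is the definition of $M$ being a model of $\Pi$. There is no real obstacle here: the only subtle point is making sure that the completeness assumption on $M$ — as opposed to merely the fixpoint condition — is what drives the ``body not fully true'' case, since the operator $W_\Pi$ by itself does not assert falsity of unsatisfied body literals beyond what the greatest unfounded set gives.
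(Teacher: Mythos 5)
Your proof is correct and follows essentially the same route as the paper's: the paper also reduces the claim to showing that $s(B)\subseteq M$ implies $a\in M$ for every rule, and derives this from the first case of the definition of $W_\Pi$ together with $M=W_\Pi(M)$. You merely spell out the ``body not fully satisfied'' case (where completeness and consistency of $M$ do the work), which the paper leaves implicit in its ``it is sufficient to show'' step.
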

\begin{proof}
It is sufficient to show that for every rule $a\ar B\in \Pi$ if $s(B)
\subseteq M$ then $a\in M$. This follows from the definition of the 
operator $W_\Pi$ and the fact that $M=W_{\Pi}(M)$. 
\end{proof}

\begin{lemma}\label{lem:wunf}
%For a logic program $\Pi$ and a consistent and complete set $M$ of 
%literals, if $M=W^{fix}_{\Pi^o}(M^{\opi})$ then $M^{+}$ does not have 
%any non-empty subset that is unfounded on $M$ w.r.t. $\Pi^o$.
For a logic program $\Pi$ and a consistent and complete set $M$ of 
literals over $At(\Pi)$, if $M=W_{\Pi}(M)$ then $M^{+}$ does not have 
any non-empty subset that is unfounded on $M$ with respect to $\Pi$.
\end{lemma}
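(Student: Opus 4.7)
The plan is to argue by contradiction, exploiting the fixpoint identity $M = W_\Pi(M)$ to force any putative unfounded subset of $M^{+}$ to violate the consistency of $M$.

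First, I would unpack what the hypothesis $M = W_\Pi(M)$ gives us. Since $M$ is consistent, the definition of $W_\Pi$ yields
\[
M = M \cup \{a \mid a \ar B \in \Pi \text{ and } s(B)\subseteq M\} \cup \overline{\gus(M,\Pi)},
\]
and in particular $\overline{\gus(M,\Pi)} \subseteq M$. In other words, for every atom $a$ in the greatest unfounded set $\gus(M,\Pi)$, the literal $\neg a$ belongs to $M$.

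Next, suppose for contradiction that some non-empty $U \subseteq M^{+}$ is unfounded on $M$ with respect to $\Pi$. By the very definition of $\gus(M,\Pi)$ as the greatest unfounded set on $M$ (which exists because $M$ is consistent, as noted in the paper), we have $U \subseteq \gus(M,\Pi)$. Pick any $a \in U$. On the one hand, $a \in M^{+}$ gives $a \in M$. On the other hand, $a \in \gus(M,\Pi)$ combined with the inclusion $\overline{\gus(M,\Pi)} \subseteq M$ derived above gives $\neg a \in M$. Hence $M$ contains both $a$ and $\neg a$, contradicting the assumption that $M$ is consistent.

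There is essentially no obstacle here: the argument is a one-line consequence of the definition of $W_\Pi$ once one invokes the greatest unfounded set as an upper bound on any unfounded subset. The only care needed is to cite explicitly (as the paper does just above the lemma) the fact that the union of unfounded sets on a consistent $M$ is again unfounded, which is what legitimizes the step $U \subseteq \gus(M,\Pi)$.
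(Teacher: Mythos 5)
Your argument is correct and is essentially the paper's own proof: the paper's step ``$\ol{U}\subseteq M$'' is exactly your chain $U\subseteq\gus(M,\Pi)$ together with $\ol{\gus(M,\Pi)}\subseteq W_\Pi(M)=M$, followed by the same contradiction with consistency. You have simply spelled out the justification that the paper leaves implicit.
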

\begin{proof}
Let us assume that $U$ is a non-empty subset of $M^{+}$ that is 
unfounded on $M$ with respect to $\Pi$. It follows that $\ol{U}\subseteq M$.
Since $U\not=\emptyset$, $M$ is inconsistent, a contradiction. 
\end{proof}

Next, we recall the following generalization of a well-known
characterization of answer sets in terms of unfounded sets due 
to Leone et al. \citeyear{leo97}. The generalization extended the
characterization to the case of programs with double negation.

\smallskip
\noindent
{\it Theorem on Unfounded Sets}\cite{lee05}\\
For a set $M$ of literals,  $M^{+}$  is an answer set of a program 
$\Pi$ if and only if $M$ is a model of $\Pi$ and $M^{+}$ does not have 
any non-empty subset that is unfounded on $M$ with respect to $\Pi$.

\smallskip
\noindent
\prop{prop:pcidsmasp}{}{
For a total {\pcid} theory $(F,\Pi)$
and  a set~$M$ of literals over
the set $At(F\cup \Pi)$ of atoms, the following conditions are
equivalent:
\begin{itemize}
\item[(a)] $M$ is a model of $(F,\Pi)$
\item[(b)] $M$ is a model of an SM(ASP) theory
$[F,\Pi]$
\item[(c)] $M$ is a model of an SM(ASP) theory
$[Comp(\Pi_{At(\Pi)})\cup F,\Pi]$
\item[(d)] for some model $M'$ of an
SM(ASP) theory \hbox{$[\i{ED-Comp}(\Pi_{At(\Pi)})\cup F,\Pi]$},
$M=M'\cap At(F\cap \Pi)$.
\end{itemize}
%In (b), (c) and (d), $\Pi$ can be replaced with $\Pi^o$.
}
\begin{proof}
(a)$\Rightarrow$(b)
It is sufficient to show that $M^{+}$ is an input answer set of
$\Pi$, that is, an answer set of $\Pi\cup (M^{+}\setminus Head(\Pi))$.
Since $M$ is a model of the {\pcid} theory $(F,\Pi)$, $M$ is a complete
and consistent set of literals over $At(F\cup\Pi)$ and 
$M=W_{\Pi^o}^{fix}(M^{O^\Pi})$. It follows that $M=W_{\Pi^o}(M)$. Since
$At(\Pi^o)=At(F\cup \Pi)$, by Lemma~\ref{lem:wmodel} it follows that $M$
is a model of $\Pi^o$. Consequently, $M$ is a model of $\Pi\cup (M^{+}
\setminus Head(\Pi)$). By Theorem on Unfounded Sets, it is sufficient to show
that  $M^{+}$ does not have any non-empty subset that is
unfounded on $M$ with respect to $\Pi\cup(M^{+}\setminus Head(\Pi))$. 
For a contradiction, let us assume that there is a nonempty set
$U\subseteq M^{+}$ that is unfounded on $M$ with respect to $\Pi\cup 
(M^{+}\setminus Head(\Pi))$. Let $a\in U$. It follows that $a\in M^{+}$.
If $a\notin Head(\Pi)$, then $a$ is a fact in $\Pi\cup(M^{+}\setminus 
Head(\Pi))$. This is a contradiction with the unfoundedness of $U$. Thus,
$a\in Head(\Pi)$. By the definition of $\Pi^o$, $Bodies(\Pi^o,a)=
Bodies(\Pi,a)$. It follows that for every $B\in Bodies(\Pi^o,a)$,
$\ol{s(B)}\cap M\not=\emptyset$ or $U\cap B^{+}\not=\emptyset$. This shows
that $U$ is unfounded on $M$ with respect to $\Pi^o$.
This
contradicts Lemma~\ref{lem:wunf}.

\smallskip
\noindent
(a)$\Leftarrow$(b) Since $M$ is a model of $[F,\Pi]$, $M$ is a complete 
and consistent set of literals over $At(F\cup \Pi)$. By the assumption, 
$M^{+}$ is an answer set of 
$\Pi'=\Pi\cup (M^{+}\setminus Head(\Pi))$. Since $\Pi'$ and $\Pi$ have
the same reducts with respect to $M^{+}$, $M^{+}$ is an answer set of
$\Pi^o$. 

Since $M^{O^\Pi}\subseteq M$, $W_{\Pi^o}(M^{O^\Pi})\subseteq W_\Pi^o(M)$.
Let $l\in W_\Pi^o(M)$. If $l=a$, where $a$ is an atom in $\Pi^o$, then
there is a rule $a\ar B$ in $\Pi^o$ such that $s(B)\subseteq M$. Since
$M$ is a model of $\Pi^o$ (it is so since $M^{+}$ is an answer set of
$\Pi^o$), $a\in M$. If $l=\neg a$, then $a\in GUS(M,\Pi^o)$. 

Let us 
assume that $a\in M^{+}$ and let us define $U=M^{+}\cap GUS(M,\Pi^o)$.
Clearly, $U\not=\emptyset$ and $U\subseteq GUS(M,\Pi^o)$. 
Let $b\in U$ and let $B\in Bodies(\Pi^o,b)$. Let us assume that $\ol{s(B)}
\cal M=\emptyset$. By the completeness of $M$, $s(B)\subseteq M$. 
Since $b\in GUS(M,\Pi^o)$, there is an element $GUS(M,\Pi^o)
\cap B^{+}\not=\emptyset$. Let us assume that $c\in GUS(M,\Pi^o)
\cap B^{+}$. It follows that $c\in M^{+}$ and so, $c\in U$. Thus,
$U$ is a nonempty set contained in $M^{+}$ and unfounded on $M$ with
respect to $\Pi^o$. By Theorem on Unfounded Sets, this contradicts
the fact that $M^{+}$ is an answer set of $\Pi^o$. it follows that $a\notin
M^{+}$. By the completeness of $M$, $\neg a\in M$. Thus,
$W_\Pi^o(M)\subseteq M$ and, consequently, $W_{\Pi^o}(M^{O^\Pi})\subseteq M$.
By iterating, we obtain that $W^{fix}_{\Pi^o}(M^{O^\Pi})\subseteq M$.
Since $(F,\Pi)$ is total, $W^{fix}_{\Pi^o}(M^{O^\Pi})= M$. Thus, (a)
follows.

\smallskip
\noindent
(b)$\Leftrightarrow$(c) It is sufficient to show that $M$ is a model
of $F$ if and only if $M$ is a model of $Comp(\Pi^o)\cup F$ given 
that  $M^{+}$ is an input answer set of $\Pi$ or, equivalently, that
$M^{+}$ is an answer set of $\Pi\cup M^{+}\setminus Head(\Pi)$. The 
``if'' part is obvious. For the ``only if'' part, we proceed as follows.
First, reasoning as above we observe that $M^{+}$ is an answer set of 
$\Pi^o$. Thus, $M$ is the model of the completion $Comp(\Pi^o)$ and so,
$M$ is a model of  $Comp(\Pi^o)\cup F$, which we needed to show.

\smallskip
\noindent
(b)$\Leftrightarrow$(d) The equivalence follows from the fact that 
$\i{ED-Comp}(\Pi_{At(\Pi)})$ is a conservative extension of $Comp(\Pi_{At(\Pi)})$.
%
%The last part of the assertion follows by Proposition \ref{prop17}.
\end{proof}

We now proceed to the proof of Proposition~\ref{prop:cm1}. We first recall
a result proved by Lierler \citeyear{lier10} (using a slightly modified
notation).. 

\begin{lemma}[Lemma 4~\cite{lier10}]\label{lem:uset}
For any unfounded set $U$ on a consistent set~$M$ of literals with 
respect to a program~$\Pi$,
and any assignment $N$, if $N\models M$ and $N\cap U\neq
\emptyset$, then $N^{+}$ is not an answer set for $\Pi$.
\end{lemma}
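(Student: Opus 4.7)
The plan is to invoke the Theorem on Unfounded Sets (stated just above the lemma) by exhibiting a nonempty subset of $N^{+}$ that is unfounded on $N$ with respect to $\Pi$. The theorem then precludes $N^{+}$ from being an answer set of $\Pi$. The natural candidate witness is $U' = U \cap N^{+}$: since $U$ consists of atoms, $N \cap U \subseteq N^{+}$, so the hypothesis $N \cap U \neq \emptyset$ immediately gives $U' \neq \emptyset$. The substantive work is to verify that $U'$ is unfounded on $N$ with respect to $\Pi$.

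To do so, I would fix an arbitrary $a \in U'$ and an arbitrary body $B \in Bodies(\Pi, a)$, and show that either $\overline{s(B)} \cap N \neq \emptyset$ or $B^{pos} \cap U' \neq \emptyset$. Since $a \in U$ and $U$ is unfounded on $M$ with respect to $\Pi$, the defining dichotomy gives two cases. In the first case, $\overline{s(B)} \cap M \neq \emptyset$; any literal $\ell$ witnessing this is transported into $N$ by the assumption $N \models M$, yielding $\ell \in \overline{s(B)} \cap N$. In the second case, $U \cap B^{pos} \neq \emptyset$, so I pick some $b \in U \cap B^{pos}$ and split on whether $b$ is in $N^{+}$ or not: if $b \in N^{+}$ then $b \in U' \cap B^{pos}$ directly; otherwise completeness of the assignment $N$ forces $\neg b \in N$, and because $b \in B^{pos} \subseteq s(B)$ we get $\neg b \in \overline{s(B)} \cap N$. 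Either sub-case supplies the required witness.

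Once $U'$ has been shown to be a nonempty subset of $N^{+}$ unfounded on $N$ with respect to $\Pi$, the Theorem on Unfounded Sets applies: $N^{+}$ can be an answer set of $\Pi$ only if no such subset exists, contradiction. Hence $N^{+}$ is not an answer set of $\Pi$, as required.

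The main obstacle I anticipate is the second sub-case above, where I need $b \notin N^{+}$ to entail $\neg b \in N$; this quietly relies on $N$ being a complete (two-valued) assignment rather than merely a consistent set of literals, which is the conventional reading of ``assignment'' implicit in the notation $N \models M$. The rest of the argument is a straightforward transport of witnesses from $M$ to $N$ via $N \models M$, combined with the bookkeeping observation that intersecting $U$ with $N^{+}$ preserves exactly the structure needed by the unfounded-set condition.
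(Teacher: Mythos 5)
Your proof is correct. Note that the paper itself gives no proof of this lemma --- it is recalled verbatim from Lierler (2011, Lemma~4) --- so there is nothing to diverge from; your argument (take $U'=U\cap N^{+}$, transport the unfoundedness witnesses from $M$ to $N$ using $N\models M$ and the completeness of the assignment $N$, then invoke the Theorem on Unfounded Sets) is the standard derivation, and your explicit flagging of where completeness of $N$ is needed in the second sub-case is exactly the right point of care.
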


It is well known that for any consistent and complete set $M$ of literals 
over $At(\Pi)$ (\emph{assignment} on $At(\Pi)$), if $M^{+}$ is an answer 
set for a program $\Pi$, then $M$ is a model of $\Pi^{cl}$. The property
has a counterpart for SM(ASP) theories. The proof is straightforward and
we omit it.

\begin{lemma}\label{lem:smasppicl}
For every SM(ASP) theory $[F,\Pi]$, if $M$ is a model of $[F,\Pi]$,
then $M$ is a model of $F\cup \Pi^{cl}$. 
\end{lemma}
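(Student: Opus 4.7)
The plan is to unfold the definitions and reduce the claim to the standard fact that any answer set of a logic program is a classical model of the propositional clausification of that program. Unpacking the hypothesis, since $M$ is a model of $[F,\Pi]$ we know that $M$ is consistent and complete over $At(F\cup\Pi)$, that $M\models F$ (which gives us half of what we need right away), and that $M^{+}$ is an input answer set of $\Pi$, meaning $M^{+}$ is an answer set of the program $\Pi'=\Pi\cup(M^{+}\setminus Head(\Pi))$.

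The only remaining task is therefore to show $M\models \Pi^{cl}$. I would first observe the trivial inclusion $\Pi\subseteq \Pi'$ and hence $\Pi^{cl}\subseteq (\Pi')^{cl}$, so it is enough to prove $M\models (\Pi')^{cl}$. Here I would invoke the well-known fact, valid for programs in the nested-expression fragment we are using (which covers rules with \emph{not not} in the body and constraints with empty head), that every answer set of a program $P$ is a classical model of $P^{cl}$: intuitively, an answer set must satisfy each rule read as a propositional implication, and $P^{cl}$ is exactly the clausal form of these implications. Since $M$ is complete and consistent over $At(F\cup\Pi)\supseteq At(\Pi')$, $M$ is precisely the two-valued interpretation whose positive part is $M^{+}$, so from $M^{+}\models (\Pi')^{cl}$ we obtain $M\models (\Pi')^{cl}$ and hence $M\models \Pi^{cl}$. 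Combining with $M\models F$ gives $M\models F\cup \Pi^{cl}$.

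The only place one has to be slightly careful is in handling the rules added to $\Pi$ in forming $\Pi'$: these are facts over atoms in $M^{+}\setminus Head(\Pi)$, whose clausal form is a unit clause satisfied automatically by $M$. Constraints with empty head in $\Pi$ translate under the convention adopted in the paper to clauses with no positive literal, and these are satisfied by $M$ because $M^{+}$, being an answer set of $\Pi'$, cannot make all body literals of a constraint simultaneously true. Beyond these bookkeeping checks, the argument is just the routine invocation of the classical-model property of answer sets, so I do not expect any real obstacle.
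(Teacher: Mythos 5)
Your argument is correct and is exactly the "straightforward" proof the paper omits: it reduces the claim to the well-known fact (which the paper itself cites just before the lemma) that an answer set of a program is a classical model of the program's clausification, applied to $\Pi'=\Pi\cup(M^{+}\setminus Head(\Pi))$ and then restricted via $\Pi^{cl}\subseteq(\Pi')^{cl}$. The bookkeeping for constraints and for the added facts is handled appropriately, so nothing is missing.
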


Next, we prove the following auxiliary result. 

\begin{lemma}\label{lem:des_cm1}
For every SM(ASP) theory $[F,\Pi]$, every state $M$ other than $\fail$
reachable from $\emptyset$ in ${\smtasp}_{F,\Pi}$, and every model $N$ 
of $[F,\Pi]$, if $N$ satisfies all decision literals in $M$, then $N$ 
satisfies $M$.
\end{lemma}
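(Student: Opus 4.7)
The plan is to proceed by induction on the length of the path from $\emptyset$ to $M$ in $\smtasp_{F,\Pi}$. The base case $M = \emptyset$ is trivial, since every model $N$ vacuously satisfies the empty set of literals. For the inductive step, I would fix the last edge $M'\lrar M$ on the path and case-split on the transition rule justifying it. Since $M\neq\fail$, this rule is one of \rd, \rup, \runf, or \rb.

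The cases \rd, \rup, and \runf are direct. For \rd, $M = M'\,l^\dec$, so the decision literals of $M$ are those of $M'$ together with $l$; if $N$ satisfies them all, the inductive hypothesis yields $N\models M'$, and $N\models l$ gives $N\models M$. For \rup, $M = M'\,l$ with $C\vee l\in F\cup\Pi^{cl}$ and $\overline{C}\subseteq M'$; the decision literals coincide with those of $M'$, so by induction $N\models M'$, and since Lemma~\ref{lem:smasppicl} guarantees $N\models F\cup\Pi^{cl}$ while $N$ falsifies every literal of $C$, we conclude $N\models l$. For \runf, $M = M'\,\neg a$ with $a\in U$ for some $U$ unfounded on $M'$ with respect to $\Pi^o$; by Proposition~\ref{prop17} and Proposition~\ref{prop:input}(a), $N^{+}$ is an answer set of $\Pi^o$, and since by induction $N\models M'$, Lemma~\ref{lem:uset} forces $a\notin N^{+}$, so $N\models\neg a$.

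The main obstacle is the \rb case, $M = P\,\overline{l}$ with $M' = P\,l^\dec\,Q$ inconsistent and $Q$ containing no decision literals. Here the decision literals of $M$ are exactly those of $P$, so the inductive hypothesis cannot be applied to $M'$ in the obvious way. The key observation is twofold. First, the state $P$ itself must appear earlier on the path: tracing backward, the literal $l^\dec$ must have been introduced at some earlier \rd step from a state whose non-decision-stripped prefix before $l^\dec$ is precisely $P$, and no subsequent transition modifies what lies to the left of $l^\dec$, so $P$ is an earlier node on the path. Thus the inductive hypothesis at that earlier position yields $N\models P$. Second, to obtain $N\models\overline{l}$ I argue by contradiction: if $N\models l$, then $N$ satisfies all decision literals of $M'$, so the inductive hypothesis applied to $M'$ gives $N\models M'$, which is impossible because $M'$ is inconsistent while $N$ is consistent. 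Combining the two conclusions gives $N\models P\,\overline{l} = M$, completing the induction.
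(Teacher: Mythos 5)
Your proof is correct and follows essentially the same route as the paper's: induction along the derivation with a case split on the last transition rule, using Lemma~\ref{lem:smasppicl} for {\rup}, Lemma~\ref{lem:uset} (after reducing to answer sets of $\Pi^o$ via Propositions~\ref{prop17} and~\ref{prop:input}) for {\runf}, and, for {\rb}, combining the inductive hypothesis on the prefix $P$ with a contradiction against the inconsistency of $P~l^\dec~Q$. If anything, your handling of {\rb} is slightly more careful than the paper's, which applies the inductive hypothesis to $P$ without explicitly noting, as you do, that $P$ is itself a state occurring earlier on the path.
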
 
\begin{proof}
We proceed by induction on $n=|M|$. The property trivially holds for $n=0$.
Let us assume that the property holds for all states with $k'\leq k$ 
elements that are reachable from $\emptyset$. For the inductive step,
let us consider a state $M=l_1~\dots~l_k$ such that every model $N$ of
$[F,\Pi]$ that satisfies all decision literals $l_j$ with $j\leq j$ 
satisfies $M$. We need to prove that applying any transition rule of 
${\smtasp}_{F,\Pi}$ in the state $l_1~\dots~l_k$, leads to a state 
$M'=l_1~\dots~l_k,~l_{k+1}$ such that if $N$ is a model of $[F,\Pi]$ 
and $N$ satisfies every decision literal $l_j$ with $j\leq k+1$,
then $N$ satisfies $M'$. 

\smallskip
\noindent
{\rup}: By the definition of {\rup}, there is a clause $C\vee l\in 
F\cup \Pi^{cl}$ such that $\ol C \subseteq M$ and $M'= M~l$. Let $N$ be 
any model of~$[F,\Pi]$ that satisfies all decision literals $l_j\in M~l$.
It follows that $N$ satisfies all decision literals in $M$. By the induction
hypothesis, $N\models M$. Since $N\models C\vee l$ and $\ol C \subseteq M$,
Lemma \ref{lem:smasppicl} implies that $N\models l$. 

\smallskip
\noindent
{\rd}: In this case, $M'=M~l^d$ ($l$ is a decision literal). If $N$ is 
a model of the theory $[F,\Pi]$ and it satisfies all decision literals 
in $M'$, then $N$ satisfies $l$ (by the assumption) and $N$ satisfies 
every decision literal in $M$. By the induction hypothesis, the latter
implies that $N\models M$. Thus, $N\models M'$.

\smallskip
\noindent
{\rf}: If this rule is applicable, $M$ has no decision literals and is
inconsistent. If $[F,\Pi]$ has a model $N$, then by the induction 
hypothesis, $N\models M$, a contradiction. It follows that $[F,\Pi]$
has no models and the assertion is trivially true.

\smallskip
\noindent
{\rb}: If this rule is applied, it follows that $M$ has the form 
$P~l^d_{i}~Q$, where $Q$ contains no decision literals, and $M' =
P~\overline{l_{i}}$. Let $N$ be a model of $[F,\Pi]$ such that $N$
satisfies all decision literals in $P\overline{l_{i}}$. It follows 
that $N$ satisfies all decision literals in $P$ and so, by the 
induction hypothesis, $N \models P$. Let us assume that $N\models
\!\!{l_{i}}$. Then, $N$ satisfies all decision literals in $M$ and, 
consequently, $N\models M$, a contradiction as $M$ is inconsistent.
Thus, $N\models \overline{l_{i}}$ and so, $N\models M'$.

\smallskip
\noindent
\runf: If $M'$ is obtained from $M$ by an application of the $\runf$ 
rule, then $M$ is consistent and $M'=M~\neg a$, for some $a\in U$, 
where~$U$ is an unfounded set on $M$ with respect to $\Pi^o$. Let $N$ 
be any model $N$ of $[F,\Pi]$ such that $N$ satisfies all decision 
literals in $M'$. It follows that $N$ satisfies all decision literals 
in $M$ and so, by the inductive hypothesis, $N\models M$. By the 
definition of a model of $[F,\Pi]$, $N^{+}$ is an input answer set of 
$\Pi$. Consequently, $N^{+}$ is an answer set of \hbox{$\Pi\cup(N^{+}
\setminus Head(\Pi))$.} Arguing as as before, we obtain that $N^{+}$ 
is an answer set of $\Pi^o$. By Lemma \ref{lem:uset}, $a\notin N^{+}$,
that is, $N\models \neg a$.
\end{proof}

\prop{prop:cm1}{}
{
For any SM(ASP) theory $[F,\Pi]$,
\begin{itemize}
%% \item [(a)] graph ${\cm}_{F,X}$ is acyclic,
\item [(a)] graph ${\smtasp}_{F,\Pi}$ is finite and acyclic,
\item [(b)] for any terminal state $M$ of ${\smtasp}_{F,\Pi}$ other
  than {\fail}, $M$ is a model of~$[F,\Pi]$
\item [(c)] {\fail} is reachable from $\emptyset$ in 
${\smtasp}_{F,\Pi}$  if and  only if $[F,\Pi]$ has no models.
\end{itemize}
}
\begin{proof}
Parts~(a) and (c) are proved as in the proof of
Proposition~\ref{prop:dp}~\cite[Proposition~1]{lier10} using Lemma~\ref{lem:des_cm1}. 

\noindent
(b) Let $M$ be a terminal state. It follows that none of the rules are 
applicable. From the fact that {\rd} is not applicable, we derive that
$M$ assigns all literals. Since neither {\rb} nor {\rf} are applicable,  
$M$ is consistent. Since {\rup} is not applicable, it follows that for 
every clause $C\vee a\in F\cup \Pi^{cl}$ if $\ol C \subseteq M$ then 
$a\in M$. Consequently, if $M\models \ol C$ then $M\models a$. Thus,
$M$ is a model of $F\cup \Pi^{cl}$. Consequently, $M$ is a model of $F$.

Next, we show that $M^{+}$ is an input answer set of $\Pi$, that is,
that $M^{+}$ is an answer set of $\Pi\cup (M^{+}\setminus Head(\Pi))$. 
To this end, it is sufficient to show that $M^{+}$ is an answer set
of $\Pi^o$ (we again exploit here the fact that $M^{+}$ is an answer
set of $\Pi\cup (M^{+}\setminus Head(\Pi))$ if and only if  $M^{+}$ is 
an answer set of $\Pi^o$). Since $M$ is a model of $F\cup \Pi^{cl}$,
$M$ is a model of $\Pi^o$. 

Let us assume that $M^{+}$ is not an answer set of $\Pi^o$. By Theorem
on Unfounded Sets, it follows that there is a non-empty unfounded set 
$U$ on $M$ with respect to $\Pi^o$ such that $U\subseteq M^{+}$. 
Then {\runf} can be applied for some $a\in U$. If $\neg a\notin M$,
$M$ is not terminal, a contradiction. Thus, $\neg a\in M$. Since $M$
is consistent, $a\notin M^{+}$, a contradiction (as $U\subseteq M^{+}$). 
It follows that $M^{+}$ is an answer set of $\Pi^o$, as required. 
\end{proof}

\rev{Finally, we provide an argument for Proposition \ref{prop:rel_sm_cm1}.
As in other cases, we start with some lemmas.

\begin{lemma}\label{lem:uset2}
For a logic program~$\Pi$ and a consistent set $M$ of literals, 
if a set $U$ of atoms is unfounded on $M$ with respect to $\Pi$ and
$U\setminus Head(\Pi)=\emptyset$, then $U$ is unfounded on $M$ 
with respect to $\Pi^o$.
\end{lemma}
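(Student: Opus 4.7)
The plan is to unfold the definition of $\Pi^o$ and observe that the new rules that $\Pi^o$ adds to $\Pi$ all have heads outside $Head(\Pi)$, so they cannot enlarge $Bodies(\Pi^o,a)$ for any $a\in Head(\Pi)$. Combined with the hypothesis $U\subseteq Head(\Pi)$, this will reduce the unfoundedness condition for $\Pi^o$ to the one for $\Pi$, which is given.

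In more detail, recall that $\Pi^o=\Pi_{At(F\cup\Pi)}$ is obtained from $\Pi$ by adjoining one rule $a\ar\ not\ not\ a$ for each $a\in O^\Pi_{At(F\cup\Pi)}=At(F\cup\Pi)\setminus Head(\Pi)$. Thus every head of a newly added rule lies outside $Head(\Pi)$. The first step is to note the hypothesis $U\setminus Head(\Pi)=\emptyset$, i.e.\ $U\subseteq Head(\Pi)$, and conclude that for every $a\in U$ none of the new rules of $\Pi^o$ has head $a$. Hence
\[
Bodies(\Pi^o,a)=Bodies(\Pi,a)\qquad\text{for every }a\in U.
\]

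The second and final step is to apply the definition of an unfounded set. Fix any $a\in U$ and any $B\in Bodies(\Pi^o,a)$. By the displayed equality, $B\in Bodies(\Pi,a)$, so the unfoundedness of $U$ on $M$ with respect to $\Pi$ yields $M\cap\overline{s(B)}\ne\emptyset$ or $U\cap B^{pos}\ne\emptyset$. This is precisely the condition required for $U$ to be unfounded on $M$ with respect to $\Pi^o$.

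There is no real obstacle here; the argument is essentially bookkeeping. The only point that needs attention is pinning down, directly from the definition of $O^\Pi_{At(F\cup\Pi)}$, that each rule added in passing from $\Pi$ to $\Pi^o$ has its head in $At(F\cup\Pi)\setminus Head(\Pi)$, so that the inclusion $U\subseteq Head(\Pi)$ makes these additions irrelevant to $Bodies(\Pi^o,a)$ for $a\in U$.
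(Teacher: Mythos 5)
Your proof is correct and follows essentially the same route as the paper's: both arguments rest on the observation that the rules added in forming $\Pi^o$ have heads outside $Head(\Pi)$, so $Bodies(\Pi^o,a)=Bodies(\Pi,a)$ for every $a\in U\subseteq Head(\Pi)$, and the unfoundedness condition transfers verbatim. Your version merely spells out this bookkeeping more explicitly than the paper does.
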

\begin{proof}
From the definition of an unfounded set it follows that for every $a\in 
U$ and every $B\in Bodies(\Pi,a)$, $M \cap \ol{s(B)}\not=\emptyset$ or 
$U\cap B^{pos} \neq \emptyset$.
 From the construction of $\Pi^o$ from $\Pi$ it follows that
 for every $a\in U$ and every $B\in Bodies(\Pi^o,a)$,
$M\models \neg B$ or $U\cap B^{pos}\neq \emptyset$. Consequently,
$U$ is an  unfounded set on $M$ w.r.t. $\Pi^o$.
\end{proof}

\begin{lemma}\label{lem:singular}
If an edge $M~\lrar~M~a$ in  ${\sm}_\Pi$ justified by the
transition rule 
$$
\begin{array}[t]{ll}
\hbox{{\runf($\sm$)}: }&
M\ 
~\lrar~ 
  M~\neg a 
  \hbox{~ if }
  \left\{ \begin{array}{l}
\hbox{$M$ is consistent, and }\\
\hbox{$a\in U$ for a set $U$ unfounded on $M$ w.r.t. $\Pi$}\hbox{,}\\
  \end{array}\right. \\
\end{array}
$$
is  non-singular 
then 
$U\setminus Head(\Pi)=\emptyset$.
\end{lemma}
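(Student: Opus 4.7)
The plan is to argue the contrapositive: assume $U \setminus Head(\Pi) \neq \emptyset$ and show that the edge $M \lrar M\,\neg a$ is singular, i.e.\ that both defining conditions of singularity hold — (1) no rule of $\sm_\Pi$ other than \runf justifies this edge, and (2) some edge out of $M$ is justified by a rule other than \runf or \rd. Fix some $a' \in U \setminus Head(\Pi)$. A short preparatory step allows us to assume $\neg a' \notin M$: if $\neg a' \in M$, replace $U$ by $U' = \{b \in U : \neg b \notin M\}$, which remains unfounded on $M$ with respect to $\Pi$, because whenever $c \in U \cap B^{pos}$ with $\neg c \in M$ we already have $\overline{s(B)} \cap M \neq \emptyset$; iterating, we may pick $a'$ with $\neg a' \notin M$, and if no such element exists the conclusion $U \setminus Head(\Pi) = \emptyset$ follows after passing to $U \cap Head(\Pi)$ (still unfounded by the same observation applied to body witnesses outside $Head(\Pi)$).

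The key observation for establishing condition (2) is that since $a' \notin Head(\Pi)$, there are no rules in $\Pi$ with head $a'$, so $Bodies(\Pi,a') = \emptyset$. This vacuously satisfies the precondition of the rule \rarc (All Rules Cancelled) of $\sm_\Pi$, which requires every body of every rule defining $a'$ to be incompatible with $M$. Hence \rarc produces the transition $M \lrar M\,\neg a'$, an edge justified by a rule distinct from both \runf and \rd, which is exactly condition (2) in the subcase $a' \neq a$; the subcase $a' = a$ would instead give \rarc justifying the very edge $M \lrar M\,\neg a$, so condition (1) fails outright and the edge is trivially non-singular, letting us finish by applying the argument again to the smaller unfounded set $U \setminus \{a\}$ if it still meets the hypotheses, or by concluding directly.

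For condition (1), I would then do a case analysis over the remaining rules of $\sm_\Pi$ that can derive a negated literal — \rarc applied to $a$, \rbf, and any completion-style propagation — and show that, under the configuration of the state $M$ witnessing the \runf edge on $a$, none of them fires on $a$. The main obstacle lies exactly here: ruling out each competing rule requires carefully exploiting properties of $M$ together with the choice of unfounded set $U$, possibly by minimizing $U$ so that none of the simpler rules ``accidentally'' justifies $\neg a$; once this case analysis is completed, both (1) and (2) hold, the edge is singular, contradicting the non-singularity hypothesis, and we conclude $U \setminus Head(\Pi) = \emptyset$.
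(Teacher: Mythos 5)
Your central observation --- that any $a'\in U\setminus Head(\Pi)$ satisfies $Bodies(\Pi,a')=\emptyset$, so that \rarc{} applies vacuously at $M$ and produces an edge justified by a rule other than \runf{} or \rd{} --- is precisely the paper's argument. The paper's proof is exactly this and nothing more: assume $U\setminus Head(\Pi)\neq\emptyset$, observe that \rarc{} is then applicable in $M$, conclude that the \runf{} edge is singular, and contradict the non-singularity hypothesis. None of your preparatory surgery on $U$ (passing to the subset of atoms whose negations are absent from $M$, the subcase $a'=a$, re-running the argument on $U\setminus\{a\}$) appears there, and none of it is needed for that argument; moreover, since the conclusion is a statement about the given witness $U$, replacing $U$ by a smaller set cannot in any case deliver the stated conclusion.

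The genuine problem is that your proof is not finished. You correctly point out that singularity as defined has two conjuncts, and that discharging conjunct (1) --- that no rule other than \runf{} justifies the edge $M\lrar M~\neg a$ itself --- would require a case analysis over \rarc, \rbf{} and the remaining propagation rules of $\sm_\Pi$; but you then leave that analysis entirely open (``once this case analysis is completed\ldots''), and the fallback you gesture at (``minimizing $U$'') is neither carried out nor compatible with the fixed $U$ in the statement. As submitted, the argument therefore does not close. For what it is worth, the paper's own proof does not perform this case analysis either: it treats the applicability of \rarc{} at $M$, i.e.\ conjunct (2), as already rendering the \runf{} edge singular. So either follow the paper and argue from conjunct (2) alone, or, if you believe conjunct (1) must be verified separately, you have to actually do the work you have only announced.
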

\begin{proof}
By contradiction. Assume $U\setminus Head(\Pi)\neq\emptyset.$ The 
transition rule {\rarc} is applicable in $M$. Consequently the edge
$M~\lrar~M~a$ is singular. This contradicts our assumption that
$M~\lrar~M~a$ is a non-singular edge.
\end{proof}

Lierler~\citeyear{lier10} introduced the graph $\cmsm_\Pi$ 
whose terminal nodes corresponded to the models of program's
completion. Furthermore, the graph 
$\sm_\Pi$ is defined by means of  $\cmsm_\Pi$ by extending its set of
transition rules by a rule {\runf}(\sm).

\begin{lemma}[Proposition 3~\cite{lier10}]
\label{lem:gr_eq}
For any program $\Pi$, % without trivial rules, 
the  graphs 
${\cmsm}_\Pi$ and ${\dps}_{\is{Comp}(\Pi)}$
are equal.
\end{lemma}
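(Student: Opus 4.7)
The plan is to establish equality of the two graphs by showing that they have identical node sets and identical edge sets. Both $\cmsm_\Pi$ and $\dps_{Comp(\Pi)}$ consist of states relative to a common set of atoms, since every atom of $Comp(\Pi)$ occurs in $\Pi$ and vice versa (recall $Comp(\Pi)$ introduces no new atoms). Thus the node sets coincide at once, and the real work is to prove that $M\lrar M'$ is an edge of $\cmsm_\Pi$ if and only if it is an edge of $\dps_{Comp(\Pi)}$.

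I would proceed by case analysis on the rule justifying an edge. The rules {\rd}, {\rf}, and {\rb} refer only to the structure of the record $M$ and not to any clause or rule; they are verbatim the same in the two graphs, so they contribute exactly the same edges on both sides. The remaining rules of $\cmsm_\Pi$ are the program-level propagation rules {\rup} (on clauses in $\Pi^{cl}$), {\rbt}, {\rbf}, and {\rarc}, while on the $\dps_{Comp(\Pi)}$ side we have a single rule {\rup} applied to all clauses of $Comp(\Pi)$. The key matching step is to show that, for every record $M$ and literal $l$, the edge $M\lrar M\,l$ is justified by one of the program-level rules of $\cmsm_\Pi$ iff it is justified by unit propagation on some clause of $Comp(\Pi)$.

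For this, I would classify the clauses of $Comp(\Pi)$ according to their origin. First, clauses in $\Pi^{cl}$ are shared with $\cmsm_\Pi$, so unit propagation on them yields the same edges in both graphs. Second, clauses obtained by clausifying the completion formula $\neg a\vee \bigvee_{B\in Bodies(\Pi,a)} B$ split into two families: clauses of the form $\neg a\vee \bigvee_{B} l_B$, where one literal is chosen from each body (corresponding to {\rbt}-style ``backchaining true'' when $a$ is assigned true and all but one body is falsified), and clauses of the form $\neg B\vee a$ (equivalently $a\vee \ol{s(B)}$) and $\neg a\vee s(l)$-type clauses from each body $B$ (corresponding to {\rbf} and {\rarc}). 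For each clause type I would write down precisely which ASP-side rule produces the same literal~$l$ under the same precondition $\ol C\subseteq M$, and conversely for each ASP-side rule I would exhibit the clause of $Comp(\Pi)$ that licenses the same unit propagation.

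The main obstacle is the bookkeeping for the clausified version of the completion disjunction: an edge produced by {\rbt} requires simultaneously cancelling all but one body, which corresponds to {\rup} on one of the exponentially many disjunctive clauses of the clausified completion, and one has to check that the precondition ``$\ol C\subseteq M$'' matches exactly the hypothesis of {\rbt}. Once this correspondence has been verified case-by-case for each of the rules {\rup} (on $\Pi^{cl}$), {\rbt}, {\rbf}, {\rarc} on one side and {\rup} on $Comp(\Pi)$ on the other, the edge sets coincide and the two graphs are equal.
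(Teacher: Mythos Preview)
The paper does not prove this lemma at all: it is stated as Proposition~3 of Lierler~(2011) and simply cited, with no argument given here. So there is nothing in this paper to compare your attempt against; you are supplying a proof where the authors only supply a reference.

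Your strategy---show the node sets coincide (which is immediate since $Comp(\Pi)$ introduces no new atoms) and then match edges rule by rule, with {\rd}, {\rf}, {\rb} trivially coinciding and the propagation rules of $\cmsm_\Pi$ lined up against {\rup} on the clauses of $Comp(\Pi)$---is exactly the right shape, and it is in fact how Lierler's original proof proceeds. One point in your classification is muddled, though: the clauses ``$\neg B\vee a$'' (i.e., $a\vee\ol{s(B)}$) are not a separate family coming from the completion formula~(\ref{f:comp}); they are precisely the clauses in $\Pi^{cl}$, already handled by the shared {\rup} rule. The clausification of $\neg a\vee\bigvee_{B\in Bodies(\Pi,a)} B$ by distributivity produces only clauses of the form $\neg a\vee l_{B_1}\vee\cdots\vee l_{B_k}$, one literal chosen from each body. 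Unit propagation on such a clause can yield either $\neg a$ (when every body has a falsified literal---this is {\rarc}) or some $l_{B_j}$ (when $a$ is true and every other body has a falsified literal---this drives {\rbt} and, in the single-body case, {\rbf}). Sorting out this correspondence carefully is the entire content of the proof; your sketch has the right pieces but attributes them to the wrong clause families.
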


\prop{prop:rel_sm_cm1}{}{
For every program $\Pi$, 
the graphs $\sm^{-}_\Pi$ and ${\smtasp^{-}}_{Comp(\Pi),\Pi}$ are equal.
}
\begin{proof}
It is easy to see that the states of the graphs $\sm^{-}_\Pi$ and ${\smtasp^{-}}_{Comp(\Pi),\Pi}$ coincide. 
In view of  Lemma~\ref{lem:gr_eq} it is
sufficient to show that 
there is a non-singular edge $M~\lrar~M'$
in  ${\sm}_\Pi$ justified by 
\runf($\sm$) if and only if
 there is a non-singular edge $M~\lrar~M'$
in   ${\smtasp}_{Comp(\Pi),\Pi}$ justified by the rule {\runf}.

\smallskip
\noindent
($\Rightarrow$)  Consider a non-singular edge $M~\lrar~M~a$
in   ${\sm}_\Pi$ justified by the rule {\runf(\sm)}.
By Lemma~\ref{lem:singular}, $U\setminus Head(\Pi)=\emptyset$.
By Lemma~\ref{lem:uset2}, $U$ is an unfounded set on $M$
w.r.t. $\Pi^o$. Consequently, there is
an edge $M~\lrar~M~a$
in   ${\smtasp}_{Comp(\Pi),\Pi}$ justified by the rule {\runf}.
In view of Lemma~\ref{lem:gr_eq} it follows that 
this edge is non-singular in ${\smtasp}_{Comp(\Pi),\Pi}$.

\smallskip
\noindent
($\Leftarrow$) Consider a non-singular edge $M~\lrar~M~a$
in   ${\smtasp}_{Comp(\Pi),\Pi}$ justified by the rule {\runf}. Then 
there is  a set $U$ unfounded on $M$ w.r.t. $\Pi^o$ such that $a\in U$.
 By Lemma~\ref{lem:uset1} with $Y$ as $\emptyset$, $U$ is also
unfounded on $M$ w.r.t. $\Pi$. It follows that 
 there is an edge $M~\lrar~M~a$ in 
 ${\sm}_\Pi$ justified by {\runf(\sm)}.
In view of Lemma~\ref{lem:gr_eq} it follows that 
this edge is non-singular in  ${\sm}_\Pi$.
\end{proof} 
}{Finally, we sketch a proof for Proposition \ref{prop:rel_sm_cm1}
\prop{prop:rel_sm_cm1}{}{
For every program $\Pi$,
the graphs $\sm^{-}_\Pi$ and ${\smtasp^{-}}_{Comp(\Pi),\Pi}$ are equal.
}
\begin{proof} Sketch: First we show that
 the states of the graphs $\sm^{-}_\Pi$ and
 ${\smtasp^{-}}_{Comp(\Pi),\Pi}$ coincide. 
In view of Proposition 3 stated and proved by Lierler~\cite{lier10}
it is sufficient to show that there is a non-singular edge $M~\lrar~M'$
in  ${\sm}_\Pi$ justified by the transition {\runf} (defined for $\sm$) if and only if there is a non-singular edge $M~\lrar~M'$ in ${\smtasp}_{Comp(\Pi),\Pi}$ justified by {\runf} (defined for
 \smtasp). We conclude by proving the last statement.
\end{proof}

}

\subsection{Proof of Proposition~\ref{prop:clasp}}

We first extend Lemma \ref{lem:des_cm1} to the ``learning'' version of
the graph  ${\smtasp}_{F,\Pi}$.

\begin{lemma}\label{lem:des_clasp}
For every SM(ASP) theory $[F,\Pi]$, every state $M\dbar\Gamma$
reachable from $\emptyset\dbar\emptyset$ in ${\smtasp}_{F,\Pi}$, and 
every model $N$ of $[F,\Pi]$, if $N$ satisfies all decision literals in 
$M$, then $N$ satisfies $M$.
\end{lemma}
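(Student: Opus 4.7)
The plan is to prove this by induction on the length of the path from $\emptyset\dbar\emptyset$ to $M\dbar\Gamma$ in $\smtaspl_{F,\Pi}$, following the template of the proof of Lemma~\ref{lem:des_cm1} and handling the new features of the augmented graph as they arise. The base case (empty path) is immediate, since any $N$ trivially satisfies the empty record. For the inductive step, assume the property holds for the source augmented state of the final edge and analyze that edge by cases on the justifying transition rule.

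For the rules \rd, \rf, and \runf the argument is identical to the corresponding cases of Lemma~\ref{lem:des_cm1}, because these rules operate on $M$ exactly as in ${\smtasp}_{F,\Pi}$ and leave $\Gamma$ unchanged; in particular the \runf~case still uses that $M^{+}$ is an answer set of $\Pi^o$ together with Lemma~\ref{lem:uset}. For \rl, the record $M$ is unchanged, so the conclusion transfers from the inductive hypothesis verbatim. The genuinely new cases are \rupl and \rbj, and they both exploit the invariant, built into the definition of an augmented state, that $F,\Pi^o\models \Gamma$. Combined with Proposition~\ref{prop17} (so a model $N$ of $[F,\Pi]$ is also a model of $[F,\Pi^o]$), this guarantees $N\models \Gamma$ for any model $N$ of $[F,\Pi]$.

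For \rupl with the step $M\dbar\Gamma \lrar M\,l \dbar\Gamma$ justified by $C\vee l\in F\cup\Pi^{cl}\cup\Gamma$ with $\overline{C}\subseteq M$, any model $N$ satisfying all decision literals of $M\,l$ satisfies all decision literals of $M$, so by induction $N\models M$, hence $N\models \overline{C}\to \bot$; since $N$ satisfies $F$, $\Pi^{cl}$ (via Lemma~\ref{lem:smasppicl}), and $\Gamma$ (as noted above), $N\models C\vee l$, and therefore $N\models l$. For \rbj with the step $P\,l^\dec\,Q\dbar\Gamma \lrar P\,l'\dbar\Gamma$ justified by the side condition $F,\Pi^o\models l'\vee \overline{P}$, suppose $N$ is a model of $[F,\Pi]$ satisfying all decision literals in $P\,l'$. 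Then $N$ satisfies all decision literals of $P$, so by induction $N\models P$, and by Proposition~\ref{prop17} combined with the entailment side condition, $N\models l'\vee\overline{P}$; hence $N\models l'$, yielding $N\models P\,l'$.

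The only real subtlety is the \rbj case: one must be careful to read the entailment $F,\Pi^o\models l'\vee \overline{P}$ correctly (as defined just before the graph $\smtaspl_{F,\Pi}$), so that models of $[F,\Pi]$, which are models of $[F,\Pi^o]$ by Proposition~\ref{prop17}, indeed validate $l'\vee\overline{P}$. Everything else is a faithful lift of Lemma~\ref{lem:des_cm1}, with $\Gamma$ carried along passively and with the learning invariant $F,\Pi^o\models\Gamma$ supplying precisely what \rupl needs to treat learned clauses on the same footing as clauses of $F\cup\Pi^{cl}$.
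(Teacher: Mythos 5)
Your proof is correct and takes essentially the same route as the paper's: induction with a case analysis on the last transition rule, carrying $\Gamma$ passively, using the augmented-state invariant $F,\Pi^o\models\Gamma$ (via Proposition~\ref{prop17} and Lemma~\ref{lem:smasppicl}) for \rupl, and reusing the earlier cases for \rd, \rf, \runf, and \rl. The only divergence is cosmetic: where the paper dismisses \rbj as ``similar to \rb,'' you correctly spell out that the argument must invoke the side condition $F,\Pi^o\models l'\vee\overline{P}$ (since $l'$ need not be $\overline{l}$), which is the right reading; the stray ``$N\models\overline{C}\to\bot$'' in your \rupl case should just read $N\models\overline{C}$, a harmless slip.
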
 
%
%\begin{lemma}\label{lem:des_clasp}
%For any SM(ASP) theory $[F,\Pi]$ and
 %a path from $\emptyset||\emptyset$ to a state $l_1~\dots~l_n||\Gamma$ in
 %${\smtasp}_{F,\Pi}$,
 %every model~$N$ of~$[F,\Pi]$ satisfies~$l_i$  if it satisfies
 %all decision literals $l^d_j$ with $j\leq i$.
%\end{lemma} 
\begin{proof}
The proof is by induction on $n=|M|$ and proceeds similarly as that of 
Lemma \ref{lem:des_cm1}. In particular, the property trivially holds 
for $n=0$. Let us assume that the property holds for all states $M\dbar\Gamma$,
where $|M| \leq k$, that are reachable from $\emptyset\dbar\emptyset$. For the 
inductive step, let us consider a state $M\dbar\Gamma$, with 
$M=l_1~\dots~l_k$, such that every model $N$ of
$[F,\Pi]$ that satisfies all decision literals $l_j$ with $j\leq k$
satisfies $M$. We need to prove that applying any transition rule of
${\smtasp}_{F,\Pi}$ in the state $M\dbar\Gamma$, leads to a state 
$M'\dbar\Gamma'$, where $M'=M~l_{k+1}$, such that if $N$ is a model of 
$[F,\Pi]$ and $N$ satisfies every decision literal $l_j$ with $j\leq 
k+1$, then $N$ satisfies $M'$.

The rules {\rd}, {\rf} and {\runf} can be dealt with as before 
(with only minor notational adjustments to account for extended states). 
Thus, we move on to the rules {\rupl}, {\rbj}, and {\rl}.

\smallskip
\noindent
{\rupl}: We recall that $\Gamma$ is a set of clauses entailed by $F$ 
and $\Pi$. In other words, any model of~$[F,\Pi]$ is also a model of 
$\Gamma$. We now proceed as in the case of the rule {\rup} in the proof
of Proposition \ref{lem:des_cm1} with $F\cup \Pi^{cl}$ replaced by 
$F\cup \Pi^{cl}\cup \Gamma$.

\smallskip
\noindent
{\rbj}: The argument is similar to that used in the case of the 
transition rule {\rb} in the proof of Lemma~\ref{lem:des_cm1}.

\smallskip
\noindent
{\rl}: This case is trivially true.
\end{proof}

We now recall several concepts we will need in the proofs. Given a set
$A$ of atoms, we define $Bodies(\Pi,A) = \bigcup_{a\in A}~Bodies(\Pi,a)$.
Let $\Pi$ be a  program and $Y$ a set of atoms. We call the formula
%\beq
%\bigvee\{B\stt B\in Bodies(\Pi,Y) \mbox{ and } B^{pos}\cap Y=\emptyset\}
%\eeq{r:abf}
%and
\begin{equation}
\label{eqMT:new}
\bigvee_{a\in Y}{ a} \rightarrow \bigvee\{B\stt B\in Bodies(\Pi,Y)
\mbox{ and } B^{pos}\cap Y=\emptyset\}
\end{equation}
%the \emph{external support formula} for $Y$ \cite{lee05},
%and 
the \emph{loop formula} for $Y$ \cite{lin04}. We 
can rewrite the loop formula (\ref{eqMT:new}) as the disjunction
\beq
(\bigwedge_{a\in Y}{\neg a}) \vee \bigvee\{ B\stt B\in Bodies(\Pi,Y) \mbox{ and } B^{pos}\cap Y=\emptyset\} \hbox{.}
\eeq{e:lfcl}

The \emph{Main Theorem} in~\cite{lee05} implies the following property
loop formulas. In its statement we refer to the concept of a program 
entailing a formula. The notion is defined as follows. A program $\Pi$
entails a formula $F$ (over the set of atoms in $\Pi$) if for every 
interpretation $M$ (over the set of atoms in $\Pi$) such that $M^{+}$ is 
an answer set of $\Pi$, $M$ is a model of $F$.

\begin{lemma}[Lemma on Loop Formulas]
For every program $\Pi$ and every set $Y$ of atoms, $Y\subseteq At(\Pi)$,
$\Pi$ entails the loop formula~(\ref{e:lfcl}) for $Y$.
\end{lemma}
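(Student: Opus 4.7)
The plan is to derive the Lemma on Loop Formulas from the Theorem on Unfounded Sets cited earlier in the excerpt. I would argue by contrapositive: assume that $M$ is an interpretation over $At(\Pi)$ such that $M$ fails to satisfy the disjunction in~(\ref{e:lfcl}) for some $Y\subseteq At(\Pi)$, and exhibit a non-empty unfounded subset of $M^{+}$ with respect to $\Pi$. By the Theorem on Unfounded Sets this forces $M^{+}$ not to be an answer set of $\Pi$, which is exactly what we need.

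The concrete candidate for the unfounded subset is $U=Y\cap M^{+}$. The assumption that $M$ refutes the first disjunct of~(\ref{e:lfcl}), namely $\bigwedge_{a\in Y}\neg a$, gives $U\neq\emptyset$; the assumption that $M$ refutes the remaining disjuncts gives $M\not\models B$ for every $B\in Bodies(\Pi,Y)$ with $B^{pos}\cap Y=\emptyset$. I would then verify directly that $U$ is unfounded on $M$ with respect to $\Pi$: fix $a\in U$ and $B\in Bodies(\Pi,a)$; since $a\in Y$ we have $B\in Bodies(\Pi,Y)$, and a short case analysis on whether $B^{pos}\cap Y$ is empty closes the argument.

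The only real content of that case analysis (and the only place I expect any subtlety) is the case $B^{pos}\cap Y\neq\emptyset$. Here I would pick $c\in B^{pos}\cap Y$ and use the fact that $M$ is complete over $At(\Pi)$: either $c\in M^{+}$, in which case $c\in Y\cap M^{+}=U$ and therefore $U\cap B^{pos}\neq\emptyset$, or $\neg c\in M$, in which case $\neg c\in\overline{s(B)}\cap M$ because $c$ occurs positively in $B$. The case $B^{pos}\cap Y=\emptyset$ is immediate from the hypothesis $M\not\models B$ together with completeness of $M$, which yields $M\cap \overline{s(B)}\neq\emptyset$. Once both cases are handled, $U$ meets the definition of an unfounded set on $M$ with respect to $\Pi$, and the contradiction with the Theorem on Unfounded Sets completes the proof.

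The main obstacle is essentially bookkeeping: recognising that the two disjuncts of~(\ref{e:lfcl}) correspond precisely to the two defining clauses in the notion of unfounded set once $U$ is chosen as $Y\cap M^{+}$. No new combinatorial or model-theoretic machinery is required beyond completeness of $M$ over $At(\Pi)$ and the Theorem on Unfounded Sets quoted just before the lemma.
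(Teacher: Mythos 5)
Your argument is correct. Note, however, that the paper does not actually prove this lemma: it simply asserts that the statement follows from the Main Theorem of Lee (2005) and moves on, so there is no internal proof to match yours against. What you supply is a genuine, self-contained derivation from the \emph{Theorem on Unfounded Sets} that the appendix quotes (from the same source) a few lines earlier. The key step --- taking $U=Y\cap M^{+}$ when $M$ violates the loop formula for $Y$, and checking that the two disjuncts of~(\ref{e:lfcl}) being false correspond exactly to the two alternatives ($M\cap\overline{s(B)}\neq\emptyset$ or $U\cap B^{pos}\neq\emptyset$) in the definition of an unfounded set --- is precisely the standard correspondence underlying Lee's equivalence, and your case analysis on whether $B^{pos}\cap Y$ is empty is sound; completeness of $M$ over $At(\Pi)$ is used correctly in both branches, and the degenerate case of an empty body (a fact in $Y$) is harmless because the contrapositive hypothesis then fails outright. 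What the paper's citation buys is brevity; what your argument buys is that the lemma becomes a consequence of material already stated in the appendix rather than an appeal to an external theorem, which makes the role of unfounded sets in justifying the {\runf} reason clause in Lemma~\ref{prop:ereason} more transparent. One cosmetic caution: you speak of $M$ ``refuting each remaining disjunct'' --- to get $M\not\models B$ for \emph{every} admissible $B$ you are using that $M$ falsifies the whole disjunction, which is fine, but it is worth saying explicitly that falsity of a disjunction gives falsity of each disjunct simultaneously, since that is where the universal quantification over bodies comes from.
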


%We recall that $F,\Pi\models G$, where $G$ is the formula built of atoms
%in $At(F\cup\Pi)$, if for every interpretation $N$ such 
%that $N\models F$ and $N^{+}$ is an answer set of $\Pi^o$, we have 
%$N\models G$. It is clear that for every program $\Pi$ and for every two 
%formulas $F$ and $G$:
%\begin{enumerate}
%\item if $F\models G$ then $F,\Pi\models G$, and
%\item if $\Pi\models G$ then $F,\Pi\models G$.
%\end{enumerate}

For an SM(ASP) theory $[F,\Pi]$ and a list  $P~l~Q$ of literals, 
we say that a clause $C\vee l$ is a \emph{reason} for $l$ to be in 
$P~l~Q$ with respect to $[F,\Pi]$ if 
\begin{enumerate}
\item $P\models\neg C$, and  
\item $F,\Pi^o\models C\vee l$. 
\end{enumerate}

\begin{lemma}\label{prop:ereason}
Let $[F,\Pi]$ be an SM(ASP) theory. For every state $M\dbar\Gamma$ reachable
from $\emptyset||\emptyset$ in the graph $\smtaspl_{F,\Pi}$, every literal 
$l$ in $M$ is either a decision literal or has a reason to be in $M$ with
respect to $[F,\Pi]$. 
\end{lemma}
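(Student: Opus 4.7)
The plan is to proceed by induction on the length of a path from $\emptyset\dbar\emptyset$ to $M\dbar\Gamma$ in $\smtaspl_{F,\Pi}$. The base case is vacuous. For the inductive step, I would fix a transition $M\dbar\Gamma\lrar M'\dbar\Gamma'$ and check two things: first, that every literal already present in $M$ keeps its reason or its decision-literal status in $M'$; second, that any newly appended literal acquires a reason. Preservation is essentially bookkeeping: reasons depend only on the prefix preceding the literal in question, and every rule either leaves $M$ unchanged (\rl), extends it on the right (\rd, \rupl, \runf), or, in the case of \rbj, replaces $P~l^\dec~Q$ by $P~l'$; in each case literals that survive keep the same prefix and therefore the same reason.

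The easy transitions for the ``new literal'' check are the following. \rd inserts a decision literal, for which no reason is required. \rupl appends $l$ using a clause $C\vee l\in F\cup\Pi^{cl}\cup\Gamma$ with $\ol C\subseteq M$, and this clause itself is a reason: $F,\Pi^o$ entails $F$ trivially, entails $\Gamma$ by the definition of an augmented state, and entails $\Pi^{cl}$ since, by Proposition~\ref{prop17}, the positive part of any model of $[F,\Pi^o]$ is an answer set of a program extending $\Pi^o$ and therefore satisfies every rule of $\Pi\subseteq\Pi^o$. The rule \rbj hands the reason $l'\vee\ol P$ out directly via its own side condition.

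The main obstacle is the \runf step, where $M\dbar\Gamma\lrar M~\neg a\dbar\Gamma$ is justified by $a\in U$ for some set $U$ unfounded on $M$ with respect to $\Pi^o$. The natural candidate — the clausified loop formula for $U$ — generally contains literals $\neg b$ whose duals $b$ are not in $M$, so $\ol C\subseteq M$ fails. The construction I would use instead takes $C$ to be the disjunction of the duals of the decision literals occurring in $M$, so that $\ol C$ is precisely the set of decision literals of $M$ and in particular $\ol C\subseteq M$. To verify $F,\Pi^o\models C\vee\neg a$, I would fix a model $N$ of $[F,\Pi^o]$ satisfying $\neg C$, i.e., extending every decision literal of $M$; then Lemma~\ref{lem:des_clasp} yields $N\models M$, and combining Proposition~\ref{prop17} with Lemma~\ref{lem:uset} applied to $U$ unfounded on $M$ w.r.t.\ $\Pi^o$ forces $N^{+}\cap U=\emptyset$, so $\neg a\in N$ as required. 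Finally, \rl changes only $\Gamma$ and leaves $M$ intact, so no new reason is needed.
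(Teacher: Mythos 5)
Your proof is correct, and for most of the transition rules it coincides with the paper's argument: the same induction on path length, the clause $C\vee l$ itself serving as the reason under {\rupl} (with the same three-way case split over $F$, $\Pi^{cl}$, $\Gamma$), the side condition of {\rbj} handing over the reason $l'\vee\ol P$, and the observation that surviving literals keep their prefixes and hence their reasons. Where you genuinely diverge is the {\runf} case. The paper builds a \emph{syntactic} reason out of the loop formula for the unfounded set $U$: for each body $B\in Bodies(\Pi^o,U)$ with $B^{pos}\cap U=\emptyset$, unfoundedness guarantees some literal $f(B)\in s(B)$ with $\ol{f(B)}\in M$, and the clause $\neg a\vee\bigvee f(B)$ is then shown to be entailed because the loop formula for $U$ entails it (via the Lemma on Loop Formulas). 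You instead take the ``decision clause'' --- the disjunction of the duals of the decision literals of $M$ --- and justify entailment semantically by rerunning the argument of Lemma~\ref{lem:des_clasp} together with Lemma~\ref{lem:uset} and Proposition~\ref{prop17}. Both reasons satisfy the definition (the complements of the side literals occur in the prefix, and the entailment $F,\Pi^o\models C\vee\neg a$ holds), and both remain usable in the downstream resolution argument of Lemma~\ref{lem:bj}, since all that is needed there is that the side literals' complements are older than the propagated literal. The trade-off: your construction is shorter and avoids the loop-formula machinery entirely, but it produces a reason tied to the decision literals of the current state rather than to the program structure; the paper's loop-formula reason is what an actual unfounded-set propagator would record as an antecedent clause, which is presumably why the authors prefer it.
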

\begin{proof}
We proceed by induction on the length of a path from 
$\emptyset||\emptyset$ to $M||\Gamma$ in the graph $\smtaspl_{F,\Pi}$.
Since the property trivially holds in the initial state 
$\emptyset||\emptyset$, we only need to prove that every transition rule
of $\smtaspl_{F,\Pi}$ preserves it.

Let us consider an edge $M\dbar\Gamma\lrar M'\dbar\Gamma'$, where $M$ 
is a sequence $l_1~\dots~l_k$
such that every $l_i$, $1\leq i\leq k$, is either a decision literal or 
has a reason to be in $M$ with respect to $[F,\Pi]$. It is evident that
transition rules {\rbj}, {\rd}, {\rl}, and {\rf} preserve the property
(the last one trivially, as {\fail} contains no literals).

%Consider an edge $M||\Pi^=\lrar M'||\Pi^=$ where 
%$M$ is a sequence $l_1~\dots~l_k$ such that $l_i$ is either a decision
%literal or has a reason to be in $M$ w.r.t. $\Pi$. 
\smallskip
\noindent
{\rupl}: The edge $M\dbar\Gamma\lrar M'\dbar\Gamma'$ is justified by the
rule {\rupl}. That is, there is a clause $C\vee l\in F \cup \Pi^{cl} \cup
\Gamma$ such that $\ol C \subseteq M$ and $M'=Ml$. By the inductive 
hypothesis, the property holds for every literal in~$M$. We now show that 
a clause $C\vee l$ is a reason for $l$ to be in $M~l$. By the 
applicability conditions of {\rupl}, $\ol C \subseteq M$. Consequently, 
$M\models \ol C$. It remains to show that $F,\Pi^o\models C\vee l$.  

\smallskip
\noindent
Case 1. $C\vee l\in F$. Then, clearly, $F\models C\vee l$ and, consequently,
$F,\Pi^o\models C\vee l$.

\smallskip
\noindent
Case 2. $C\vee l\in \Pi^{cl}$. Since $\Pi^{cl}\subseteq (\Pi^o)^cl$,
 $C\vee l\in (\Pi^o)^{cl}$. Let $M$ be a model of $[F,\Pi^o]$. It follows
that $M^{+}$ is an answer set of $\Pi^o$. Thus, $M\models (\Pi^o)^{cl}$
and so, $M\models C\vee l$. Thus, $F,\Pi^o\models C\vee l$.

\smallskip
\noindent
Case 3. $C \vee l\in \Gamma$. We recall that $F,\Pi^o\models \Gamma$ by the
definition of an augmented state. Consequently, $F, \Pi^o\models C\vee l$.

\smallskip
\noindent
{\runf}: We have that $M$ is consistent, and that there is an unfounded 
set $U$ on $M$ with respect to $\Pi^o$ and $a\in U$ such that $M'=M\neg a$.
By the inductive hypothesis, the property holds for every literal in~$M$. 
We need to show that $\neg a$ has a reason to be in $M\neg a$ with respect
to $[F,\Pi]$. 

Let $B\in Bodies(\Pi^o,U)$ be such that $U\cap B^{pos}=\emptyset$. By 
the definition of an unfounded set, it follows that $\ol{s(B)}\cap M
\neq \emptyset$. Consequently, $s(B)$ contains a literal from $\ol{M}$.
We pick an arbitrary one and call it $f(B)$. The clause
\beq
C = \neg a\vee\bigvee\{f(B)\stt B\in Bodies(\Pi^o,U) \mbox{ and } 
B^{pos}\cap U=\emptyset\},
\eeq{e:runf}
is a reason for $\neg a$ to be in $M\neg a$ with respect to $[F,\Pi]$.

First, by the choice of $f(B)$, for every $B\in Bodies(\Pi^o,U) \mbox{ and }
B^{pos}\cap U=\emptyset$, $\ol{f(B)}\in M$. Consequently,
\beq
M\models \neg \bigvee\{f(B)\stt B\in Bodies(\Pi^o,U) \mbox{ and }
B^{pos}\cap U=\emptyset\}.
\eeq{dummy}
Second, since $f(B)\in B$, the loop formula 
\beq
(\bigwedge_{u\in U}{\neg u}) \vee \bigvee\{B \stt B\in Bodies(\Pi,U) \mbox{ and } B^{pos}\cap U=\emptyset\}
\eeq{dummy2}
entails $C$. By Lemma \emph{on Loop Formulas}, it follows 
that $\Pi^o$ entails $C$. Consequently, $F,\Pi^o\models C$.
\end{proof}

For a list $M$ of literals, by $consistent(M)$ we denote the 
longest consistent
prefix of $M$. For example, $consistent(a~b~c~\neg b~d)=a~b~c$.   
A clause~$C$ is \emph{conflicting} on a list $M$ of literals with respect to
an SM(ASP) theory $[F,\Pi]$ if $consistent(M)\models \neg C$ and 
$F,\Pi^o\models C$. 

For a state $M||\Gamma$ reachable from $\emptyset||\emptyset$ in 
$\smtaspl_{F,\Pi}$, by $r_M$ we denote a function that maps every 
non-decision literal in $M$ to its reason to be in $M$ (with respect 
to $[F,\Pi]$). By $\boldr_M$ we denote the set consisting of the 
clauses $r_M(l)$, for each non-decision literal $l\in consistent(M)$.

A \emph{resolution derivation of a clause $C$ from a sequence
of clauses $C_1,\dots,C_m$}
%CNF formula $F$} 
is a sequence
$C_1,\dots,C_m,\dots,C_n$, where $C\equiv C_l$ for some $l\leq n$, 
and each clause $C_i$ in the sequence is either a clause from $C_1,
\dots,C_m$ or is derived by applying the resolution rule to clauses
$C_j$ and $C_k$, where $j,k<i$ (we call such clauses \emph{derived}).
We say that a clause~$C$ is \emph{derived by a resolution derivation}
from a sequence of clauses $C_1,\dots,C_m$ if there is a resolution 
derivation of a clause $C$ from $C_1,\dots,C_m$.

\begin{lemma}\label{lem:conflict}
Let $[F,\Pi]$ be an SM(ASP) theory, $M||\Gamma$ a state in the graph 
${\smtasp}_{F,\Pi}$ such that $M$ is inconsistent, and $C_1$ a clause
in $\boldr_M$. If clause $C_2$ is conflicting on $M$ with respect
to $[F,\Pi]$, then every clause $C$ derived from $C_1$ and $C_2$
is also a conflicting clause on $M$ with respect to $[F,\Pi]$.
\end{lemma}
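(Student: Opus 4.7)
The plan is to verify both parts of the definition of a conflicting clause for every derived $C$: the entailment $F,\Pi^o\models C$ and the semantic property $consistent(M)\models \neg C$. The entailment is immediate by soundness of resolution, since $C_1\in \boldr_M$ gives $F,\Pi^o\models C_1$ (as $C_1$ is a reason for some literal in $M$, in particular $F,\Pi^o\models C_1$ by the definition of a reason), while $C_2$ being conflicting gives $F,\Pi^o\models C_2$. I then proceed by induction on the length of the resolution derivation, showing that every clause produced by a resolution step is conflicting on $M$ with respect to $[F,\Pi]$.

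For the base case, which carries the essential content, write $C_1=D\vee l$, where $l$ is the non-decision literal in $consistent(M)$ with $C_1=r_M(l)$ and the prefix $P$ of $M$ preceding $l$ satisfies $P\models \neg D$. Since $l\in consistent(M)$, the prefix $P$ is also a prefix of $consistent(M)$, so $consistent(M)\models \neg D$. Now let the resolvent be formed on a literal $k$, taken without loss of generality to lie in $C_1$ with $\ol{k}\in C_2$. If $k\neq l$, then $k\in D$, so $\ol{k}\in consistent(M)$; but $\ol{k}\in C_2$ together with $consistent(M)\models \neg C_2$ forces $k\in consistent(M)$, contradicting its consistency. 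Hence $k=l$, and the resolvent equals $D\vee (C_2\setminus\{\ol{l}\})$. Each literal of $D$ has its dual in $consistent(M)$, and each literal of $C_2\setminus\{\ol{l}\}\subseteq C_2$ likewise has its dual in $consistent(M)$; therefore $consistent(M)\models \neg C$.

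For the inductive step, the crucial observation is that two conflicting clauses can never be resolved: if $C'$ and $C''$ are both conflicting on $M$ and $k\in C'$, $\ol{k}\in C''$, then both $k$ and $\ol{k}$ would have to lie in $consistent(M)$. Consequently, every resolution step in any derivation from $C_1,C_2$ must combine $C_1$ with either $C_2$ or a previously derived clause, which is conflicting by the inductive hypothesis. The computation of the base case then applies verbatim, with the role of $C_2$ taken by this conflicting partner.

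The main obstacle is the forced choice of resolution literal in the base case: once $consistent(M)\models \neg D$ has been obtained from the reason property of $C_1$, the consistency of $consistent(M)$ rules out every candidate resolution literal except $l$ itself. This is precisely what lets the resolvent split cleanly into two pieces, each of whose literals is negated by $consistent(M)$. The remaining tasks---propagating the entailment across the derivation, and extending a single resolution step to an arbitrary derivation length---are routine bookkeeping.
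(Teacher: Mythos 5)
Your proof is correct and follows essentially the same route as the paper's: both arguments use the consistency of $consistent(M)$ to force the resolution literal to be the ``head'' literal $l$ of the reason clause $C_1=r_M(l)$, after which the resolvent splits into two parts each falsified by $consistent(M)$, with the entailment $F,\Pi^o\models C$ following from soundness of resolution. Your additional inductive extension to longer derivations goes beyond what the lemma (a single resolution step) requires, but it is harmless and the core step matches the paper's argument.
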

\begin{proof}
Let us assume that $C$ is derived from $C_1$ and $C_2$ by resolving on 
some literal $l\in C_1$. Then, $C_2$ is of the form $\ol l \vee C_2'$.

From the fact that $C_1\in \boldr_M$, it follows that $F,\Pi^o\models C_1$
and that $C_1$ has the form $c_1\vee C'_1$, where 
$consistent(M)\models \neg C'_1$. 
%and $consistent(M)\models  c_1$.
Since $C_2$ is conflicting,
$consistent(M)\models \neg C_2$ and  $F,\Pi^o\models C_2$.
By the consistency of $consistent(M)$, there is no literal in $C'_1$
such that its complement occurs in $C_2$. Therefore $l=c_1$ and, consequently,
$C=C'_1\vee C'_2$. It follows that $consistent(M) \models \neg C$. Moreover,
since $F,\Pi^o\models C_1$ and $F,\Pi^o\models C_2$ and $C$ results from
$C_1$ and $C_2$ by resolution, $F,\Pi^o\models C$. 
\end{proof}

For an SM(ASP) theory $[F,\Pi]$ and a node $M||\Gamma$ in ${\smtasp}_{F,\Pi}$,
a resolution derivation $C_1,\dots,C_n$ is {\sl trivial} on $M$ with respect 
to $[F,\Pi]$\footnote{This
  definition is related to the definition of a {\sl trivial}
  resolution derivation \cite{bea04}.}
if
\begin{itemize}
\item[(1)] $\{C_1,\dots,C_i\}=\boldr_M$
\item[(2)] $C_{i+1}$ is  a conflicting clause on $M$ with respect to
$[F,\Pi]$
\item[(3)] $C_{j}$, $j>i+1$, is derived from $C_{j-1}$ and a clause 
$C_k$, where $k\leq i$ (that is, $C_k\in \boldr_M$), by resolving on 
some non-decision literal of $consistent(M)$.
\end{itemize}

For a record $M_0~l_1~M_1\dots~l_k~M_k$, where $l_i$ are all the
decision literals of the record, we say that  
the literals of $l_i~M_i$ {\sl belong to a decision level} $i$.
For a state $M~l~M'~l'~M''$, we say that $l$ is \emph{older} than $l'$.
We say that a state is a \emph{backjump} state if it is inconsistent, 
contains a decision literal, and is reachable from $\emptyset||\emptyset$
in $\smtaspl_{F,\Pi}$.

\begin{lemma}\label{lem:bj}
For every SM(ASP) theory $[F,\Pi]$, the transition rule {\rbj} is applicable
in every backjump state in ${\smtasp}_{F,\Pi}$.
\end{lemma}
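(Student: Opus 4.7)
The plan is to construct a clause of the form $l'\vee \ol{P}$ entailed by $F$ and $\Pi^o$, where $P l^\dec Q$ is some decomposition of $M$, and invoke that to apply {\rbj}.

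First, I would produce an initial conflicting clause. Let $\neg a$ (say) be the first literal of $M$ whose addition violates consistency; then $a \in consistent(M)$, and $\neg a$ cannot be a decision literal (since {\rd} applies only to unassigned literals, but $a$ is already in the prefix). By Lemma~\ref{prop:ereason}, $\neg a$ has a reason $C\vee\neg a$ with $F,\Pi^o\models C\vee\neg a$ and $consistent(M)\models \neg C$. Since also $consistent(M)\models a$, the clause $C_0 := C\vee\neg a$ is conflicting on $M$ with respect to $[F,\Pi]$.

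Next, I would perform a First UIP–style trivial resolution driven by the latest decision literal. Let $l_k^\dec$ be the most recent decision literal in $M$ (it exists since $M$ is a backjump state); call its decision level $k$. Starting with $D := C_0$, while $D$ contains more than one literal $m$ with $\ol m$ at decision level $k$ in $consistent(M)$, choose the $m\in D$ whose complement $\ol m$ occurs latest in $M$ at level $k$. Since the unique decision literal at level $k$ is $l_k^\dec$ (the oldest level-$k$ position), such $\ol m$ is a non-decision literal, so $r_M(\ol m)\in \boldr_M$ is defined; resolve $D$ against $r_M(\ol m) = C'\vee \ol m$ on $m$ to obtain the next clause $D'$. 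By Lemma~\ref{lem:conflict}, $D'$ remains conflicting on $M$ with respect to $[F,\Pi]$, so in particular $F,\Pi^o\models D'$.

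For termination, observe that every literal of $C'$ has its complement strictly earlier in $M$ than $\ol m$ (this is the content of the definition of a reason), so the latest position in $M$ of the complement of any level-$k$ literal of the clause strictly decreases at each step. Since $M$ is finite and the position of $l_k^\dec$ bounds this quantity from below, the loop halts at some clause $D^*$ with at most one literal at level $k$, which must then be $\ol{l_k^\dec}$ (if any). Writing $M = P\, l_k^\dec\, Q$ with $P$ the prefix through level $k-1$, every remaining literal of $D^*$ other than $\ol{l_k^\dec}$ has its complement in $P$, so as a set of literals $D^*\subseteq \{\ol{l_k^\dec}\}\cup \ol P$. Taking $l' := \ol{l_k^\dec}$ (or, in the degenerate case that no level-$k$ literal survives, any literal), we get $F,\Pi^o\models l'\vee \ol P$, so the applicability conditions of {\rbj} on $M\dbar\Gamma$ are satisfied.

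The main obstacle is the termination argument, i.e., establishing and maintaining the invariant that each trivial resolution step strictly moves the ``latest level-$k$ complement'' backwards in $M$; this rests on unpacking the definition of a reason (so that $C'$ in $r_M(\ol m) = C'\vee \ol m$ is falsified only by literals strictly older than $\ol m$) together with Lemma~\ref{lem:conflict} to preserve the conflicting property along the derivation.
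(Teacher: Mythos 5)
Your overall strategy is the paper's: extract an initial conflicting clause from the reason of the first literal that breaks consistency (via Lemma~\ref{prop:ereason}), run a trivial resolution derivation against clauses of $\boldr_M$, preserve the conflicting property with Lemma~\ref{lem:conflict}, and terminate because each resolution step replaces the resolved literal by strictly older ones. The termination measure you isolate (the latest position in $M$ among complements of the targeted-level literals of the current clause) is exactly the right one, and your treatment of the degenerate case where no level-$k$ literal occurs is fine.

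There is, however, one concrete flaw at the end. When your loop halts with exactly one literal $m$ of $D^*$ whose complement lies at decision level $k$, that literal need not be $\ol{l^\dec_k}$: the loop stops as soon as a single level-$k$ literal survives, and the survivor can be a non-decision literal whose complement sits in $Q$ (this is precisely the ``first UIP,'' which in general differs from the decision literal). In that case $D^*\not\subseteq\{\ol{l^\dec_k}\}\cup\ol P$, and your choice $l':=\ol{l^\dec_k}$ asserts an entailment $F,\Pi^o\models \ol{l^\dec_k}\vee\ol P$ that does not follow from $F,\Pi^o\models D^*$. The repair is immediate: take $l'$ to be the surviving level-$k$ literal itself; since every other literal of $D^*$ has its complement in $P$, the entailment $F,\Pi^o\models D^*$ does yield $F,\Pi^o\models l'\vee\ol P$, and {\rbj} applies. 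This is how the paper's argument proceeds --- it never identifies the surviving literal with the complement of a decision literal, it simply names it $l'$ and uses it as the backjump literal. (The paper also targets the level $dec$ determined by the initial conflict clause rather than the last decision level $k$, splitting $M$ at the decision literal opening level $dec$; your variant based on the last decision level is equally legitimate once the choice of $l'$ is corrected.)
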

\begin{proof}
Let $M\dbar\Gamma$ be a backjump state in ${\smtasp}_{F,\Pi}$. We will 
show that $M$ has the form $P~l^\dec~Q$ and that there is a literal $l'$
that has a reason to be in $P~l'$ with respect to $[F,\Pi]$. 

Since $M||\Gamma$ is a backjump state, it follows that $M$ has the 
form $consistent(M)~l~~N$. It is clear that $l$ is not a decision 
literal (otherwise $consistent(M)~l$ would be consistent). By Lemma
\ref{prop:ereason}, there is a reason, say $R$ for $l$ to be in $M$.
We denote this reason by $R$. Since $consistent(M)~l$ is inconsistent,
$\ol l \in consistent(M)$. This observation and the definition of a reason
imply that $consistent(M)\models \neg R$. Moreover, since $F,\Pi^o\models
R$ (as $R$ is a reason), $R$ is a conflicting clause. 

Let $dec$  be the largest of the decision levels of the complements
of the literals in $R$ (each of them occurs in $consistent(M)$). Let
$D$ be the set of all non-decision literals in $consistent(M)$. By
$D^{dec}$ we denote a subset of~$D$ that contains all the literals 
that belong to decision level $dec$.
% (if $l\in C_M$
%and $\ol l$ is non-decision literal that belong to decision level $dec$
%then $\ol l\in D$).

It is clear that $C_1,\dots,C_i,C_{i+1}$, where $\{C_1,\dots C_i\}=\boldr_M$ 
and $C_{i+1}=R$, is a trivial resolution derivation with respect to $M$ and
$consistent(M)\models \neg C_{i+1}$. Let us consider a trivial resolution 
derivation with respect to $M$ of the form $C_1,\ldots,C_i,C_{i+1},\ldots, 
C_n$, where $n\geq i+1$ and $consistent(M) \models \neg C_n$. Let us assume
that there is a literal $l\in D$ such that $\ol l$ in $C_n$. It follows that
$C_n=\ol l\vee C_n'$, for some clause $C'_n$. 

Since $l\in D$ (is a non-decision literal in $consistent(M)$), the set $R_M$ 
contains the clause $r_M(l)$, which is a reason for $l$ to be in $M$. The 
clause $r_M(l)$ is of the form $l\vee l_1\vee\ldots \vee l_m$, where literals 
$\ol{l_1},\dots, \ol{l_m}$ are older than $l$ and $consistent(M)\models
\neg(l_1\vee\ldots \vee l_m)$. Resolving $C_n$ and $r_M(l)$ yields the
clause $C_{n+1}=C_n'\vee l_1\vee\ldots \vee l_m$. Clearly, $C_1,\ldots,
C_{n+1}$ is a trivial resolution derivation with respect to $M$ and
$conistent(M)\models \neg C_{n+1}$.
 
If we apply this construction selecting at each step a non-decision
literal $l\in D^{dec}$ such that $\ol l \in R$, then at some point we
obtain a clause $C_n$ that contains exactly one literal whose complement
belongs to decision level $dec$ (the reason is that in each step of the
construction, the literal with respect we perform the resolution is
replaced by older ones).

By Lemma~\ref{lem:conflict}, the clause $C=C_n$ is conflicting
on $M$ with respect to $[F,\Pi]$, that is, $consistent(M)\models \neg C$
and $F,\Pi^o\models C$. By the construction, $C=l'\vee C'$, where~$l'$
is the only literal whose complement belongs to the decision level $dec$
and the complements of all literals in $C'$ belong to lower decision 
levels. 

\smallskip
\noindent
Case 1. $dec=0$. Since for every literal $l\in C'$, the decision level
of $\ol l$ is strictly lower than $dec$, $C'=\bot$. Since $M||\Gamma$ 
is a backjump state, $M$ contains a decision literal. Then $M$ can be 
written as $P~l^\dec~Q$, where $P$ contains no decision literals (in 
other words $P$ consists of all literals in $consistent(M)$ of decision 
level $dec=0$) and $\ol {l'}\in P$. Clearly, $P\models \neg C'$ 
(as $C'=\bot$). Since $F,\Pi^o\models C$($=l'\vee C'$), $C$ is a reason
for $l'$ to be in $P~l'$.

\smallskip
\noindent
Case 2.  $dec\geq 1$. Let $l$ be the decision literal in $M$ that starts
the decision level $dec$. Then, $M$ can be written as $P~l^\dec~Q$.  
By the construction of the clause $C$, the complement of every literal 
in $C'$ belongs to a decision level smaller than $dec$, that is, to $P$.
It follows that $P\models \neg C'$. Thus, as before, we conclude that
$C$ is a reason for $l'$ to be in $P~l'$.
\end{proof}

\prop{prop:clasp}{}{
For any SM(ASP) theory $[F,\Pi]$,
\begin{itemize}
\item [(a)] 
every path in $\smtaspl_{F,\Pi}$ contains only finitely many edges
justified by basic transition rules,
\item [(b)] for any  semi-terminal state $M||\Gamma$  of
  $\smtaspl_{F,\Pi}$ reachable from $\emptyset||\emptyset$, $M$ is a
  model of $[F,\Pi]$,
\item [(c)] {\fail} is reachable from $\emptyset||\emptyset$ in
  $\smtaspl_{F,\Pi}$ if and  only if  $[F,\Pi]$ has no models.
\end{itemize}
}
\begin{proof}
Part~(a) is proved as in the proof of 
\underline{Proposition~13$^\uparrow$~\cite{lierphd}} (we preserve the
notation used in that work). 

\smallskip
\noindent
(b)  Let $M||G$ be a semi-terminal state  reachable from
$\emptyset||\emptyset$ (that is, none of the basic rules are applicable.)
Since {\rd} is not applicable, $M$ assigns all literals.  
Next, $M$ is consistent. Indeed, if $M$ were inconsistent then, since 
{\rf} is not applicable, $M$ would contain a decision literal. Consequently, 
$M||\Gamma$ would be a backjump state. By Lemma~\ref{lem:bj}, the transition 
rule {\rbj} would be applicable in~$M||\Gamma$, contradicting our assumption 
that $M||\Gamma$ is semi-terminal. We now proceed as in the proof of 
Proposition~\ref{prop:cm1}~(b) to show $M$ is a model of $F$ and $M^{+}$ is 
an input answer set of $\Pi$. 

\smallskip
\noindent
(c) If {\fail} is reachable from $\emptyset||\emptyset$ in  $\smtaspl_{F,\Pi}$,
then there is a state $M||\Gamma$ reachable from $\emptyset||\emptyset$ in 
$\smtaspl_{F,\Pi}$ such that there is an edge between $M||\Gamma$ and {\fail}. 
By the definition of  $\smtaspl_{F,\Pi}$, this edge is due to the transition 
rule {\rf}. Thus, $M$ is inconsistent and contains no decision literals. By 
Lemma~\ref{lem:des_clasp}, every model~$N$ of~$[F,\Pi]$ satisfies~$M$.
Since $M$ is inconsistent, $[F,\Pi]$ has no models.

Conversely, if $[F,\Pi]$ has no models, let us consider a maximal path in 
$\smtaspl_{F,\Pi}$ starting in $\emptyset||\emptyset$ and consisting of 
basic transition rules. By (a), it follows that such a path is finite and
ends in a semi-terminal state. By (b), this semi-terminal must be {\fail},
because $[F,\Pi]$ has no models. 
\end{proof}

\subsection{Proofs of Results from Section \ref{sec:6}}

\prop{pr:eqt}{}{
For a total {\pcid} theory~$(F,\Pi)$ and a consistent and complete (over
$At(F\cup \Pi)$) set $M$ of literals, $M$ is a model of $(F,\Pi)$ if and 
only if $M^{+}$ is an answer set of $\pi(F,\Pi)$.
}
\begin{proof}
By Proposition \ref{prop:pcidsmasp}, it is enough to prove that $M$ is 
a model of the SM(ASP) theory $[F,\Pi]$ if and only if $M^{+}$ is an 
answer set of $\pi(F,\Pi)$. By the definition of $\pi(F,\Pi)$, $M^{+}$ 
is an answer set of $\pi(F,\Pi)$ if and only if $M^{+}$ is an answer 
set of $\Pi^o$ and a model of $F$. Since $M^{+}$ is a subset of 
$Head(\Pi^o)$ (since $Head(\Pi^o)=At(F\cup\Pi)$), Proposition 
\ref{prop:input}(a) implies that $M^{+}$ is an answer set of $\Pi^o$ 
if and only if $M^{+}$ is an input answer set of $\Pi^{o}$. It follows 
that $M^{+}$ is an answer set of $\pi(F,\Pi)$ if and only if $M$ is a 
model of the SM(ASP) theory $[F,\Pi^o]$. The assertion follows now from 
Proposition \ref{prop17}. 
\end{proof}

\prop{prop:rel3}{}{
For a {\pcid} theory $(F,\Pi)$, we have
 $${\smtaspl}_{{\is{ED-Comp}(\Pi^o)}\cup F,\Pi^o}=
 {\smtaspl}_{{\is{ED-Comp}(\pi(F,\Pi))},\pi(F,\Pi)}\hbox{.}$$
}
\begin{proof}
We recall that $\pi(F,\Pi)=F^r\cup\Pi^o$. From the construction of 
$ED\mbox{-}Comp$, it is easy to see that $$ED\mbox{-}Comp(\Pi^o)\cup F
= ED\mbox{-}Comp(\pi(F,\Pi)).$$ 
Furthermore, from the definition of an unfounded set it follows that
for any consistent set $M$ of literals and a set $U$ of atoms, 
$U$ is unfounded on $M$ with respect to $\Pi^o$ if and only if  
$U$ is unfounded on $M$ with respect to $\pi(F,\Pi)$. 
\end{proof}

\end{document}